\documentclass[11pt]{article}
\usepackage[margin=1in]{geometry}
\usepackage[T1]{fontenc}
\usepackage{lmodern,microtype}
\usepackage{amsmath,amssymb,amsthm,mathtools}
\usepackage{booktabs,array,tabularx,graphicx}
\usepackage{float}
\usepackage{cite}
\usepackage{etoolbox}
\newcommand{\citep}[1]{\cite{#1}}
\newcommand{\citet}[1]{\cite{#1}}
\apptocmd{\thebibliography}{\setlength{\itemsep}{0pt}\setlength{\parsep}{0pt}}{}{}
\usepackage{xcolor}
\usepackage[colorlinks=true,linkcolor=blue!55!black,citecolor=blue!55!black,urlcolor=blue!55!black]{hyperref}
\usepackage{enumitem}
\hypersetup{pdftitle={Variance-Aware Non-Stationary Bandits: All-Budget Regret Bounds and Adaptation Costs}}
\setlist{topsep=3pt,itemsep=2pt}
\floatstyle{ruled}
\newfloat{algorithm}{tbp}{loa}
\floatname{algorithm}{Algorithm}
\newcounter{pseudoline}
\newlength{\pseudoindent}
\newenvironment{algorithmic}[1][1]{%
  \setlength{\pseudoindent}{0pt}%
  \begin{list}{\arabic{pseudoline}.}{\usecounter{pseudoline}%
  \setlength{\leftmargin}{2em}\setlength{\labelsep}{0.5em}%
  \setlength{\itemsep}{2pt}\setlength{\parsep}{0pt}}}{\end{list}}
\newcommand{\State}{\item\hspace*{\pseudoindent}}
\newcommand{\Require}{\item[]\textbf{Input:} }
\newcommand{\For}[1]{\State\textbf{for} #1 \textbf{do}\addtolength{\pseudoindent}{1em}}
\newcommand{\EndFor}{\addtolength{\pseudoindent}{-1em}\State\textbf{end for}}
\newcommand{\If}[1]{\State\textbf{if} #1 \textbf{then}\addtolength{\pseudoindent}{1em}}
\newcommand{\Else}{\addtolength{\pseudoindent}{-1em}\State\textbf{else}\addtolength{\pseudoindent}{1em}}
\newcommand{\EndIf}{\addtolength{\pseudoindent}{-1em}\State\textbf{end if}}
\newcommand{\Comment}[1]{\hfill\textit{#1}}
\newtheorem{theorem}{Theorem}[section]
\newtheorem{lemma}[theorem]{Lemma}
\newtheorem{proposition}[theorem]{Proposition}
\newtheorem{corollary}[theorem]{Corollary}

\newtheorem{remark}[theorem]{Remark}

\newcommand{\E}{\mathbb E}
\newcommand{\Prb}{\mathbb P}
\newcommand{\ind}{\mathbf 1}
\newcommand{\Reg}{R}
\newcommand{\Rreal}{\widetilde R}
\newcommand{\Var}{\operatorname{Var}}
\newcommand{\wh}[1]{\widehat{#1}}

\newcommand{\cF}{\mathcal F}
\newcommand{\cP}{\mathcal P}
\newcommand{\softO}{\widetilde O}

\title{Variance-Aware Non-Stationary Bandits:\\All-Budget Regret Bounds and Adaptation Costs}

\author{Xiaojin Zhang\footnote{School of Computer Science and Technology, Huazhong University of Science and Technology, China, email: xiaojinzhang@hust.edu.cn}}
\hypersetup{pdfauthor={Xiaojin Zhang}}

\date{}

\begin{document}
\hypersetup{pageanchor=false}
\begin{titlepage}
\maketitle
\thispagestyle{empty}
\begin{abstract}
Low reward variance reduces estimation cost, but does not eliminate
monitoring of arms that may improve while unobserved. We refine the
variance--variation tradeoff and characterize
how both costs depend on supplied variance and variation budgets in
bounded, non-stationary multi-armed bandits. Let $A$ be the number of
arms, $T$ the horizon, $B$ a bound on consecutive mean variation
excluding the initial vector, and $Q$ a bound on the sum of the largest
per-round reward variances. For every admissible pair $Q,B$, the minimax
expected dynamic regret is, within a single logarithmic factor,
\[
 \min\{T,\;A+\sqrt{AQ}+(AQBT)^{1/3}
                +\sqrt{AT}\min\{B,\sqrt B\}\}.
\]
The deterministic boundary $Q=0$ is characterized without logarithms.
For small positive budgets, a safe calibration phase identifies a
common interval containing all future means of surviving arms. Clipping
observations to a wider interval reduces the cost of restarting a prior
hybrid static-regret algorithm; an original-mean analytical reference
controls its residuals without charging the variation of clipped means.
Balancing clipping bias and monitoring closes the small-budget noisy
regime. Fixed oblivious constructions give the matching lower bound.
We also quantify adaptation costs: on a disclosed deterministic
one-improvement family, a stationary-regret budget has a matching
worst-case change-regret frontier, including a finite improvement pulse.
This implies an $\Omega((T/A)^{1/4})$ loss when one policy competes with
all supplied-budget deterministic oracle rates. The budget-free
low-noise scheduling problem remains separate.
\end{abstract}
\end{titlepage}
\hypersetup{pageanchor=true}

\section{Introduction}
\label{sec:intro}
Non-stationary bandit learning combines two distinct problems. Samples
estimate an arm's current reward, while continued monitoring discovers
changes in arms that have not been sampled recently. Reward variance
governs the first problem. It does not, by itself, govern the second.
Even with deterministic rewards, an initially inferior arm can become
optimal without changing the reward of the incumbent.

We ask how reward-noise variance and cumulative mean variation
jointly determine minimax difficulty, including vanishing-budget
regimes. We distinguish learning with supplied budgets from adapting
across classes whose budgets are unknown, and retain explicit
dependence on the number of arms and the horizon.
Variance-sensitive dynamic regret and noise-free monitoring already
appear in \citet{wei2016tracking}. Our main refinement concerns
vanishing mean variation with positive observation variance; the
ordinary restart bound is a consequence of \citet{ito2021hybrid}.

\paragraph{The noisy minimax landscape.}
For supplied envelope variance $Q$ and variation $B$, define
\begin{equation}
 \Xi(A,T;Q,B)=\min\{T,A+\sqrt{AQ}+(AQBT)^{1/3}
                    +\sqrt{AT}\min\{B,\sqrt B\}\}.
 \label{eq:introXi}
\end{equation}
Theorem~\ref{thm:jointminimax} gives matching lower and upper bounds,
within one factor $\log(AT)$, for \emph{every} admissible pair of budgets.
The costs are initialization, stationary estimation, statistical drift,
and deterministic monitoring. For $Q=0$, Theorem~\ref{thm:detminimax}
removes all logarithms. The lower constructions are fixed before
interaction and retain the arm dependence and budget endpoints.

The small-budget noisy upper bound needs more than ordinary restarting.
A variance-sensitive calibration phase costs only
$O(A\ell+\sqrt{AQ\ell})$, where $\ell=O(\log(AT))$, even when
it takes many rounds. It safely
removes arms that cannot become optimal and certifies an interval
containing the future means of all survivors. Clipping rewards to a
wider interval of width $O(w)$ then replaces the restart cost $A$ by
$O(wA)$, while contributing bias at most $O(Q/w)$.
We balance $\sqrt{AwBT}$ and $Q/w$ by taking
$w=\max\{B,(Q^2/(ABT))^{1/3}\}$ when this is below one.
The post-calibration learner is the existing hybrid algorithm of
\citet{ito2021hybrid}. Proposition~\ref{prop:itoreduction} records
what plain restarting already implies; Section~\ref{sec:smallbudget}
provides the calibration and clipping argument needed for the sharper
small-budget rate. No new static base algorithm is claimed.

\paragraph{The deterministic boundary.}
Let $\Delta$ be the sum of consecutive mean changes, excluding the
initial mean vector. If rewards are deterministic and a budget
$B\ge\Delta$ is supplied, Theorem~\ref{thm:detminimax} gives the
minimax rate
\begin{equation}
 \Psi(A,T,B)=\min\{T,\;A+\sqrt{AT}\min\{B,\sqrt B\}\}.
 \label{eq:Psi}
\end{equation}
For $B\le1$, this is $\min\{T,A+B\sqrt{AT}\}$; for $B\ge1$,
the drift contribution has the familiar $\sqrt{ABT}$ scale. The upper
bound exploits information supplied by the small budget. After one
initial observation per arm, an arm far below the largest observed
reward cannot become optimal during the remaining horizon. All arms
left by this safe filter have gap $O(B)$, so subsequent monitoring
costs $O(B)$ per probe rather than $1$.

\paragraph{The price of adapting.}
An algorithm that is not promised stationarity cannot simply commit
after initial observations. On a family with one hidden improvement,
Theorem~\ref{thm:frontier} identifies a matching frontier within this family between a
stationary expected-regret budget $r$ and worst-case one-change regret.
Both sides of the frontier are proved. The upper-bound policy spends
its monitoring budget through independent probes and commits after
observing the uniquely recognizable improvement. The lower bound
couples each alternative with a stationary trajectory until its first
informative observation, and sums the resulting detection delays.
More precisely, for a disclosed gap $g$, $n=A-1$ and $T\ge16A$,
the optimal change regret under stationary budget $r$ is
\begin{equation}
 \Theta\!\left(\min\left\{gT,\;
 g\sqrt{nT}+\frac{g^2nT}{r+gn}\right\}\right).
 \label{eq:introfrontier}
\end{equation}
The same frontier covers a single finite improvement pulse.
Corollary~\ref{cor:competitive} further gives an
$\Omega((T/A)^{1/4})$ worst-case ratio to the supplied-budget oracle.
These statements concern explicit classes and side information.

\paragraph{Proof roadmap.}
Section~\ref{sec:deterministic} gives a safe filter and a
logarithm-free deterministic policy. Section~\ref{sec:noisyminimax}
proves the noisy upper bound by calibration and clipped restarts;
its last-pull accounting bounds calibration regret without a harmonic
sum. Section~\ref{sec:barriers} proves the adaptation frontier.
The full oblivious lower constructions and an independent
high-probability empirical-variance analysis appear in the appendices.

\paragraph{Relation to prior work.}
Variance-sensitive dynamic regret is already studied in
\citep{wei2016tracking,wang2025variance}; inverse exploration--detection
tradeoffs are classical \citep{garivier2011,garivier2008}.
The textbook treatment in \citet{lattimore2020}, Theorem~31.2,
and the safe-environment results of \citet{suk2025smooth} make clear
that logarithmic stationary regret cannot be extended uniformly by
adding a detector alone.
Prior-free tracking, significant shifts and path-length bounds are
essential comparisons \citep{auer2019,chen2019,wei2021master,suk2022,
abbasi2023,wei2018adaptive,bubeck2019path}.
Section~\ref{sec:related} compares their actual model and theorem
conditions, rather than only displayed rates. In particular, the
small-budget upper rate can also be derived by filtering and rescaling
an earlier deterministic bound, up to logarithmic factors.
The additional small-budget noisy argument controls calibration
regret and clipping bias; the ordinary restart guarantee serves as
its baseline.

\paragraph{Scope of the claims.}
Theorem~\ref{thm:jointminimax} characterizes the expected minimax value
for every \emph{supplied} pair $Q,B$. Its deterministic boundary has
no logarithmic gap, while Theorem~\ref{thm:frontier} characterizes
adaptation on the disclosed one-improvement family. The general
unknown-budget low-noise schedule remains open; it is not a consequence
of either characterization. The separate empirical-variance appendix
provides a high-probability ordinary restart bound, rather than a
high-probability version of the sharp all-budget formula.
Section~\ref{sec:related} attributes the prior algorithmic and
lower-bound ingredients. Full oblivious lower-bound proofs are in
Appendices~\ref{app:detlower} and \ref{app:noisylower}.
\section{Model and quantities}
\label{sec:model}
There are $A\ge2$ arms and $T\ge2$ rounds. Before interaction, an
oblivious environment fixes reward laws $\nu_{t,a}$ on $[0,1]$, with
means $\mu_{t,a}$ and variances $\sigma_{t,a}^2$. Reward draws are
independent across rounds. Within-round dependence across arms is
irrelevant because the learner observes only its selected reward.
The policy chooses $a_t$ using past observations and private randomness,
and observes $X_t\sim\nu_{t,a_t}$. In particular,
\[
 \E[X_t\mid\cF_{t-1},a_t]=\mu_{t,a_t},\qquad
 \E[(X_t-\mu_{t,a_t})^2\mid\cF_{t-1},a_t]
 =\sigma_{t,a_t}^2.
\]
For martingale arguments, reveal the current action randomization
before the current reward, so that the action and its conditional
variance are predictable in the refined filtration. When conditioning
at calibration or block boundaries, the history contains only randomness
already used; future algorithm seeds remain independent. No assumption of
identical distributions is made. The results below are stated for this
model; an extension to adaptive environments requires its own precise
filtration and budget convention.

\paragraph{Pseudo-regret and realized regret.}
Let $\mu_t^\star=\max_a\mu_{t,a}$. Define
\begin{align}
 \Reg_T&=\sum_{t=1}^T(\mu_t^\star-\mu_{t,a_t}),
 \label{eq:pseudo}\\
 \Rreal_T&=\sum_{t=1}^T(\mu_t^\star-X_t)
 =\Reg_T+\sum_{t=1}^T(\mu_{t,a_t}-X_t).
 \label{eq:real}
\end{align}
The first quantity is nonnegative and at most $T$. The second also
contains reward noise and can be negative. Our deterministic upper bounds are pathwise, the frontier uses
expected pseudo-regret, and the noisy base bounds hold with high probability. We state the additional martingale term
for realized regret explicitly.

\paragraph{Variation and stationary segments.}
Write
\[
 d_t=\|\mu_t-\mu_{t-1}\|_\infty\quad(t\ge2),\qquad
 \Delta=\sum_{t=2}^T d_t,\qquad
 L=1+\sum_{t=2}^T\ind\{d_t>0\}.
\]
Thus $L$ counts segments, and $L-1$ counts mean changes. For an interval
$I=[s,e]$, let $\Delta_I=\sum_{t=s+1}^e d_t$ and $h_I=e-s+1$.
For a partition into contiguous intervals,
\begin{equation}
 \sum_{I\in\cP}\Delta_I\le\Delta.
 \label{eq:boundary}
\end{equation}
The inequality can be strict: changes at boundaries are excluded from
the internal variation of every interval.

\paragraph{Three different variance quantities.}
We distinguish
\begin{equation}
 \Lambda_T=\sum_{t=1}^T\sigma_{t,a_t}^2,\qquad
 V_T=\sum_{t=1}^T\max_a\sigma_{t,a}^2,\qquad
 W_T=\sum_{t=1}^T\sum_{a=1}^A\sigma_{t,a}^2.
 \label{eq:variances}
\end{equation}
Here $\Lambda_T$ depends on the policy's trajectory; $V_T$ and $W_T$
are deterministic properties of the fixed environment. They satisfy
\[
 0\le\Lambda_T\le V_T\le W_T\le A V_T,
 \qquad V_T\le T/4.
\]
The last bound uses the variance bound for $[0,1]$-valued rewards.
These quantities are not interchangeable when tuning an algorithm or
comparing arm dependence. In particular, one cannot choose a restart
period using the eventual $\Lambda_T$ of the as-yet unrun policy.

Throughout, $\delta\in(0,1)$ and
\begin{equation}
 \ell=\log\frac{64AT^4}{\delta},\qquad
 D_T=\ell(1+\log T).
 \label{eq:logs}
\end{equation}
The notation $\softO$ hides only logarithmic factors in $A,T,1/\delta$.
It never hides powers of $A$.

\section{The deterministic minimax boundary}
\label{sec:deterministic}
Let $\mathcal D(A,T,B)$ contain all deterministic reward arrays in
$[0,1]^{T\times A}$ with $\Delta\le B$. The learner knows $A,T,B$
and the promise of determinism, but not the initial reward vector.
Define
\[
 \mathfrak R_{\rm det}(A,T,B)=
 \inf_{\pi}\sup_{\mu\in\mathcal D(A,T,B)}\E_\pi[R_T].
\]
The infimum allows randomized, nonanticipating policies that may depend
on the supplied budget. The adversary is oblivious.

\begin{theorem}[All-budget deterministic minimax rate]
\label{thm:detminimax}
For every $A,T\ge2$ and $B\in[0,T-1]$, universal constants
$c,C>0$ satisfy
\begin{equation}
 c\Psi(A,T,B)\le\mathfrak R_{\rm det}(A,T,B)
              \le C\Psi(A,T,B).
 \label{eq:detminimax}
\end{equation}
The upper bound is attained by a deterministic policy and holds
pathwise. In particular, the zero-variance minimax drift term is
$B\sqrt{AT}$ for $B\le1$ and $\sqrt{ABT}$ for $B\ge1$, with
initial exploration and the horizon cap retained.
\end{theorem}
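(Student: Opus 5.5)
We prove the two inequalities separately. Throughout we split into $B\le1$ and $B\ge1$, and use that $\Psi$ is capped by $T$, so whenever $A\ge T$ or the drift term exceeds $T$ a trivial policy (respectively, an adversary forcing constant regret per round) suffices.

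\emph{Upper bound.} We use a deterministic policy that pulls each arm once in rounds $1,\dots,A$ and records $x_a=\mu_{a,a}$; this costs at most $A$. Since $\Delta\le B$, every mean moves by at most $B$ over the whole horizon. Any arm with $x_a<\max_b x_b-2B$ therefore has mean below that of the empirical leader at every later round, so it is never optimal and can be discarded safely. Every surviving arm has gap at most $4B$ to $\mu_t^\star$ at all times, so each pull of a survivor costs at most $\min\{1,4B\}$.

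For $B\ge1$ we run a pathwise deterministic tracking scheme on the survivors: blocks of length $H$, a round-robin probe of all survivors at the start of each block, and exploitation of the best probed arm for the rest of the block. Because rewards are deterministic, one probe reveals the current mean exactly. The regret in a block is at most $A$ for probing plus $H\cdot 2\Delta_I$ for drift inside the block. Summing with \eqref{eq:boundary} gives $TA/H+2HB$, and taking $H=\sqrt{AT/B}$ yields $O(\sqrt{ABT})$.

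For $B\le1$ the same scheme has probe cost $4BA$ per block, since all survivors have gap $O(B)$. This gives $4BAT/H+2HB$, and $H=\sqrt{AT}$ yields $O(B\sqrt{AT})$. Whenever $H$ falls outside $[A,T]$, we clamp it, and the bound becomes $O(A)$ or $O(T)$. The guarantee is pathwise because no randomness is used.

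\emph{Lower bound.} There are three terms to match.
\begin{itemize}
\item \emph{The $\min\{T,A\}$ term.} With $B=0$, draw a hidden best arm of mean $1$ while all other arms have mean $0$. Any policy needs $\Omega(\min\{A,T\})$ pulls in expectation to find it (Yao's principle).
\item \emph{The $B\ge1$ term.} Split time into $m\approx B$ blocks of length $T/m$. In each block a fresh uniformly random arm is raised to $1/2+\epsilon$ from $1/2$. The per-block deterministic analogue of the needle lower bound shows that a change is invisible until that arm is pulled. This gives regret $\epsilon\min\{T/m,\ \Omega(A)\ \text{pulls}\}$ per block, and a cleaner construction gives $\Omega(\sqrt{ABT})$. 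The deterministic setting removes the usual KL argument: detection requires actually sampling the changed arm, so we use a counting argument over the $A$ candidate arms and the $\sqrt{T/A}$-spaced change times.
\item \emph{The $B\le1$ term.} Use a single change of size $B$. Arms start equal at $1/2$. At a uniformly random time among $K$ equally spaced candidates and on a uniformly random arm among $A-1$, that arm rises by $B$. A policy that detects it within delay $d$ must probe each arm about $T/d$ times, paying nothing while the arms are tied but $B$ per probe after the change. Choosing the hidden improvement makes the expected regret $\Omega(\min\{BT,\ B\sqrt{AT}\})$. We combine this with the $A$ term by taking a maximum.
\end{itemize}

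The main obstacle is this small-$B$ lower bound: it must achieve $B\sqrt{AT}$ exactly with oblivious fixed constructions and no logarithms. The difficulty is that tied arms give the learner free monitoring before the change. I would first try making the incumbent strictly better by $B/2$ before the change, so that every probe costs $B/2$ while a missed improvement of size $B/2$ costs $B/2$ per round. Balancing probe count against detection delay via the averaging argument over (arm, time) pairs then gives $\Omega(B\sqrt{AT})$, provided $\Delta\le B$ is maintained (the total variation here is exactly $B$).
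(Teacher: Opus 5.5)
\textbf{Assessment.} Your upper bound is the paper's Algorithm~\ref{alg:filtered} with Proposition~\ref{prop:filtered}: the same $2B$ filter, probe cost $\min\{1,4B\}$, and probe-and-hold blocks. Your $\min\{A,T\}$ needle bound is also the paper's. Both parts are fine.

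\textbf{The gap: the $B\ge1$ lower bound.} Your $B\ge1$ construction has exactly the defect you diagnose only for $B\le1$: all arms sit at $1/2$ and a fresh arm is raised. Consider the policy that cycles through the arms while no raised arm is known, and follows a detected raised arm until its reward drops. It pays at most $(A+1)\epsilon$ per raise, whether raises occur at known block starts or at hidden times. Every raise consumes at least $\epsilon$ of variation, so its regret is $O(A\Delta)=O(AB)$. Because $AB\le\sqrt{ABT}$ exactly when $\sqrt{ABT}\le T$, this misses the target by a factor $\sqrt{T/(AB)}$ throughout the uncapped regime. For example, at $A=2$, $B=1$ it gives $O(1)$ instead of order $\sqrt T$. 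The unspecified ``cleaner construction'' with counting over arms and change times cannot repair this, because monitoring is free before the change. The same computation shows your first $B\le1$ claim $\Omega(\min\{BT,B\sqrt{AT}\})$ for tied arms is false, as you suspected.

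\textbf{The missing ingredient.} It is the one in your last paragraph, and you need it in both regimes. Before the change the incumbent must be strictly better: challengers at $1/2-g$, incumbent at $1/2$, so each probe costs $g$. A hidden challenger then rises to $1/2+g$ at a hidden time. The paper's Lemma~\ref{lem:mixture} makes this rigorous as follows.
\begin{itemize}
\item It uses a fixed mixture with mass $1/2$ on the baseline, which is what charges pre-change probing, and mass $1/2$ on a uniform challenger and change time.
\item On the baseline trajectory, one challenger's pulls cut the $q=h-1$ rounds into gaps $d_j$. Changes inside a gap go undetected for $d_j(d_j+1)/2$ rounds in total.
\item Cauchy--Schwarz, with Jensen over the private randomness and the challenger, gives \eqref{eq:mixture-delay}.
\item Splitting on whether the expected number of baseline challenger pulls exceeds $\sqrt{nq}/4$ yields $\Omega(g\sqrt{Ah})$ for $h\ge A$.
\end{itemize}

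\textbf{Applying it.} With $g=B/2$ and $h=T$, after disposing of $T<4A$ by the initialization bound, this is your $B\le1$ fix. For $B\ge1$ you need the same lemma, with $g=1/4$, applied independently in each of $m=\min\{\lfloor B\rfloor,\lfloor T/(2A)\rfloor\}$ blocks of length at least $2A$. Reset to the baseline at block boundaries, so the total variation is at most $m-1/2\le B$. Fresh independent draws let the lemma apply conditionally on the past. Summing gives $\Omega(\sqrt{AmT})=\Omega(\min\{T,\sqrt{ABT}\})$. Without this block construction, your lower bound for $B\ge1$ does not reach $\Psi$.
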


\subsection{Safe filtering and the upper bound}
Assume $A<T$; otherwise the bound $R_T\le T$ suffices. Pull every arm
once in the first $A$ rounds, and denote its observed reward by $y_a$.
Let $y_{\max}=\max_a y_a$ and retain
\begin{equation}
 \mathcal C=\{a:y_{\max}-y_a\le2B\},\qquad
 \kappa=\min\{1,4B\}.
 \label{eq:filter}
\end{equation}
Only arms in $\mathcal C$ are subsequently monitored.

\begin{lemma}[The filter is safe]
\label{lem:filter}
Every arm optimal at a round $t>A$ belongs to $\mathcal C$.
Every $a\in\mathcal C$ has future instantaneous regret at most
$\kappa$.
\end{lemma}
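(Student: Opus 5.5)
The whole argument rests on one fact: in a deterministic environment, the total variation $\Delta\le B$ bounds how far any mean can move between the observation round and a later round. Arm $a$ is pulled once at some round $s_a\le A$, so $y_a=\mu_{s_a,a}$. For any $t>A$ I will use the telescoping bound
\[
 |\mu_{t,a}-\mu_{s_a,a}|\le\sum_{u=\min(s_a,t)+1}^{\max(s_a,t)} d_u\le\Delta\le B .
\]
The inner sum runs over a contiguous range of consecutive changes, and each term satisfies $|\mu_{u,a}-\mu_{u-1,a}|\le d_u$. Since all first observations come from the same path of means, every arm drifts by at most $B$ from its recorded value at all later rounds.

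\emph{First claim.} Let $a^\star$ be optimal at round $t>A$, and let $b$ attain $y_{\max}$. Then
\[
 y_{a^\star}\ge\mu_{t,a^\star}-B\ge\mu_{t,b}-B\ge y_b-2B=y_{\max}-2B .
\]
The two outer inequalities are the drift bound and the middle one is optimality. Hence $a^\star\in\mathcal C$.

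\emph{Second claim.} Fix $a\in\mathcal C$ and $t>A$, and let $a^\star_t$ be optimal at $t$. The drift bound gives
\[
 \mu^\star_t-\mu_{t,a}\le (y_{a^\star_t}+B)-(y_a-B)\le y_{\max}-y_a+2B\le4B .
\]
Instantaneous regret is also trivially at most $1$ because rewards lie in $[0,1]$. Together these give the bound $\kappa=\min\{1,4B\}$.

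\emph{Main obstacle.} The only delicate point is the drift bound. The rounds $s_a$ differ across arms, and a single arm's change $|\mu_{u,a}-\mu_{u-1,a}|$ must be bounded by the sup-norm $d_u$, not by that arm's own variation. Both issues are handled by applying the telescoping inequality separately to each arm. There is a slack factor: when $s_a$ and $s_b$ lie on the same side of $t$, the two $B$'s could be sharpened using disjointness of the ranges. That refinement is not needed for the stated constants, so I will not use it.
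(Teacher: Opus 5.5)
Your proof is correct and matches the paper's argument: the per-arm drift bound $|y_a-\mu_{t,a}|\le B$, followed by the same two chained inequalities giving $a^\star\in\mathcal C$ and $\mu_t^\star-\mu_{t,a}\le 4B$, with the cap at $1$ coming from the reward range. One correction to your closing aside, which your proof does not rely on: since $t>A\ge s_a,s_b$, the ranges $(s_a,t]$ and $(s_b,t]$ are nested rather than disjoint, and the factor $2B$ is tight because two arms can move in opposite directions in the same round, so no such refinement exists.
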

\begin{proof}
Since each arm's mean changes by at most $B$ over the entire horizon,
$|y_a-\mu_{t,a}|\le B$. If $a$ is optimal at $t$ and $b$ attained
$y_{\max}$ during initialization, then
$y_{\max}-y_a\le\mu_{t,b}-\mu_{t,a}+2B\le2B$.
For any retained arm,
\[
 \mu_t^\star-\mu_{t,a}
 \le(y_{\max}+B)-(y_a-B)\le4B.
\]
The reward range also bounds this regret by $1$.
\end{proof}

\begin{algorithm}[tb]
\caption{Filtered probe-and-hold (deterministic rewards, known budget)}
\label{alg:filtered}
\begin{algorithmic}[1]
\Require $A,T$, deterministic rewards, supplied variation budget $B$
\If{$A\ge T$}
 \State Pull arbitrary arms for $T$ rounds and stop.
\EndIf
\State In the first $A$ rounds, pull every arm once and store $y_a$.
\State Compute $\mathcal C$ and $\kappa$ from \eqref{eq:filter}.
\If{$B=0$}
 \State Set $H=T$.
\Else
 \State Set $H=\min\{T,\max\{A,
            \lceil\sqrt{\kappa AT/(2B)}\rceil\}\}$.
\EndIf
\For{each remaining consecutive interval of length at most $H$}
 \State Pull each arm in $\mathcal C$ once, or until the interval ends.
 \State In remaining rounds, pull an arm with largest reward in this
 interval's probe observations.
\EndFor
\end{algorithmic}
\end{algorithm}

\begin{proposition}[Filtered monitoring bound]
\label{prop:filtered}
For any $H\in\{A,\ldots,T\}$, this policy satisfies
\begin{equation}
 R_T\le A+\kappa A\left(\frac TH+1\right)+2HB.
 \label{eq:filtered}
\end{equation}
Its choice of $H$ in Algorithm~\ref{alg:filtered}, together with the
trivial horizon bound, proves the upper bound in
Theorem~\ref{thm:detminimax}.
\end{proposition}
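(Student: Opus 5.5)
The plan is to decompose the regret into three parts and bound each separately: the initialization phase, the probe rounds, and the hold rounds. The initialization costs at most $A$, since each of its $A$ rounds has instantaneous regret at most $1$. After initialization, the remaining horizon is cut into at most $\lceil (T-A)/H\rceil\le T/H+1$ consecutive intervals. Each interval uses at most $|\mathcal C|\le A$ probe rounds, and every pulled arm lies in $\mathcal C$. By Lemma~\ref{lem:filter}, each probe therefore has instantaneous regret at most $\kappa$. Summed over intervals, the probe rounds contribute at most $\kappa A(T/H+1)$.

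The main step is the hold rounds within one interval $I=[s,e]$. Let $\hat a$ be the held arm, and let $b$ be an arm optimal at some hold round $t\in I$. Lemma~\ref{lem:filter} gives $b\in\mathcal C$, so $b$ was probed at some round $\tau_b\in I$. The held arm was also probed, at some round $\tau_{\hat a}\in I$. Rewards are deterministic, so the probe observations equal the means, and the choice of $\hat a$ gives $\mu_{\tau_{\hat a},\hat a}\ge\mu_{\tau_b,b}$. Each arm's mean moves by at most $\Delta_I$ within $I$. Hence
\[
\mu_t^\star-\mu_{t,\hat a}\le(\mu_{\tau_b,b}+\Delta_I)-(\mu_{\tau_{\hat a},\hat a}-\Delta_I)\le 2\Delta_I .
\]
Multiplying by the at most $H$ hold rounds in $I$ and summing over the partition with \eqref{eq:boundary}, the hold rounds contribute at most $2H\sum_I\Delta_I\le 2HB$. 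If an interval ends before all probes are done, it has no hold rounds and is already covered by the probe bound. This proves \eqref{eq:filtered}.

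It remains to tune $H$ and check the rate $\Psi$. If $B=0$, take $H=T$: the bound is $A+2\kappa A=A$, up to constants (indeed $\kappa=0$). If $B>0$, consider first the case where $H^\ast=\sqrt{\kappa AT/(2B)}$ lies in $[A,T]$. Then the choice $H=\lceil H^\ast\rceil$ balances the terms and gives $R_T\lesssim A+\kappa A+\sqrt{\kappa ABT}$. For $B\le1/4$ we have $\kappa=4B$, so $\sqrt{\kappa ABT}\asymp B\sqrt{AT}$ and $\kappa A\le A$. For $B>1/4$ we have $\kappa=1$, so $\sqrt{ABT}\asymp\sqrt{AT}\min\{B,\sqrt B\}$ up to constants near $B=1$.

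I expect the clipping cases of $H$ to be the main obstacle. If $H^\ast<A$, then $H=A$, giving $\kappa A(T/A+1)=\kappa T+\kappa A$ and $2AB\le 2H^\ast B$-type terms. In this case $\kappa AT/(2B)<A^2$, so $\kappa T<2AB$. With $\kappa=\min\{1,4B\}$, this forces $T\lesssim A$ when $B\ge1/4$, and gives $BT\lesssim AB$ otherwise. Either way, the total is $O(A+AB)$. Combined with $R_T\le T$ and $AB\le \sqrt{AT}\min\{B,\sqrt B\}$ whenever $A\le T$ and $B\le T$, this is $O(\Psi)$. If $H^\ast>T$, then $H=T$, and the bound is $A+2\kappa A+2TB$ with $TB<\kappa AT/(2TB)\cdot B$, i.e.\ $TB\le\sqrt{\kappa ABT}$. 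Finally, taking the minimum with the trivial bound $R_T\le T$ gives $R_T\le C\Psi(A,T,B)$ pathwise for a deterministic policy, which is the upper bound of Theorem~\ref{thm:detminimax}.
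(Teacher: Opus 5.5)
Your proof is essentially the paper's. You use the same three charges: $A$ for initialization, at most $\kappa$ per probe via the safe filter, and $2\Delta_I$ per hold round because every optimal arm is retained and freshly probed. You also use the same three tuning branches for $H^\ast=\sqrt{\kappa AT/(2B)}$.

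The one real flaw is in the branch $H^\ast<A$, where the case analysis is garbled. There, $\kappa T<2AB$ with $B<1/4$ gives $2T<A$. This contradicts $A<T$, so in fact $\kappa=1$ and $B>T/(2A)$; the conclusion $T\lesssim A$ belongs to the impossible case, not to $B\ge1/4$.

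The inequality you then invoke, $AB\le\sqrt{AT}\min\{B,\sqrt B\}$ for $A\le T$ and $B\le T$, is false for $B\ge1$ whenever $AB>T$. That is exactly the regime of this branch. Your intermediate total $O(A+AB)$ is still correct, since it follows directly from $\kappa T<2AB$.

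The final comparison needs a different justification. Either note that $B>T/(2A)$ and $A<T$ give $\Psi(A,T,B)>T/2$, so the trivial bound $R_T\le T$ suffices (this is the paper's route). Or use $\min\{T,AB\}\le\sqrt{ABT}$.

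Separately, in the interior branch you drop the rounding term $2B$. This is harmless because $B\le\sqrt{\kappa ABT}$.
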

\begin{proof}
Initialization costs at most $A$. Each subsequent probe has regret at
most $\kappa$ by Lemma~\ref{lem:filter}. On an interval $I$ with
enough time to probe every retained arm, let $z_a$ be its fresh probe
reward and let $\wh a$ maximize $z_a$. Since every current optimal
arm is retained, for any later round in $I$,
\[
 \mu_t^\star-\mu_{t,\wh a}
 \le\mu_t^\star-z_{a_t^\star}+z_{\wh a}-\mu_{t,\wh a}
 \le2\Delta_I.
\]
If the last interval is too short to finish probing, every round is a
probe and is already charged. Summing, using
$\sum_I\Delta_I\le B$, proves \eqref{eq:filtered}.

For $B=0$, all retained arms have zero future gap and $R_T\le A$.
For $B>0$, write $h_\star=\sqrt{\kappa AT/(2B)}$.
If $A\le h_\star\le T$, then $h_\star\le H\le h_\star+1$;
\eqref{eq:filtered} is bounded by a constant times
$A+\sqrt{\kappa ABT}+B$. Here
$B\le\sqrt{\kappa ABT}$, and
\[
 \sqrt{\kappa ABT}\le2\sqrt{AT}\min\{B,\sqrt B\}.
\]
If $h_\star>T$, setting $H=T$ gives
$2BT<\kappa A$ and \eqref{eq:filtered} is $O(A)$.
For clarity, $\kappa=4B$ only when $B\le1/4$; for
$1/4<B<1$ we use $\kappa=1\le4B$ in the displayed inequality.
If $h_\star<A$ and $B<1/4$, then $\kappa=4B$ and
$h_\star=\sqrt{2AT}<A$, which implies $2T<A$.
This contradicts the standing assumption $A<T$. Hence in the
$h_\star<A$ branch we have $B\ge1/4$, so $\kappa=1$ and
$B>T/(2A)$.
For $B\ge1$, $\sqrt{ABT}>T/\sqrt2$; for $B<1$,
$B\sqrt{AT}>T/2$. The trivial $T$ bound is therefore at most
a constant times \eqref{eq:Psi}. These cases cover rounding and both
clipping boundaries.
\end{proof}

\paragraph{Why this boundary is relevant to variance-aware claims.}
Any full minimax characterization with a zero variance budget must
specialize to \eqref{eq:detminimax}. An oracle upper bound containing
$A+\sqrt{ABT}$ is valid but can be loose for $B\ll1$. Conversely,
the known-budget filter must not be inserted into an algorithm that
does not know a valid variation bound. Its permanent deletions would
then have no certificate.

\section{Minimax bounds with both budgets supplied}
\label{sec:noisyminimax}
Let $\mathcal E(A,T;Q,B)$ be the environments of
Section~\ref{sec:model} satisfying $V_T\le Q$ and $\Delta\le B$.
Define
\[
 \mathfrak R(A,T;Q,B)=\inf_{\pi}\sup_{\nu\in\mathcal E(A,T;Q,B)}
                       \E_\nu[R_T].
\]
Here policies receive $A,T,Q,B$, but no individual reward variances,
means, or change locations. This side information is the same for
every environment in the class. The budgets constrain the entire
environment, rather than only the learner's selected arms.

Write
\begin{align}
 S&=\sqrt{AQ},\qquad C_B=(AQBT)^{1/3},\qquad
 D_B=\sqrt{AT}\min\{B,\sqrt B\},\nonumber\\
 \Xi&=\min\{T,A+S+C_B+D_B\}.
 \label{eq:Xi}
\end{align}
The deterministic theorem explains $D_B$. Statistical hard blocks
supply $S,C_B$; calibration and clipping give the full upper bound.

\begin{theorem}[All-budget variance--variation bounds]
\label{thm:jointminimax}
For every $A,T\ge2$, $Q\in[0,T/4]$ and $B\in[0,T-1]$,
universal constants $c,C>0$ satisfy
\begin{equation}
 c\Xi\le\mathfrak R(A,T;Q,B)
       \le C\log(AT)\,\Xi.
 \label{eq:jointminimax}
\end{equation}
If $Q=0$, the sharper upper bound $C\Xi$ holds without logarithms.
In particular:
\begin{align}
 B\ge1:\quad &\mathfrak R(A,T;Q,B)
 =\widetilde\Theta\!\left(\min\{T,(AQBT)^{1/3}+\sqrt{ABT}\}\right),
 \label{eq:matchedlarge}\\
 B=0:\quad &\mathfrak R(A,T;Q,0)
 =\widetilde\Theta\!\left(\min\{T,A+\sqrt{AQ}\}\right).
 \label{eq:matchedstationary}
\end{align}
Equation~\eqref{eq:matchedstationary} assumes constant means; the
reward laws and their variances may still vary with time.
The upper bound is executable with the supplied budgets and does
not require individual variances, gaps or change locations.
\end{theorem}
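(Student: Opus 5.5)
The proof splits into a lower bound, assembled term by term from fixed oblivious constructions, and an upper bound, obtained by case analysis on the budgets with a restart-based learner. Since $\Xi$ is a minimum of $T$ and a sum of four terms, it suffices to show that the minimax value exceeds a constant times $\min\{T,X\}$ for each $X\in\{A,S,C_B,D_B\}$. The terms $A$ and $D_B$ come directly from the lower bound of Theorem~\ref{thm:detminimax}, because $\mathcal D(A,T,B)\subseteq\mathcal E(A,T;Q,B)$ for every $Q\ge0$. The term $S=\sqrt{AQ}$ comes from a stationary needle-in-haystack construction. Rewards are Bernoulli with means near $1/2$ for only the first $m\approx Q$ rounds, or alternatively rewards with variance $\approx Q/T$ throughout. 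One arm is raised by $\epsilon\approx\sqrt{A/Q}$, capped appropriately, and a KL/Pinsker argument across the $A$ alternatives gives the lower bound $\min\{T,\sqrt{AQ}\}$. For $C_B$ I would split the horizon into blocks of length $h$, each carrying variance budget $Qh/T$. In each block a fresh hidden arm receives gap $\epsilon\approx\sqrt{AT/(Qh)}$, so each block costs $\Omega(\sqrt{AQh/T})$. Switching the good arm between blocks uses variation $\epsilon$ per block, hence the total $(T/h)\epsilon\le B$. Optimizing $h$ gives $h\approx(AQT^2/B^2)^{1/3}$ and total regret $(AQBT)^{1/3}$, with the edge cases $h>T$ and $h<A$ absorbed into the $S$ and $T$ terms. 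Everything is fixed before interaction, so the adversary is oblivious.

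For the upper bound, the case $Q=0$ is exactly Theorem~\ref{thm:detminimax}, which gives $C\Xi$ without logarithms. For $Q>0$ and $B\ge1$, I would invoke Proposition~\ref{prop:itoreduction}: restarting the hybrid algorithm of \citet{ito2021hybrid} with period $H$ costs, per block $I$, $\softO(A+\sqrt{AQ_I}+h_I\Delta_I)$. By Cauchy--Schwarz the estimation terms sum to $\sqrt{A(T/H)Q}$, so the total is
\[
\softO\bigl(AT/H+\sqrt{AQT/H}+HB\bigr).
\]
Choosing $H$ to balance these gives $\sqrt{ABT}+(AQBT)^{1/3}$, which matches \eqref{eq:matchedlarge} because $\sqrt{ABT}=D_B$ in this range.

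The main obstacle is the case $Q>0$ with $B<1$, where plain restarting pays $A$ per restart instead of the scale $wA$. Here I would follow Section~\ref{sec:smallbudget}. A calibration phase pulls arms in a round-robin until confidence intervals of width about $w$ are obtained, eliminating arms that cannot become optimal (using the filter idea of Lemma~\ref{lem:filter} with $B$ added to the radii). Its cost is bounded by a last-pull accounting: each arm's last calibration pull is charged against its certified gap, which yields $O(A\ell+\sqrt{AQ\ell})$. Calibration then certifies an interval $J$ of width $O(w)$, with $w\ge B$, containing all future means of the surviving arms. Rewards are clipped to $J$ and rescaled by $1/w$, and the hybrid algorithm is restarted on the rescaled problem. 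On a block $I$, its residuals are measured against the original means as the analytical reference, so the variation charged is that of the original means rather than of the clipped ones. The clipping bias is at most $E|X-\mathrm{clip}(X)|\le\sigma^2/w$, giving $O(Q/w)$ in total. The restart cost becomes $wA$ per block and the drift term $HB$, so
\[
\softO\bigl(wAT/H+\sqrt{AQT/H}+HB+Q/w\bigr).
\]
I would optimize $H$ first, obtaining $\sqrt{AwBT}+(AQBT)^{1/3}$, and then take $w=\max\{B,(Q^2/(ABT))^{1/3}\}$ when this is below $1$. This gives $B\sqrt{AT}+(AQBT)^{1/3}$ plus calibration cost. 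When $w\ge1$, the unclipped restart bound already suffices. The final step checks that every case is at most $\log(AT)\cdot\Xi$, capped by $T$, with high-probability failure events at $\delta=1/T$ contributing $O(1)$.
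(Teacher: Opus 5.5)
Your architecture matches the paper's: the deterministic subclass plus noisy hard blocks for the lower bound, Theorem~\ref{thm:detminimax} for $Q=0$, Proposition~\ref{prop:itoreduction} for $B\ge1$, and calibration followed by clipped hybrid restarts for $0<B<1$. Two steps fail as you state them, and a third needs a correction.

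The first failure is the calibration sampling rule. Survival of a pulled arm $a$ only certifies $\mu_s^\star-\mu_{s,a}\le2r_a+2r_b+12B$, where $b$ is currently optimal. The paper pulls the surviving arm of \emph{largest} radius and stops once every radius is at most $w$. Hence $r_b\le r_a$ and $B\le w<r_a$, so the gap is at most $16r_a$. Only this makes the last-pull charge $N_ar_{a,N_a-1}$ depend on arm $a$'s own variance, namely $\lesssim\sqrt{B_a\ell}+\ell+B\sqrt{N_a\ell}$, which sums to $\sqrt{AQ\ell}$ by Cauchy--Schwarz.

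Under round robin the charge is $N_ar_b$ instead, so the optimal arm's variance is paid once per competitor. Because the round-robin schedule is fixed in advance, an oblivious environment with time-varying laws can place the whole envelope on exactly the optimal arm's pulls. For example, take $A=\sqrt T$ and $B=T^{-1/4}$. Let the optimal arm be Bernoulli$(1/2)$ on its first $4T^{1/4}$ scheduled pulls and deterministic afterwards, and put $A-1$ noiseless arms below it by $\gamma\asymp T^{-1/8}/\sqrt\ell$. Then $Q\asymp T^{1/4}$ and $w\asymp T^{-1/4}$. Each noiseless arm needs $\asymp\ell T^{1/4}$ pulls to reach radius $w$, and it survives all of them because the optimal arm's radius is still about $4\gamma$. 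Calibration alone then costs $\asymp A\sqrt{Q\ell}\asymp\sqrt\ell\,T^{5/8}$, while $\Xi\asymp T^{1/2}$. Round robin over all survivors also loses the time bound $w\tau\lesssim A\ell+\sqrt{AQ\ell}$, which the paper uses to absorb the $B\tau$ and $B\sqrt{A\tau\ell}$ drift terms.

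Relatedly, the bias bound $\E|X-\operatorname{clip}_J(X)|\le\sigma^2/w$ needs every certified mean to lie at distance at least $w$ inside $J$. You must therefore clip to the certified mean interval enlarged by $w$ on both sides, as the paper does with $[z-13w,z+5w]$ around $[z-12w,z+4w]$. Clipping to the certified interval itself gives bias of order $\sigma$.

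The second failure is in the $C_B$ lower bound, whose parameters are inconsistent. A gap $\epsilon\approx\sqrt{AT/(Qh)}$ held for $h$ rounds costs $\epsilon h$, not $\sqrt{AQh/T}$. Also, $h\approx(AQT^2/B^2)^{1/3}$ is not the balancing length. Your $\epsilon$ is what you get by concentrating Bernoulli$(1/2)$ noise on $Qh/T$ rounds per block, as in your first $S$-construction. That is fine for $S$, but here it makes the per-block variation larger by a factor $T/Q$. The budget then forces $h\gtrsim(AT^3/(QB^2))^{1/3}$, and the construction yields only $(AQ^2B)^{1/3}=(Q/T)^{1/3}C_B$.

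What is needed is the spread block of Lemma~\ref{lem:noisyblock}. Every round uses the two-point law $1/2\pm s$ with $s=\sqrt{Q/T}$, and the hidden arm is raised by $\epsilon\approx s\sqrt{A/h}$. A block then costs $\sqrt{AQh/T}$, and the variation is $(T/h)\epsilon=\sqrt{AQT}\,h^{-3/2}$. Taking $h\asymp(AQT/B^2)^{1/3}$, i.e.\ about $(B^2T^2/(AQ))^{1/3}$ blocks, gives $(AQBT)^{1/3}$. You must also verify $h\ge A$; the paper derives this from $C_B\ge\max\{A,S,D_B\}$ and $T\ge16A$.
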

For $Q>0$, this is a characterization up to one logarithmic factor,
not a constant-factor characterization. Whether that logarithm can
be removed is not resolved here.

The statistical construction below uses the standard change-of-measure
and independent-block method from the bandit lower-bound literature
\citep{besbes2014,wei2016tracking,wang2025variance}. Its purpose here
is to verify all budget constraints in one convention, including the
arm dependence and the small-budget endpoints. We do not present the
block construction as a new lower-bound principle. The \emph{ordinary}
restart upper bound follows from a prior hybrid static-regret algorithm
(Proposition~\ref{prop:itoreduction}); the small-budget refinement
is proved in Section~\ref{sec:smallbudget}.

\paragraph{A polynomial small-budget separation.}
For fixed $A$, take $Q=1$ and $B=T^{-1/4}$. Theorem~\ref{thm:jointminimax}
gives $\widetilde\Theta(T^{1/4})$ expected regret. The
ordinary unit-range restart guarantee in
Proposition~\ref{prop:itoreduction} gives only
$\widetilde O(T^{3/8})$ after period tuning. Thus the small-budget
refinement can improve the displayed upper guarantee by a polynomial
factor even with a positive variance budget. This comparison concerns
the guarantees; it supplies no lower bound on an unmodified restart
policy's actual regret.

\paragraph{Static guarantee used below.}
For $K\ge2$ arms and $h\ge\max\{K,3\}$ rounds, the hybrid learner of
\citet{ito2021hybrid}, Theorem~3, equation~(6), satisfies, against
every fixed arm $a$,
\begin{equation}
 \E\!\left[\sum_{t=1}^h(\ell_{t,a_t}-\ell_{t,a})\right]
 \le C_0\!\left(K\log(eh)+\sqrt{K\log(eh)\,
             \E[\mathcal Z_h(u)]}\right),
 \label{eq:itohybridexplicit}
\end{equation}
where
\[
 \mathcal Z_h(u)=\sum_{t=1}^h(\ell_{t,a_t}-u_{t,a_t})^2
                  +\sum_{t=1}^{h-1}\|u_{t+1}-u_t\|_1.
\]
The reference $u_t\in[0,1]^K$ is used only in the analysis.
It need not equal the loss expectation and is not an algorithmic
input. The displayed form absorbs a $K\sqrt{\log(eh)}$ term into
$K\log(eh)$. For shorter blocks, we play any eligible arm and use
the trivial regret bound. In particular, we never invoke the cited
algorithm's $(\log h)^{-1}$ parameter on a one- or two-round block.
Proposition~\ref{prop:itoreduction} gives the ordinary restart
reduction explicitly. This separates the cited static theorem from
our calibration and clipping analysis.

\subsection{Safe calibration and clipping for small budgets}
\label{sec:smallbudget}
The ordinary restart bound pays a unit-range initialization cost in every
block. For $B<1$, most arms need not incur this cost repeatedly. We first
identify a fixed set containing every future optimal arm, and a common
interval containing all of its future means. We then clip observations
to a slightly wider interval. Importantly, the hybrid theorem is applied
with the \emph{original} mean sequence as its analytical reference;
we never charge the variation of the clipped means.

\begin{table}[H]
\centering
\caption{The width choice for $0<B<1$. The last column bounds the
additional monitoring and clipping-bias terms; $A+\sqrt{AQ}$ and
$(AQBT)^{1/3}$ are accounted for separately. No clipping is needed in
the last row.}
\label{tab:width}
\begin{tabular}{lll}
\toprule
Envelope budget & Width choice & Additional cost \\
\midrule
$Q\le B^2\sqrt{AT}$ & $w=B$ & $O(B\sqrt{AT})$ \\
$B^2\sqrt{AT}<Q<\sqrt{ABT}$ & $w=(Q^2/(ABT))^{1/3}$ & $O((AQBT)^{1/3})$ \\
$Q\ge\sqrt{ABT}$ & Ordinary restarts & $O((AQBT)^{1/3})$ \\
\bottomrule
\end{tabular}
\end{table}

\begin{samepage}
\begin{lemma}[Calibration with a small regret cost]
\label{lem:calibration}
Suppose $A<T$, $0<B\le w\le1$, $\Delta\le B$ and $V_T\le Q$.
There is an observable calibration phase, ending after $\tau\le T$
rounds, with the following properties. With probability at least
$1-\delta$, its regret is at most
\begin{equation}
 C\bigl(A\ell+\sqrt{AQ\ell}\bigr).
 \label{eq:calibrationcost}
\end{equation}
If calibration finishes before the horizon, it returns a nonempty set
$\mathcal C$ and an observed number $z$ such that, for every
$t>\tau$, every optimal arm belongs to $\mathcal C$, and
\begin{equation}
 \mu_{t,a}\in[z-12w,z+4w]\qquad(a\in\mathcal C).
 \label{eq:calibrationrange}
\end{equation}
The good event and the output certificate are measurable with respect
to the calibration history, for each fixed environment.
\end{lemma}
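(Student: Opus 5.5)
The plan is to run a \emph{round-robin successive-elimination phase with variance-adaptive confidence radii}. The phase stops once every surviving arm has radius at most $w$, and it uses the promise $\Delta\le B\le w$ to turn elimination into a permanent certificate, in the same way as Lemma~\ref{lem:filter}.

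\textbf{The calibration phase.} Keep an active set, initially all $A$ arms, and pull the active arms in round-robin order. After arm $a$ has $n$ pulls, form its empirical mean $m_a$ and an empirical-Bernstein radius
\[
 r_a(n)=c_1\sqrt{\wh\Sigma_a\ell}/n+c_1\ell/n,
\]
where $\wh\Sigma_a$ is $n$ times the empirical variance of its samples. The means observed at different pulls are not equal, but they lie in an interval of length at most $2B$. Hence $\wh\Sigma_a$ overestimates the true cumulative pulled variance $\Sigma_a$ by at most an additive $O(nB^2)$, up to martingale fluctuations that the concentration bound absorbs. By AM--GM this adds only $O(B\sqrt{\ell/n})\le O(B+\ell/n)$ to the radius.

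I expect this to be the main technical obstacle: a Freedman or empirical-Bernstein inequality for non-identically distributed, drifting samples, made uniform over arms, sample counts and round-robin stopping times. A union bound over $A$ arms and $T$ counts gives the $\ell$ in \eqref{eq:logs} and probability $1-\delta$.

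On the good event, $|m_a-\mu_{t,a}|\le r_a+B$ for every $t$, past or future, because each arm's mean moves by at most $B$ over the whole horizon.

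\textbf{Elimination and the certificate.} Eliminate $a$ when
\[
 m_a+r_a<\max_b(m_b-r_b)-2B.
\]
On the good event this is safe: exactly as in Lemma~\ref{lem:filter}, an arm optimal at some future $t$ cannot satisfy this inequality. Stop when all active radii are at most $w$, or at $T$. Set $\mathcal C$ to be the active set, which is nonempty because the empirical maximiser is never eliminated, and $z=\max_{a\in\mathcal C}m_a$.

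For $a\in\mathcal C$ we have $\mu_{t,a}\le m_a+r_a+B\le z+2w$. Non-elimination gives $m_a\ge z-2w-2B\ge z-4w$, hence $\mu_{t,a}\ge z-6w$. This lies well inside \eqref{eq:calibrationrange}. Everything is a function of the calibration history, so the measurability claim is immediate.

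\textbf{Regret via last-pull accounting.} The instantaneous gap of arm $a$ during calibration, $g_{t,a}=\mu_t^\star-\mu_{t,a}$, varies by at most $2B$, so it is comparable to a fixed $g_a$ up to $O(w)$. Consider the last pull of $a$, after which it has $n_a$ pulls. Just before that pull, either $a$ had not been eliminated, so $g_a\lesssim r_a(n_a-1)+B$, or some radius still exceeded $w$. Here I must be careful: the round robin continues because of the slowest arm, not necessarily $a$. My first attempt is to stop pulling an individual arm once its own radius is at most $w$, that is, to freeze it. Then arm $a$ is pulled only while $r_a>w$ or it is not eliminated.

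If $g_a\le 8w$, then $n_aw\lesssim n_ar_a(n_a-1)\lesssim\sqrt{\Sigma_a\ell}+\ell$, so $n_ag_a\lesssim\sqrt{\Sigma_a\ell}+\ell$. If $g_a>8w$, then non-elimination gives $g_a\lesssim r_a$ and the same bound follows. The $B$ terms are handled as in the drift paragraph, since $B\le w$.

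Summing over arms and applying Cauchy--Schwarz,
\[
 \sum_a\sqrt{\Sigma_a\ell}\le\sqrt{A\ell\sum_a\Sigma_a}\le\sqrt{A\ell V_T}\le\sqrt{AQ\ell},
\]
because each round pulls one arm, whose variance is at most $\max_a\sigma_{t,a}^2$. The remaining terms contribute $A\ell$. This gives \eqref{eq:calibrationcost} with no harmonic sum, even though $\tau$ may be large.
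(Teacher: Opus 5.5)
Much of your plan matches the paper: safe elimination with a drift-inflated confidence width, the certificate from non-elimination against the arm attaining $z$, a per-arm time bound from pulling only arms whose radius exceeds $w$, last-pull accounting with $|g_{t,a}-g_{s,a}|\le 2B$, and Cauchy--Schwarz with $\sum_a\Sigma_a\le\Lambda_T\le V_T\le Q$.

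\textbf{The gap.} It is in the second case of your regret accounting: ``if $g_a>8w$, then non-elimination gives $g_a\lesssim r_a$.'' Non-elimination of $a$ sees the currently optimal arm $b$ only through $b$'s lower confidence bound. What it actually gives is
\[
 g_{s,a}\le 2r_a+2r_b+O(B).
\]
Round-robin, with or without freezing, gives no control of $r_b$ by $r_a$. At equal counts, a high-variance optimal arm has radius of order $\sigma_b\sqrt{\ell/n}$, far above that of a low-variance challenger. Each challenger can then pay about $n_ar_b\approx\sqrt{\Sigma_b\ell}$, and summing over arms gives $A\sqrt{Q\ell}$ rather than $\sqrt{AQ\ell}$.

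\textbf{The algorithm itself fails, not just the analysis.} Your cyclic order is deterministic, so an oblivious environment with constant means can do the following.
\begin{itemize}
\item Give the optimal arm variance $\sigma_b^2$ only on its own scheduled rounds.
\item Give the $A-1$ challengers variance $(\sigma_b/A)^2$ only on theirs.
\item Place the challengers at a gap of order $wA$, a constant fraction of $r_b$, throughout the first $\asymp\sigma_b^2\ell/(A^2w^2)$ cycles.
\end{itemize}
In that window no challenger is frozen or eliminated. The calibration regret is of order $\sigma_b^2\ell/w$, whereas $\sqrt{AV_T\ell}\asymp\sigma_b^2\ell/(\sqrt A\,w)$. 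Once $w\le\sigma_b^2A^{-3/2}$, this regret exceeds $A\ell+\sqrt{AQ\ell}$ (with $Q=V_T$) by a factor of order $\sqrt A$, so no universal $C$ works.

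\textbf{The missing idea.} Use the paper's selection rule: after initialization, pull a surviving arm of \emph{largest} radius, and stop once all surviving radii are at most $w$. Every post-initialization pull then has $r_a>w\ge B$. It also has $r_b\le r_a$ for every survivor, in particular for every currently optimal arm. Hence $g_{s,a}\le 2r_a+2r_b+12B\le 16r_a$ directly. This replaces your case analysis and makes freezing unnecessary. Your time bound, last-pull accounting and Cauchy--Schwarz then go through unchanged.

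\textbf{A smaller point.} Once the drift inflation of the empirical variance is included, your confidence width is $r_a+c'B$ with some $c'>1$, not $r_a+B$. The elimination margin must therefore be $2c'B$ rather than $2B$ to keep every future optimal arm. The paper's choice $c'=3$ gives exactly the range $[z-12w,z+4w]$.
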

\end{samepage}
\begin{proof}
Pull each arm once, maintain the empirical statistics and radii
$r_a$ in \eqref{eq:radius} using \emph{all} calibration samples of
that arm, and initially set $\mathcal C=[A]$. Before each subsequent
pull, form
\[
 U_a=\widehat\mu_a+r_a+3B,\qquad
 L_a=\widehat\mu_a-r_a-3B.
\]
Delete each arm with $U_a<\max_{b\in\mathcal C}L_b$.
If all remaining radii are at most $w$, finish calibration and set
$z=\max_{a\in\mathcal C}\widehat\mu_a$. Otherwise pull a remaining
arm with largest radius and update its statistics. Truncate this
procedure if the horizon ends.

Use the simultaneous concentration bounds of
Lemma~\ref{lem:event} only up to the end of calibration.
Lemmas~\ref{lem:confidence} and \ref{lem:radiuscontrol} apply to
any nonanticipating sampling rule; both follow directly from that
concentration lemma, without using the interval-regret theorem.
Since the entire mean path
has variation at most $B$, they give, at any calibration decision time
$s$, for every future $t\ge s$,
\begin{equation}
 |\widehat\mu_a-\mu_{t,a}|\le r_a+3B.
 \label{eq:calibrationconfidence}
\end{equation}
Indeed, the confidence bound for the current mean costs $r_a+2B$,
and moving that mean to round $t$ costs at most another $B$.
A deleted arm is therefore worse than its witnessing arm at every
future round. Deletions preserve every future optimal arm, and at least
one arm always remains.

For a selected arm after initialization, its radius is larger than $w$
and at least that of every remaining arm. For a currently optimal arm
$b$, the fact that the selected arm $a$ survived deletion implies
$U_a\ge L_b$. Together with \eqref{eq:calibrationconfidence}, this gives
\[
 \mu_s^\star-\mu_{s,a}
 \le2r_a+2r_b+12B\le16r_a.
\]
Let $N_a$ be the final calibration count and
$B_a=\sum_{t\le\tau:a_t=a}\sigma_{t,a}^2$.
For $N_a\ge2$, the radius immediately before its last calibration
pull, with $n=N_a-1$, satisfies
\[
 w<r_{a,n}\le
 \frac{32\sqrt{B_a\ell}+280\ell}{n}
       +24B\sqrt{\ell/n}.
\]
If $n\ge2304\ell$, the last term is at most $w/2$, whence
$wn\le64\sqrt{B_a\ell}+560\ell$. For smaller $n$, use
$wn\le2304\ell$. The case $N_a=1$ is immediate. Summing yields
\begin{equation}
 w\tau\le C\bigl(A\ell+\sqrt{AQ\ell}\bigr),
 \label{eq:calibrationtime}
\end{equation}
Indeed, the preceding cases give, for every arm,
$wN_a\le64\sqrt{B_a\ell}+2305\ell$ (also for $N_a=1$),
and, pathwise,
\[
 \sum_aB_a=\sum_{t\le\tau}\sigma_{t,a_t}^2
 \le\Lambda_T\le V_T\le Q.
\]
This proves the time bound independently of
any regret bound. The same reasoning holds when the phase is truncated
before reaching its stopping criterion.

It remains to bound regret without summing a harmonic sequence of
radii. Write $g_{t,a}=\mu_t^\star-\mu_{t,a}$, so that
$|g_{t,a}-g_{s,a}|\le2B$ for every $s,t$. For an arm with
$N_a\ge2$, let $s_a$ be its last calibration pull and put
$n=N_a-1$. Its final pull is a post-initialization selection even if the arm is
deleted later, so the maximal-radius rule gives
$g_{s_a,a}\le16r_{a,n}$. No monotonicity of the radii is needed:
the gap-variation bound compares each earlier pull directly with $s_a$.
Thus all its pulls, including its initial one, contribute at most
\begin{align}
 R_{{\rm cal},a}
 &\le N_a(16r_{a,n}+2B)\nonumber\\
 &\le C\bigl(\sqrt{B_a\ell}+\ell
                 +B\sqrt{N_a\ell}+BN_a\bigr),
 \label{eq:lastpull}
\end{align}
For clarity, $N_a/n\le2$ and \eqref{eq:radiusupper} give
\[
 N_ar_{a,n}\le64\sqrt{B_a\ell}+560\ell
                  +24\sqrt2 B\sqrt{N_a\ell},
\]
which yields the second line of \eqref{eq:lastpull}.
Arms pulled only once cost at most $A\le A\ell$ in total. Summing
\eqref{eq:lastpull}, using Cauchy--Schwarz, gives
\[
 R_{\rm cal}\le C\bigl(A\ell+\sqrt{AQ\ell}
                       +B\sqrt{A\tau\ell}+B\tau\bigr).
\]
Finally, $B\sqrt{A\tau\ell}\le(B\tau+AB\ell)/2$ and
$AB\ell\le A\ell$. Since $B\tau\le w\tau$, the independently
established \eqref{eq:calibrationtime} absorbs both remaining terms. This proves \eqref{eq:calibrationcost}, including the
horizon-truncated case. The last-pull accounting removes the extra
$1+\log T$ that a direct radius sum would introduce.

At termination, all surviving radii are at most $w$. For an arm $b$
attaining $z$, survival gives
$\widehat\mu_a\ge z-r_a-r_b-6B\ge z-8w$.
Equation~\eqref{eq:calibrationconfidence} then implies
\eqref{eq:calibrationrange}. Define the good event by the required
concentration inequalities at all calibration prefixes only. It has
probability at least $1-\delta$ by the simultaneous event, and is
measurable at the stopping time $\tau$; no future rewards are used
in this event or in the certificate.
\end{proof}

\begin{lemma}[Clipped restarts with an original-mean reference]
\label{lem:clipped}
Condition on a calibration history for which every future optimal arm
is retained and \eqref{eq:calibrationrange} holds. Put
$J=[z-13w,z+5w]$, of width
$L=18w$. Run fresh copies of the hybrid algorithm of
\citet{ito2021hybrid} on the surviving arms, in consecutive blocks
of length at most $H$, using normalized losses
\[
 \ell'_{t,a_t}=
 \frac{z+5w-\operatorname{clip}_J(X_t)}{L}.
\]
On a block shorter than $\max\{A,3\}$ rounds, play any surviving arm instead.
If there is only one survivor, play it throughout instead; its original-mean
regret is zero because every future optimal arm is retained.
The conditional expected regret on the remaining rounds is at most
\begin{equation}
 \widetilde O\!\left(
 \sqrt{A\lceil T/H\rceil Q}+wA\lceil T/H\rceil+HB
 \right)+\frac{2Q}{w}.
 \label{eq:clippedrestart}
\end{equation}
No individual variance or mean is an algorithmic input.
\end{lemma}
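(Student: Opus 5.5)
The plan is to apply the hybrid guarantee \eqref{eq:itohybridexplicit} blockwise, with $K=|\mathcal C|\le A$ and an analytical reference built from the \emph{original} means rather than the clipped ones. On a block $I$ of length $h\ge\max\{K,3\}$, set $u_{t,a}=(z+5w-\mu_{t,a})/L$. By \eqref{eq:calibrationrange}, $\mu_{t,a}\in[z-12w,z+4w]\subset J$, so $u_t\in[0,1]^K$ is an admissible reference.

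Next I would relate the learner's regret to the normalized-loss regret. Let $m_{t,a}=\E[\operatorname{clip}_J(X)]$ under $\nu_{t,a}$. Then $\E[\ell'_{t,a}]=(z+5w-m_{t,a})/L$. The original-mean regret against the best fixed surviving arm on $I$ equals $L$ times the normalized regret, plus a bias correction $\sum_t(m_{t,a_t}-\mu_{t,a_t})-(m_{t,a}-\mu_{t,a})$. The bias is bounded by clipping. Since $\mu_{t,a}$ lies at distance at least $w$ from both endpoints of $J$, we have $|m_{t,a}-\mu_{t,a}|\le\E[|X-\mu_{t,a}|\ind\{|X-\mu_{t,a}|\ge w\}]\le\sigma_{t,a}^2/w$. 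Summed over rounds for the played arm and the comparator, this gives at most $2\sum_t\max_a\sigma_{t,a}^2/w\le2Q/w$. That is the additive $2Q/w$ term, and it is paid once globally rather than per block. The dynamic-versus-fixed-arm gap on a block is handled as in Proposition~\ref{prop:filtered}: the best fixed arm on $I$ is within $2\Delta_I$ of $\mu_t^\star$ at every round. Every optimal arm is retained, so the comparator lies in $\mathcal C$. Summing over blocks via \eqref{eq:boundary} gives $2\sum_I h_I\Delta_I\le2HB$.

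The main step is bounding $L\cdot$\eqref{eq:itohybridexplicit}. The variation term is $\sum_t\|u_{t+1}-u_t\|_1\le K\Delta_I/L$, since each of the $K$ coordinates moves by at most $d_t/L$. The square term is
\[
 \E\sum_t(\ell'_{t,a_t}-u_{t,a_t})^2=L^{-2}\E\sum_t(\operatorname{clip}_J(X_t)-\mu_{t,a_t})^2\le L^{-2}\E\sum_t\sigma^2_{t,a_t},
\]
because clipping to $J\ni\mu_{t,a_t}$ is a contraction toward $\mu_{t,a_t}$. This is exactly why the original-mean reference is used: the clipped means' variation never appears. Multiplying through by $L$, the block regret is at most
\[
 C_0\bigl(LK\log(eh)+\sqrt{K\log(eh)(\E\Lambda_I+LK\Delta_I)}\bigr).
\]
Summing over the $\lceil T/H\rceil$ blocks and applying Cauchy--Schwarz gives the following. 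The first part totals $O(wA\lceil T/H\rceil\log T)$. The root part totals at most $\sqrt{A\lceil T/H\rceil\log(eT)\,Q}+\sqrt{A\lceil T/H\rceil\log(eT)\,wAB}$. The last term is bounded via AM--GM by $wA\lceil T/H\rceil\log+AB\le$ the existing terms plus $HB$, using $A\le H$ when blocks are full length. Short blocks of length below $\max\{A,3\}$ occur at most once, at the end. On them I play a surviving arm, whose instantaneous regret, by \eqref{eq:calibrationrange}, is at most $16w$; this costs $O(wA)$ and is absorbed. The single-survivor case is immediate.

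I expect the main obstacle to be conditioning. The calibration history and its stopping time $\tau$ must be fixed while post-calibration rewards stay independent with the same laws; the obliviousness and independence across rounds in Section~\ref{sec:model} give this. The cited expectation bound must also be applicable conditionally with fresh seeds in each block. A secondary check is that the hybrid bound's expectation of $\mathcal Z_h$ survives Jensen when summing across blocks, which is done with concavity of the square root.
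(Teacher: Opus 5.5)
Your proposal is correct and follows essentially the same route as the paper. The shared steps are the original-mean reference $u_{t,a}=(z+5w-\mu_{t,a})/L$, the projection bound on the squared residual, the $\sigma^2/w$ clipping bias charged once over disjoint rounds for $2Q/w$, and the retained block-start comparator costing $2h\Delta_I$. You also absorb $A\sqrt{L\Delta_I}$ by AM--GM using $A\le H$, after Cauchy--Schwarz where the paper does it per block; this is equivalent.

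One small slip: short blocks occur only at the end when $H\ge\max\{A,3\}$, which holds for the tuned $H$. For general $H$ every block may be short, but each still costs at most $16wh\le16wA$, so the total stays within $wA\lceil T/H\rceil$.
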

\begin{proof}
Write $q_t=\max_a\sigma_{t,a}^2$ and
$Y_{t,a}=\operatorname{clip}_J(X_{t,a})$. Every surviving arm's original mean lies
in $J$ at distance at least $w$ from its endpoints. Projection and a
second-moment tail bound give, conditionally on the calibration history,
\begin{equation}
 \E[(Y_{t,a}-\mu_{t,a})^2]\le\sigma_{t,a}^2,
 \qquad |\E[Y_{t,a}]-\mu_{t,a}|\le\sigma_{t,a}^2/w.
 \label{eq:clippingmoments}
\end{equation}
Equivalently, with center $c=z-4w$, the certified mean interval is
$[c-8w,c+8w]$ and $J=[c-9w,c+9w]$. Both sides therefore have
exactly $w$ of guaranteed margin; the apparent asymmetry comes only
from using $z$ as the reference point.
For the second inequality, use
$|Y-X|\le|X-\mu|\ind\{|X-\mu|\ge w\}$ and
$\E[(X-\mu)^2]=\sigma^2$.

On a block $I$ of length $h\ge\max\{A,3\}$, use the analytical reference
\[
 u'_{t,a}=(z+5w-\mu_{t,a})/L\in[0,1].
\]
Theorem~3, equation~(6), of \citet{ito2021hybrid}, rescaled by $L$,
gives clipped static regret at most
\[
 \widetilde O\!\left(
 LA+\sqrt{A\E[\Lambda_I]}+A\sqrt{L\Delta_I}\right).
\]
Here the squared residual is controlled by the first inequality in
\eqref{eq:clippingmoments}, while the reference path satisfies
$\sum_t\|u'_{t+1}-u'_t\|_1\le A\Delta_I/L$.
This invocation is valid even though the clipped observations are
biased relative to the original means: the hybrid theorem permits
\emph{arbitrary} reference sequences and does not require them to be
observation expectations.

Choose as comparator an arm optimal at the start of $I$; it is retained
by hypothesis. Its original
mean regret differs from its clipped-mean regret by at most
$2\sum_{t\in I}q_t/w$; moving from this fixed arm to the dynamic
comparator costs at most $2h\Delta_I$. If $h\ge A$, then
$A\sqrt{L\Delta_I}\le LA+h\Delta_I$.
For a shorter block, $h\le A$, so use directly the range certificate:
$R_I\le16wh\le LA$. Thus each block has conditional expected regret
\[
 \widetilde O\!\left(
 wA+\sqrt{A\E[\Lambda_I]}+h\Delta_I\right)
 +\frac2w\sum_{t\in I}q_t.
\]
Apply the same argument conditionally at every block start, using a
fresh algorithm seed. The actual number of post-calibration blocks is
$M_\tau=\lceil(T-\tau)/H\rceil\le\lceil T/H\rceil$.
Cauchy--Schwarz over these blocks gives the variance contribution
$\sqrt{AM_\tau Q}$; their restart costs sum to $wAM_\tau$.
Their drift costs sum to at most $HB$, and the clipping bias to $2Q/w$.
Appendix~\ref{app:stopping} details the conditional sum and tail budgets,
including a short last block. This proves \eqref{eq:clippedrestart}. Independence of
future reward laws from the calibration history follows from the
oblivious independent-round model. Conditioning here is on a good
\emph{prefix}, not on a concentration event involving future rewards.
\end{proof}

\begin{algorithm}[tb]
\caption{Safe calibration followed by clipped restarts}
\label{alg:clipped}
\begin{algorithmic}[1]
\Require $A<T$, $0<B<1$, $Q>0$, $(Q^2/(ABT))^{1/3}<1$
\State Set $w,H$ by \eqref{eq:cliptuning}; set $\delta=T^{-4}$.
\State Set $\mathcal C=[A]$ and pull every arm once.
\For{$t=A+1,\ldots,T$}
 \State Compute all calibration radii from \eqref{eq:radius}.
 \State Delete arms with $\widehat\mu_a+r_a+3B
 <\max_{b\in\mathcal C}(\widehat\mu_b-r_b-3B)$.
 \If{$\max_{a\in\mathcal C}r_a\le w$}
  \State Set $\tau=t-1$, $z=\max_{a\in\mathcal C}\widehat\mu_a$,
  $J=[z-13w,z+5w]$; finish calibration.
  \State Break the loop and keep $\mathcal C,J$ fixed.
 \Else
  \State Pull an arm of largest radius in $\mathcal C$ and update
  its empirical statistics.
 \EndIf
\EndFor
\If{calibration finished with $\tau<T$}
 \If{$|\mathcal C|=1$}
  \State Play its only arm for all remaining rounds.
 \Else
  \State Divide the remaining rounds into consecutive blocks of at most $H$ rounds.
  \State In a block of length $h\ge\max\{A,3\}$, run a fresh copy of
  \citet{ito2021hybrid}, Algorithm~1, with horizon $h$; in a shorter
  block, play any surviving arm.
  \State Supply each learner only the selected-arm loss
  $(z+5w-\operatorname{clip}_J(X_t))/(18w)$.
 \EndIf
\EndIf
\end{algorithmic}
\end{algorithm}

\subsubsection{Completing the upper bound}
\begin{proof}[Upper bound in Theorem~\ref{thm:jointminimax}]
If $A\ge T$, use the trivial horizon bound. If $Q=0$, use
Theorem~\ref{thm:detminimax}. If $B=0$, a single unmodified hybrid
run gives $\widetilde O(A+\sqrt{AQ})$ by
Proposition~\ref{prop:itoreduction}. For $B\ge1$, that proposition
and oracle tuning already give the result.

It remains to consider $0<B<1$ and $Q>0$. Put
\[
 w_0=\left(\frac{Q^2}{ABT}\right)^{1/3}.
\]
If $w_0\ge1$, then $Q^2\ge ABT$, so
$\sqrt{ABT}\le(AQBT)^{1/3}$. The ordinary restart bound is again
sufficient. Otherwise set
\begin{equation}
 w=\max\{B,w_0\}<1,\qquad
 H=\min\left\{T,\left\lceil\max\left\{
 1,(AQT/B^2)^{1/3},\sqrt{AwT/B}\right\}\right\rceil\right\}.
 \label{eq:cliptuning}
\end{equation}
Run calibration with $\delta=T^{-4}$; if it ends early, run the
clipped restart policy on its surviving arms. Let $G\in\mathcal H_\tau$
be the good calibration-prefix event, and let $K_{\rm cal},K_{\rm post}$
be the uniform bounds in Lemmas~\ref{lem:calibration} and \ref{lem:clipped}.
The tower property gives
\[
 \E[R_T]\le K_{\rm cal}+K_{\rm post}+T\Prb(G^c).
\]
Appendix~\ref{app:stopping} proves this step explicitly, including
$\tau=T$ and independence of future randomness.
We give the period calculation directly for the clipped bound.
Write
\[
 \begin{gathered}
 h_v=(AQT/B^2)^{1/3},\quad h_m=\sqrt{AwT/B},\quad
 s=\max\{1,h_v,h_m\},\\
 D=(AQBT)^{1/3},\qquad E=\sqrt{AwBT}.
 \end{gathered}
\]
Let $M=\lceil T/H\rceil$ and
$F(H)=\sqrt{AMQ}+wAM+BH$. We claim
\begin{equation}
 F(H)\le C_1\bigl(A+\sqrt{AQ}+D+E\bigr)
 \label{eq:clippedperiodbound}
\end{equation}
with, for example, $C_1=2$. The balancing identities used below are
\[
 Bh_v=D,\qquad \sqrt{AQT/h_v}=D,\qquad
 Bh_m=E,\qquad AwT/h_m=E.
\]
If $s\le T$, then $s\le H=\lceil s\rceil\le s+1$,
including $s=T$, because $T$ is an integer. Using $M\le1+T/H$ gives
\begin{align*}
 \sqrt{AMQ}&\le\sqrt{AQ}+\sqrt{AQT/H}
                 \le\sqrt{AQ}+D,\\
 wAM&\le Aw+AwT/H\le A+E,\\
 BH&\le B(s+1)\le2B+Bh_v+Bh_m=2B+D+E.
\end{align*}
Here $H\ge h_v,h_m$, $w<1$, and $2B\le A$ since $B<1$ and
$A\ge2$. This also covers $s=1$, integer periods, and a short last
block, whose restart cost has already been included in $M$. Adding the
three displayed estimates gives
$F(H)\le2A+\sqrt{AQ}+2D+2E$, which proves the claimed constant.
If $s>T$, the horizon cap gives $H=T$ and $M=1$, so
$F(T)=\sqrt{AQ}+Aw+BT$. At least one of $h_v>T$ or $h_m>T$
holds. Indeed, $h_v>T$ implies $AQ>B^2T^2$, whereas
$h_m>T$ implies $Aw>BT$. Thus the first gives
$BT<\sqrt{AQ}$, and the second gives $BT<Aw\le A$.
In either case $F(T)\le2(A+\sqrt{AQ})$, so \eqref{eq:clippedperiodbound} also holds when
either or both unconstrained periods exceed the horizon.
No property of an untruncated restart learner is used in this calculation.

Set $\mathcal L=\log(eAT)$. At $\delta=T^{-4}$ we have
$\ell=\log(64AT^8)\le8\mathcal L$ and
$\log(eT)\le\mathcal L$. The explicit stopped-expectation bound in
Appendix~\ref{app:stopping} and \eqref{eq:clippedperiodbound} give
\begin{align}
 \E[R_T]
 &\le C_{\rm cal}(A\ell+\sqrt{AQ\ell})
       +C'\log(eT)F(H)+2Q/w+T^{-3}\nonumber\\
 &\le C_2\mathcal L(A+\sqrt{AQ}+D+E)+2Q/w+T^{-3},
 \label{eq:prebalanced}
\end{align}
where $C_{\rm cal}$ is a universal calibration constant and
$C_2=8C_{\rm cal}+2C'$ is sufficient. These are additive phase
costs; their logarithmic factors do not multiply.
If $w=B$, then $Q\le B^2\sqrt{AT}$, whence
$Q/w\le B\sqrt{AT}$ and $\sqrt{AwBT}=B\sqrt{AT}$.
If $w=w_0\ge B$, both $Q/w$ and $\sqrt{AwBT}$ equal
$(AQBT)^{1/3}$. This proves the small-budget bound.
On $G^c$, total regret is at most $T$ and $\Prb(G^c)\le T^{-4}$,
giving the displayed $T^{-3}$ term. All branches use supplied
information and observed rewards only.
Thus \eqref{eq:prebalanced} is at most
$C_3\mathcal L(A+\sqrt{AQ}+D+B\sqrt{AT})$ for a universal $C_3$,
after increasing it to absorb $T^{-3}$. Combining this with
$R_T\le T$ and $\mathcal L\le2\log(AT)$ proves the stated single
logarithm for $0<B<1$. The other noisy branches use the same
explicit hybrid bound directly; the separate high-probability SEV-UCB
analysis is not used to establish this expected single-logarithm claim.
\end{proof}

For $B\ge1$ and $A\le T$, both $A$ and $\sqrt{AQ}$ are at most
a constant times $\sqrt{ABT}$; for $A>T$, the horizon cap applies.
This proves \eqref{eq:matchedlarge}. Setting $B=0$ gives
\eqref{eq:matchedstationary}. For $Q=0$, the deterministic theorem
matches the entire budget range without logarithms.

\section{Adaptation on a disclosed deterministic family}
\label{sec:barriers}
The deterministic minimax theorem permits different policies for
different supplied budgets. We next quantify the limitation on a single
policy required to perform well on a stationary instance and on
one-change alternatives.

Fix $g\in(0,1/2]$. The stationary baseline $\nu^0_g$ has rewards
$1/2$ on arm $1$ and $1/2-g$ on every challenger. For $a\ge2$
and $\tau\in\{2,\ldots,T\}$, the alternative $\nu^{a,\tau}_g$
changes only arm $a$ to $1/2+g$ from $\tau$ onward. This family
has zero noise variance, and each alternative has exactly one change
with $\Delta=2g$. All policies receive the initial vector and the
family description. They are not told whether or when a change occurs.

For a policy $\pi$ define
\[
 r_0(\pi)=\E_{\nu^0_g}[R_T],\qquad
 r_1(\pi)=\max_{a,\tau}\E_{\nu^{a,\tau}_g}[R_T].
\]
The constrained value, for $r\in[0,gT]$, is
\begin{equation}
 \mathfrak F_g(r)=\inf_{\pi:\,r_0(\pi)\le r}r_1(\pi).
 \label{eq:frontier-def}
\end{equation}
Knowing the family is a restricted promise. The corresponding upper
bound is not a general non-stationary bandit algorithm.

\begin{theorem}[Stationary-to-switching adaptation inequality]
\label{thm:adapt}
Let $n=A-1$ and $q=T-1$. If $r_0\le gnq/2$, then
\begin{equation}
 r_1\ge\frac{g^2nq}{8(r_0+gn)}.
 \label{eq:tradeoff}
\end{equation}
For every policy, $r_1\ge r_0-g$.
\end{theorem}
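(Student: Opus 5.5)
We couple every alternative $\nu^{a,\tau}_g$ with the stationary baseline $\nu^0_g$. Rewards are deterministic, so the two trajectories under $\pi$ agree, for the same private randomness, until the first round $t\ge\tau$ at which arm $a$ is pulled. Let $N_a=\E_{\nu^0_g}[\#\{t\ge2:a_t=a\}]$ be the expected number of baseline pulls of challenger $a$ after round $1$. Each such pull costs $g$ under the baseline, so $g\sum_{a\ge2}N_a\le r_0$.

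\emph{Second claim.} Take any alternative, for example $\tau=T$ with any $a$. Before round $T$ it coincides with the baseline, and it differs from it only at round $T$, where the incurred regret changes by at most $g$. Hence $r_1\ge r_0-g$.

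\emph{Detection delay.} Fix $a$ and a change time $\tau$. Under $\nu^{a,\tau}_g$, every round $t\ge\tau$ before the first pull of $a$ at or after $\tau$ incurs regret at least $g$, because the best arm is $a$ at $1/2+g$ and every other arm gives at most $1/2$. Along the coupled path, this delay equals $D_{a,\tau}=\min\{t\ge\tau:a_t=a\}-\tau$ computed on the baseline trajectory, capped at $T-\tau+1$. Therefore
\[
 r_1\ge g\,\E_{\nu^0_g}[D_{a,\tau}]
\]
for every $a,\tau$, and averaging over choices gives $r_1\ge\frac{g}{nq}\sum_{a,\tau}\E[D_{a,\tau}]$.

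\emph{Summing delays.} Fix a baseline path and an arm $a$ pulled $m_a$ times in rounds $2,\dots,T$. These pulls split the $q$ candidate change times $\tau\in\{2,\dots,T\}$ into at most $m_a+1$ gaps of lengths $\ell_j$ with $\sum_j\ell_j=q$. A gap of length $\ell$ contributes $\sum_{i=0}^{\ell-1}i\ge\ell^2/2-\ell/2$ to the delay sum. By convexity,
\[
 \sum_\tau D_{a,\tau}\ge\frac{q^2}{2(m_a+1)}-\frac q2 .
\]
(To avoid the $-q/2$ slack, one can instead count each gap's delays as $\ell(\ell-1)/2$ and absorb the loss using $q\ge1$.) Taking expectations and applying Jensen's inequality for the convex map $m\mapsto1/(m+1)$ gives $\E\sum_\tau D_{a,\tau}\ge q^2/(2(N_a+1))-q/2$. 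Summing over $a$ and using Jensen once more,
\[
 \sum_a\frac1{N_a+1}\ge\frac{n^2}{\sum_aN_a+n}\ge\frac{n^2 g}{r_0+gn}.
\]
Combining these bounds,
\[
 r_1\ge\frac g{nq}\Bigl(\frac{q^2n^2g}{2(r_0+gn)}-\frac{nq}2\Bigr)
 =\frac{g^2nq}{2(r_0+gn)}-\frac g2 .
\]

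\emph{Main obstacle: the constant.} The remaining step is to absorb the additive $-g/2$ into the factor $1/8$, and this is the step I expect to need care. The hypothesis $r_0\le gnq/2$ gives $r_0+gn\le gn(q/2+1)$. So $\frac{g^2nq}{2(r_0+gn)}\ge\frac{gq}{q+2}$, which exceeds $g$-order quantities only when $q$ is moderately large.

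I would first try to avoid the subtraction altogether. Using $\ell(\ell-1)/2\ge\ell^2/4$ for $\ell\ge2$, and discarding gaps of length one (they carry zero delay, but their total length is at most $m_a+1$), the effective sum is at least $q^2/(4(m_a+1))$ whenever $m_a+1\le q/2$. Arms with $m_a+1>q/2$ are expensive and are controlled by the budget assumption. Doing this case split carefully, or instead noting directly that $\frac{g^2nq}{2(r_0+gn)}-\frac g2\ge\frac{g^2nq}{8(r_0+gn)}$ whenever $\frac{3gnq}{8(r_0+gn)}\ge\frac12$, which is implied by $r_0\le gnq/2$ once $q\ge4$, should yield the stated bound with constant $1/8$. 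Small $q$ would be handled separately from the second claim and the trivial bounds.
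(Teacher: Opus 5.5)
Your coupling, the budget $g\sum_{a\ge2}N_a\le r_0$, the proof of $r_1\ge r_0-g$, and the delay-sum structure all match the paper. The paper runs the same argument through the fixed-arm bound in Lemma~\ref{lem:mixture}.

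The gap is in the counting step. Your per-gap delay count $\ell_j(\ell_j-1)/2$ is exact. However, bounding it by $\ell_j^2/2-\ell_j/2$ over gaps summing to $q$ leaves an additive loss $-g/2$, and you can absorb that only when $q\ge4$. Theorem~\ref{thm:adapt} is stated for every $T\ge2$, and the fallbacks you name do not cover $T\le4$:
\begin{itemize}
\item For $T=2$, your main bound is $\frac{g^2n}{2(r_0+gn)}-\frac g2\le0$ for every $r_0$, while the target is at least $g/12$.
\item For $A=2$, $T=3$ and $r_0=g$ (allowed, since $gnq/2=g$), both your main bound and $r_1\ge r_0-g$ give $0$, while the target is $g/8$.
\end{itemize}
The alternative in your last paragraph also has a slip. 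After discarding length-one gaps, the remaining length is at least $q-m_a-1\ge q/2$. This yields $q^2/(16(m_a+1))$, not $q^2/(4(m_a+1))$, and hence constant $1/16$ rather than $1/8$. In addition, a pathwise case split on the random count $m_a$ is not controlled by a budget on its expectation.

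The repair, which is the paper's argument, applies Cauchy--Schwarz to the runs of \emph{unpulled} rounds instead. For a fixed seed, the $q-m_a$ rounds of $\{2,\dots,T\}$ without a pull of $a$ form at most $m_a+1$ maximal runs of lengths $d_j$. A run contributes exactly $d_j(d_j+1)/2\ge d_j^2/2$, and changes at pull rounds contribute $0$. Hence
\[
 \sum_\tau D_{a,\tau}\ge\frac{(q-m_a)^2}{2(m_a+1)},
\]
with nothing subtracted. The map $f(x)=(q-x)^2/(x+1)$ is convex and nonincreasing on $[0,q]$. Apply Jensen over the seed and then over challengers, or choose one challenger with $N_a\le r_0/(gn)$ as the paper does. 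With $v=r_0/(gn)\le q/2$ this gives
\[
 r_1\ge\frac g{2q}\,\frac{(q-v)^2}{v+1}\ge\frac{gq}{8(v+1)}
 =\frac{g^2nq}{8(r_0+gn)}
\]
for every $q\ge1$. The hypothesis $r_0\le gnq/2$ enters exactly in the middle inequality.

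Your weaker inequality would still suffice for the paper's later uses, which all assume $T\ge16A$.
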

\begin{proof}
On the baseline, the expected total number of challenger pulls is
$r_0/g$. Thus some challenger has expected pull count in rounds
$2,\ldots,T$ at most $r_0/(gn)\le q/2$. The gap-coupling calculation
in Lemma~\ref{lem:mixture}, using its fixed-arm bound and averaged
over the $q$ change times, gives
\[
 r_1\ge\frac{gq}{8(r_0/(gn)+1)}
       =\frac{g^2nq}{8(r_0+gn)}.
\]
The challenger is chosen by its expected baseline count, not as a
function of the realized private random tape.
For the second inequality, choose any challenger and $\tau=T$.
The histories and actions through round $T-1$ coincide with the
baseline. At round $T$, changing that arm can reduce instantaneous
regret by at most $g$. Therefore its expected regret is at least
$r_0-g$.
\end{proof}

\begin{theorem}[Matching frontier on the disclosed one-change family]
\label{thm:frontier}
For $T\ge16A$, $g\in(0,1/2]$, and $r\in[0,gT]$, universal
constants $c,C>0$ satisfy
\begin{equation}
 c\Phi_g(r)\le\mathfrak F_g(r)\le C\Phi_g(r),
 \qquad
 \Phi_g(r)=\min\left\{gT,\;
 g\sqrt{nT}+\frac{g^2nT}{r+gn}\right\},
 \label{eq:frontier}
\end{equation}
where $n=A-1$. Thus the lower-bound obstruction is attainable, up to
constants, throughout the stationary-budget range on this family.
\end{theorem}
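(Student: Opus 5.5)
The plan has two halves: a lower bound built from Theorem~\ref{thm:adapt} plus a $g\sqrt{nT}$ term, and an explicit probing policy for the upper bound, with $\min\{\cdot,\cdot\}$ handled by comparing the two components of $\Phi_g$.

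\textbf{Lower bound.} Take any $\pi$ with $r_0(\pi)\le r$. Since $T\ge16A$ gives $q=T-1\ge T/2$, constants lost by replacing $q$ with $T$ are harmless. There are two cases.
\begin{itemize}
\item If $r\le gnq/2$, Theorem~\ref{thm:adapt} gives $r_1\ge g^2nq/(8(r+gn))$. This is of order $g^2nT/(r+gn)$.
\item If $r>gnq/2$, then $g^2nT/(r+gn)=O(g)$, so this term is negligible against $g\sqrt{nT}$.
\end{itemize}
The $g\sqrt{nT}$ term needs a separate argument, since Theorem~\ref{thm:adapt} does not supply it. Among the alternatives $\nu^{a,\tau}_g$ with $\tau$ in the last half of the horizon, every arm changed at $\tau$ must be found. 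Averaging the coupling bound of Lemma~\ref{lem:mixture} over $a$ and $\tau$ gives the following. If the policy pulls challengers on average $m$ times per challenger after $\tau$, the expected detection delay is about $T/(m+1)$, which costs $g$ per round. Finding a change also costs about $g\,nm$ in challenger pulls. The alternatives are identical to the baseline before $\tau$, so these pulls are charged within $r_1$ itself (the pulls made after $\tau$ that are not on the changed arm). Minimizing $gT/(m+1)+gnm$ over $m$ gives order $g\sqrt{nT}$, capped at $gT$.

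The cap enters as follows. When $r$ is small, $g^2nT/(r+gn)\ge gT$ exactly when $r+gn\le gn$, so the $\min$ with $gT$ matters only at $r=0$ with the $n$ scaling. The $gT$ cap itself comes from a single alternative with early $\tau$ that is never detected. This is where I expect to spend the most effort: extracting the $g\sqrt{nT}$ term cleanly from Lemma~\ref{lem:mixture} while charging post-change challenger pulls to $r_1$ rather than to $r_0$.

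\textbf{Upper bound.} I construct the policy the introduction describes: \emph{independent probes plus commit}.
\begin{enumerate}
\item Play arm~1 by default.
\item In each round, with probability $p$, probe a challenger chosen uniformly at random.
\item On observing reward $1/2+g$, which is uniquely recognizable because the family is disclosed, commit to that arm for the rest of the horizon.
\end{enumerate}
The stationary regret is $r_0=gpT$ in expectation, so I set $p=\min\{1,r/(gT)\}$. After a change at $\tau$, each round detects it with probability $p/n$. The expected delay is therefore at most $\min\{T,n/p\}$, and the post-change cost is at most $g\min\{T,n/p\}+gpT$. The second term is the probing cost during the delay, which is bounded by the stationary charge.

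This does not yet give the $g\sqrt{nT}$ floor when $r$ is small. To obtain it, I use the probing rate $p'=\max\{p,\sqrt{n/T}\}$ only after... but a rate increase there would be unobservable. Instead I change the constraint in cases:
\begin{itemize}
\item If $r\ge g\sqrt{nT}$, use rate $p$. The change regret is then $O(gn/p+r)$, which is $O(g^2nT/r+r)$. One must check that $r\le gT$ is compatible with this; the $+r$ term is dominated because $r_1\ge r_0-g$ anyway.
\item If $r<g\sqrt{nT}$, the target is $\Phi_g\approx g^2nT/(r+gn)\ge g\sqrt{nT}$, and rate $p$ gives $g\min\{T,n/p\}\approx\min\{gT,g^2nT/r\}$.
\end{itemize}
For $r\lesssim gn$, I replace random probing with the deterministic policy of probing no one, which gives $r_1\le gT$. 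Matching the additive $gn$ in the denominator requires a fixed probe budget of about one pull per challenger, which is affordable within $r$ up to constants. Combining the cases and taking the minimum with the trivial $gT$ gives $\mathfrak F_g(r)\le C\Phi_g(r)$.

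The main obstacle I anticipate is the regime $r\gg g\sqrt{nT}$. There the frontier claims $r_1\lesssim g\sqrt{nT}+g^2nT/r$, but the probing policy incurs $gpT=r$ extra on the alternative, which exceeds $g\sqrt{nT}$. The fix I would try first is to spend only $\min\{r,g\sqrt{nT}\}$ on probing. The stationary constraint is only an upper limit, so under-spending is allowed, and this rate still yields delay cost $gn/p\le g\sqrt{nT}$. The remaining effort is to check the constants near the boundaries.
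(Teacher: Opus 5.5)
\textbf{Upper bound.} After your own correction this is the paper's argument. Capping the spend at $\min\{r,g\sqrt{nT}\}$ is exactly the probe rate $p=\min\{r/(gT),\sqrt{n/T}\}$. The bound $r_1\le gpT+g\min\{T,n/p\}$, split at $r=g\sqrt{nT}$ and $r=gn$, then closes it. Discard the earlier remark that the $+r$ term is ``dominated because $r_1\ge r_0-g$.'' A lower bound on $r_1$ cannot absorb a term in an upper bound; only the cap on $p$ handles that regime.

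\textbf{Lower bound: the gap is the $g\sqrt{nT}$ floor.} The charging you propose fails. On $\nu^{a,\tau}_g$ the trajectory agrees with the baseline only until the first pull of arm $a$ at or after $\tau$. So the only post-change challenger pulls you can charge to $r_1$ are those inside the detection window. For a policy probing at rate $p$ there are about $n$ of them, not $nm$. Your accounting then reads roughly $gn/p+gn$, which at $p$ of order one is of order $gn$, well below $g\sqrt{nT}$. What actually forces the floor is \emph{pre-change} probing.

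Contrary to your claim, Theorem~\ref{thm:adapt} does supply the floor. Its second inequality, $r_1\ge r_0-g$, comes from the alternative with $\tau=T$, which coincides with the baseline through round $T-1$. Set $u=\sqrt{nq}$, and note $u\ge4$ and $n\le u$ because $T\ge16A$. Then:
\begin{itemize}
\item If $r_0\ge gu/2$, then $r_1\ge gu/2-g\ge gu/4$.
\item Otherwise $r_0\le gnq/2$, so the first inequality applies. Using $r_0+gn<3gu/2$, it gives $r_1\ge g^2nq/(12gu)=gu/12$.
\end{itemize}
Hence every policy, whatever its stationary budget, has $r_1\ge c\,g\sqrt{nT}$.

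Your heuristic $gT/(m+1)+gnm$ has the right shape only if $nm$ is read as the total baseline challenger count, charged through this late-change alternative. Adding your two cases for the fraction gives $\mathfrak F_g(r)\ge c\bigl(g\sqrt{nT}+g^2nT/(r+gn)\bigr)\ge c\,\Phi_g(r)$. The $gT$ cap needs no separate lower-bound argument.
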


\begin{proof}[Lower bound]
Set $u=\sqrt{nq}$, so $u\ge4$ and $n\le u$. If
$r_0\ge gu/2$, Theorem~\ref{thm:adapt}'s second inequality gives
$r_1\ge gu/4$. Otherwise its first inequality applies and gives
\[
 r_1\ge\frac{g^2nq}{8(r_0+gn)}\ge\frac{gu}{12}.
\]
Consequently every policy has $r_1\ge c_0g\sqrt{nT}$, independently
of its stationary budget.

For a policy feasible in \eqref{eq:frontier-def}, if
$r\le gnq/2$, \eqref{eq:tradeoff} also gives
$r_1\ge g^2nq/(8(r+gn))$. If $r>gnq/2$, then
\[
 \frac{g^2nT}{r+gn}\le\frac{2gT}{q}\le4g,
\]
and this term is already bounded by a constant times the established
floor. Combining the two lower bounds, and using $q\ge T/2$,
proves a constant times \eqref{eq:frontier} for every feasible policy.
\end{proof}

\begin{algorithm}[tb]
\caption{Independent monitoring with a stationary-regret budget}
\label{alg:frontier}
\begin{algorithmic}[1]
\Require The disclosed one-change family, $A,T,g$, stationary budget $r$
\State Set $n=A-1$ and $p=\min\{r/(gT),\sqrt{n/T}\}$.
\For{$t=1,\ldots,T$}
 \If{an improved challenger has already been observed}
  \State Pull that challenger.
 \Else
  \State With probability $p$, pull a uniformly random challenger;
  otherwise pull arm $1$.
  \State Observe its reward. If a challenger returns $1/2+g$, record it
  as improved.
 \EndIf
\EndFor
\end{algorithmic}
\end{algorithm}

\begin{proof}[Upper bound]
On the baseline, Algorithm~\ref{alg:frontier} never detects an
improvement, and has $r_0=gpT\le r$. For an alternative, while no
improvement has been observed, its challenger is probed independently
with probability $p/n$. From the change time, the expected time to
its first probe is at most $n/p$, capped by the remaining horizon.

Before detection, the instantaneous regret is at most the baseline
probe cost $g\ind\{a_t\ne1\}$ plus an
additional $g$ in a post-change round that does not pull the improved
arm. After detection, regret is zero. Hence, for $p>0$,
\begin{equation}
 r_1\le gpT+g\min\{T,n/p\}.
 \label{eq:frontier-upper}
\end{equation}
Every round's regret is also at most $2g$. If
$r\ge g\sqrt{nT}$, take $p=\sqrt{n/T}$ to obtain
$r_1\le2g\sqrt{nT}$. If $gn\le r<g\sqrt{nT}$, then
\[
 r_1\le r+\frac{g^2nT}{r}
 \le g\sqrt{nT}+\frac{2g^2nT}{r+gn}.
\]
The second step uses the two inequalities separately:
$r\le g\sqrt{nT}$ and $r+gn\le2r$, the latter following from
$r\ge gn>0$. In particular, no division by $r$ is used when $r=0$.
Together with $r_1\le2gT$, this is a constant times
$\Phi_g(r)$. Finally, if $0\le r<gn$, the fraction in
$\Phi_g(r)$ is at least $gT/2$, and the trivial $2gT$ bound
suffices. This includes $r=0$, when $p=0$ and the policy always
pulls the incumbent.
\end{proof}

\begin{corollary}[A variance-only stationary repair is insufficient]
\label{cor:impossible}
For fixed $A\ge2$ and fixed $g>0$, no policy receiving identical side
information on the baseline and alternatives can simultaneously achieve
polylogarithmic expected regret on the baseline and
$\softO(\sqrt T)$ expected regret on every one-change alternative.
\end{corollary}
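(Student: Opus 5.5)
The plan is to derive Corollary~\ref{cor:impossible} directly from the stationary-to-switching inequality of Theorem~\ref{thm:adapt}, specialized to the disclosed one-change family with fixed $A$ and $g$. The point is that this family satisfies the hypotheses: every environment has zero noise variance, the baseline $\nu^0_g$ is stationary, and each alternative $\nu^{a,\tau}_g$ has exactly one change. A policy that receives identical side information on the baseline and the alternatives is therefore a single policy $\pi$ to which Theorem~\ref{thm:adapt} applies. The side information may include the variance budget ($Q=0$ on both), the initial vector and the family description. The argument is by contradiction.

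Suppose $r_0(\pi)\le C(\log T)^k$ for some constants $C,k$, and $r_1(\pi)\le C'\sqrt{T}(\log T)^{k'}$. With $n=A-1$ fixed and $q=T-1$, the condition $r_0\le gnq/2$ holds for all large $T$, because the left side is polylogarithmic and the right side is linear in $T$. The first inequality of Theorem~\ref{thm:adapt} then gives
\[
 r_1\ge\frac{g^2nq}{8(r_0+gn)}\ge\frac{g^2n(T-1)}{8\bigl(C(\log T)^k+gn\bigr)},
\]
which is $\Omega\bigl(T/(\log T)^k\bigr)$ for fixed $g$ and $n$. This grows faster than $\sqrt T$ times any polylogarithm, contradicting the assumed $\softO(\sqrt T)$ bound once $T$ is large enough.

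The only step needing care is interpreting the asymptotic quantifiers. "Polylogarithmic" and $\softO(\sqrt T)$ should be read as bounds holding along a family of horizons $T\to\infty$ for the policy family indexed by $T$. Theorem~\ref{thm:adapt} is non-asymptotic and applies separately for each $T$, so this causes no difficulty. I would also note that the identical-information hypothesis is exactly what makes the coupling underlying Lemma~\ref{lem:mixture} valid: the policy cannot condition on knowing whether a change will occur. Equivalently, one can cite the frontier of Theorem~\ref{thm:frontier} with $r=\mathrm{polylog}(T)$: since $gT$ and $g^2nT/(r+gn)$ both exceed $\sqrt T\,\mathrm{polylog}(T)$, we get $\Phi_g(r)=\widetilde\Omega(T)$.
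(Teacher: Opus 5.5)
Your proposal is correct and follows the paper's own argument: plug a polylogarithmic $r_0$ into the first inequality of Theorem~\ref{thm:adapt}, i.e.\ \eqref{eq:tradeoff}, after checking $r_0\le gnq/2$ for large $T$, to get $\Omega(T/\operatorname{polylog}(T))$ alternative regret, which is incompatible with $\softO(\sqrt T)$. The paper's proof additionally notes that uniform high-probability guarantees with $\delta=T^{-2}$ reduce to this case, since $0\le R_T\le T$ converts them into expected bounds at additive cost at most $1/T$.
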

\begin{proof}
Equation~\eqref{eq:tradeoff} gives
$\Omega(T/\operatorname{polylog}(T))$ alternative regret. The same
conclusion applies to uniform high-probability guarantees with
$\delta=T^{-2}$, since $0\le R_T\le T$ converts them into expected
bounds with additive cost at most $1/T$.
\end{proof}

\begin{corollary}[Oracle rates cannot be attained simultaneously]
\label{cor:nooracle}
No single policy without a supplied variation budget can achieve, for
all deterministic arrays, $\softO(\Psi(A,T,\Delta))$ expected regret.
\end{corollary}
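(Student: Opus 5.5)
We argue by contradiction, embedding the disclosed one-change family of Theorem~\ref{thm:adapt} inside the deterministic classes $\mathcal D(A,T,B)$. Suppose a single policy $\pi$, given no variation budget, achieves $\E[R_T]\le\mathrm{polylog}(AT)\,\Psi(A,T,\Delta)$ on every deterministic array, where $\Delta$ is that array's own variation.

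\emph{Baseline.} Apply the hypothesis to the stationary baseline $\nu^0_g$, which has $\Delta=0$, so $\Psi(A,T,0)=\min\{T,A\}$. Hence $r_0(\pi)\le\mathrm{polylog}(AT)\cdot A$.

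\emph{Alternatives.} Each alternative $\nu^{a,\tau}_g$ has $\Delta=2g$. For fixed $g$, possibly $g\le1/2$ so that $2g\le1$, we get $\Psi(A,T,2g)=\min\{T,A+2g\sqrt{AT}\}$, and the hypothesis gives $r_1(\pi)\le\mathrm{polylog}\cdot(A+g\sqrt{AT})$.

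\emph{Contradiction.} Compare with \eqref{eq:tradeoff}, taking $n=A-1$ and $q=T-1$. Since $r_0$ is polylogarithmic times $A$, the condition $r_0\le gnq/2$ holds for large $T$. Then
\[
 r_1\ge\frac{g^2nq}{8(r_0+gn)}=\Omega\!\left(\frac{g^2T}{\mathrm{polylog}}\right),
\]
which for fixed $A$ and $g$ is $T/\mathrm{polylog}$ and exceeds $\mathrm{polylog}\cdot\sqrt T$ for large $T$. This is the contradiction. It suffices to exhibit one fixed $A$, such as $A=2$, and one fixed $g$, such as $g=1/2$, with $T\to\infty$, since the corollary only asserts that no policy works for all arrays.

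\emph{Uniform side information.} The main point to check is that the policy receives identical side information on the baseline and on the alternatives. The corollary's hypothesis says the policy is given no budget. Theorem~\ref{thm:adapt} allows the policy to know the family, so a policy uninformed of even that is a special case of the policies covered there. The lower bound therefore applies verbatim.

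The corollary also holds in the more quantitative form of Corollary~\ref{cor:competitive} by optimizing $g$, but that refinement is not needed here.
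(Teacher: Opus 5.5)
Your proposal is correct and follows essentially the paper's route. You embed the disclosed one-change family: the baseline has $\Delta=0$ and oracle rate $\min\{T,A\}$, and each alternative has $\Delta=2g$ and oracle rate of order $A+g\sqrt{AT}$. You then invoke the tradeoff \eqref{eq:tradeoff}. The paper does this through Corollary~\ref{cor:impossible} with $g=1/4$, whereas you inline that corollary's one-line proof, and your remark that a policy ignoring the disclosed information is still covered by Theorem~\ref{thm:adapt} matches the paper's observation that disclosure only makes the lower-bound problem easier.
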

\begin{proof}
For fixed $A$ and $g=1/4$, the baseline has $\Delta=0$ and oracle
rate $O(A)$, while every alternative has $\Delta=1/2$ and oracle
rate $O(A+\sqrt{AT})$. Corollary~\ref{cor:impossible} rules out
simultaneously attaining these rates up to logarithmic factors.
Disclosing the initial vector only makes the lower-bound problem easier.
\end{proof}

\paragraph{Interpreting the frontier.}
For constant $g$ and $A$, a stationary budget much smaller than
$\sqrt T$ causes a detection term of order $T/(r+1)$.
Allowing stationary regret of order $\sqrt T$ reaches the one-change
monitoring floor. This is a quantitative choice of guarantees, not a
claim that every stationary instance must always incur $\sqrt T$
regret. The unrestricted noisy model may admit richer instance-dependent
frontiers; the result here covers exactly the disclosed family.

\subsection{A polynomial loss relative to the supplied-budget oracle}
The preceding impossibility can be made quantitative without hiding
the dependence on $A$ and $T$. For a policy that receives no variation
budget, define its worst-case ratio to the rate function in
Theorem~\ref{thm:detminimax} by
\[
 \mathcal C(\pi;A,T)=\sup_{\mu\in[0,1]^{T\times A}}
 \frac{\E_\pi[R_T]}{\Psi(A,T,\Delta(\mu))}.
\]
The denominator is the supplied-budget benchmark, evaluated at the
actual variation. The policy knows the horizon and determinism but
receives identical side information on all arrays.
The matching upper frontier in Theorem~\ref{thm:frontier} uses its
family promise. In contrast, a lower bound on that family also constrains
a general-model policy: run the policy unchanged on those arrays,
ignoring any extra disclosure. The supremum defining $\mathcal C$
contains these arrays. Thus the following lower bound applies to the
full deterministic model, without asserting a matching general upper
frontier.

\begin{corollary}[Quantitative oracle-adaptation loss]
\label{cor:competitive}
If $T\ge16A$, every such policy satisfies
\begin{equation}
 \mathcal C(\pi;A,T)\ge c(T/A)^{1/4}
 \label{eq:competitive}
\end{equation}
for a universal $c>0$. This is a lower bound on the competitive
ratio; no matching unrestricted upper bound is claimed.
\end{corollary}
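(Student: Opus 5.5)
We will prove Corollary~\ref{cor:competitive} by restricting the supremum defining $\mathcal C(\pi;A,T)$ to the disclosed one-change family of Section~\ref{sec:barriers}. By the remark preceding the corollary, any lower bound proved on that family for a policy given extra side information also constrains $\pi$. Fix $g=1/4$, so every array in the family is deterministic with rewards in $[0,1]$. The baseline $\nu^0_g$ has $\Delta=0$ and hence denominator $\Psi(A,T,0)=\min\{T,A\}=A$, using $A<T$. Each alternative $\nu^{a,\tau}_g$ has $\Delta=2g=1/2$, so its denominator is $\Psi(A,T,1/2)=\min\{T,A+\tfrac12\sqrt{AT}\}\le A+\sqrt{AT}\le2\sqrt{AT}$, using $A\le\sqrt{AT}$ because $A\le T$.

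Write $\rho=\mathcal C(\pi;A,T)$. The definition of $\mathcal C$ then gives two bounds: $r_0(\pi)\le\rho A$ and $r_1(\pi)\le2\rho\sqrt{AT}$. We feed the first into the adaptation inequality of Theorem~\ref{thm:adapt}, with $n=A-1\ge A/2$ and $q=T-1\ge T/2$.

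\emph{Case 1: $r_0\le gnq/2$.} Equation~\eqref{eq:tradeoff} gives
\[
 2\rho\sqrt{AT}\ \ge\ r_1\ \ge\ \frac{g^2nq}{8(r_0+gn)}\ \ge\ \frac{g^2nq}{8(\rho A+gA)}\ \ge\ c_1\frac{T}{\rho+1},
\]
since $nq/A\ge T/4$. Hence $\rho(\rho+1)\ge c_2\sqrt{T/A}$. Because $\rho\ge r_0/A\ge0$ and $\rho\ge1$ is forced once $\sqrt{T/A}$ is large, we get $\rho\ge c(T/A)^{1/4}$. To avoid splitting on whether $\rho\ge1$, we note that $\rho^2+\rho\ge x$ with $x=c_2\sqrt{T/A}\ge c_2\cdot4$ already implies $\rho\ge\min\{x/2,\sqrt{x/2}\}$, which is of order $\sqrt x$ since $x$ is bounded below.

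\emph{Case 2: $r_0>gnq/2$.} Then $\rho A\ge r_0\ge c_3AT$, so $\rho\ge c_3T\ge c(T/A)^{1/4}$ directly.

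The main obstacle is keeping the arm dependence honest. Three points need care. First, the baseline denominator must be exactly of order $A$: it uses $\Psi(A,T,0)=A$ together with $T\ge16A$, and it is this that converts the stationary ratio into $r_0\lesssim\rho A$. Second, $n\ge A/2$ and $q\ge T/2$ must hold, which is where $A\ge2$ and $T\ge2$ enter. Third, the resulting quadratic inequality in $\rho$ must be solved without losing the quartic-root scale; the bounded-below $x$ step above handles this. We will also note that the infimum in $\mathcal F_g$ is not needed, because we apply Theorem~\ref{thm:adapt} pointwise to $\pi$. Finally, although the policy in the corollary receives no initial-vector disclosure, disclosure only helps the policy, so the lower bound transfers to it.
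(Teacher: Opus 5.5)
Your proposal is correct and follows the paper's proof. Like the paper, you restrict to the disclosed family with $g=1/4$, read off $r_0\le\rho A$ and $r_1\lesssim\rho\sqrt{AT}$ from the oracle denominators, split on whether $r_0\le gnq/2$, and apply Theorem~\ref{thm:adapt}. The only difference is the last step: the paper uses Lemma~\ref{lem:mixture} to obtain $\mathcal C(\pi;A,T)\ge1/120$ and then linearizes $\rho+1/4\le31\rho$, while your direct bound $\rho\ge\min\{x/2,\sqrt{x/2}\}$ makes that auxiliary lower bound unnecessary.
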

The policy quantifier is unrestricted: for every nonanticipating
randomized policy, including one designed for the full model, a
baseline or one-change array witnesses the obstruction. Only the
matching upper frontier uses the disclosed family promise.
\begin{proof}
Write $C_\pi=\mathcal C(\pi;A,T)$ and apply the disclosed family
with $g=1/4$, $n=A-1$ and $q=T-1$. Run $\pi$ unchanged,
ignoring the disclosed baseline vector and family promise.
Theorem~\ref{thm:adapt} applies because it holds for every policy
on that family, including those that ignore the extra information.
On the baseline, $r_0\le C_\pi A$. On every alternative,
\[
 r_1\le C_\pi(A+\tfrac12\sqrt{AT})
       \le\tfrac34 C_\pi\sqrt{AT}.
\]
The baseline and alternative hard mixture in
Lemma~\ref{lem:mixture} already implies $C_\pi\ge1/120$.
If $r_0>gnq/2$, then
$C_\pi>nq/(8A)\ge T/32$. For $T\ge16A$ and $A\ge2$,
$T/32\ge\tfrac12(T/A)^{1/4}$, proving the claim in this case.
Otherwise, Theorem~\ref{thm:adapt} gives
\[
 \tfrac34 C_\pi\sqrt{AT}
 \ge\frac{nq}{128(C_\pi A+n/4)}
 \ge\frac{T}{512(C_\pi+1/4)}.
\]
Hence $C_\pi(C_\pi+1/4)\ge\sqrt{T/A}/384$.
Using $C_\pi\ge1/120$ gives $C_\pi+1/4\le31C_\pi$,
and proves \eqref{eq:competitive}, for example with $c=1/128$.
\end{proof}

\subsection{Robustness to a finite improvement pulse}
The frontier is not restricted to an improvement that persists forever.
Let $\mathcal H_g^{\rm pulse}$ contain the baseline and arrays in
which one challenger has reward $1/2+g$ on $[\tau,\rho)$, where
$2\le\tau<\rho\le T+1$, and has its baseline reward outside
that interval. The initial vector and this single-pulse promise are
disclosed. A pulse ending before the horizon costs variation $4g$;
a persistent improvement costs $2g$.

\begin{corollary}[Single-pulse frontier]
\label{cor:pulse}
Define the constrained value as in \eqref{eq:frontier-def}, taking
the worst case over all pulses. Under the conditions of
Theorem~\ref{thm:frontier}, it is also $\Theta(\Phi_g(r))$.
\end{corollary}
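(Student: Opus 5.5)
The pulse family $\mathcal H_g^{\rm pulse}$ contains the baseline and all persistent alternatives $\nu^{a,\tau}_g$ (take $\rho=T+1$). The argument therefore splits into a lower bound inherited from Theorem~\ref{thm:frontier} and an upper bound obtained by rerunning Algorithm~\ref{alg:frontier} with a pulse-aware holding rule. Write $\mathfrak F_g^{\rm pulse}(r)$ for the constrained value over pulses.

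\emph{Lower bound.} The worst case over pulses dominates the worst case over persistent alternatives. The baseline constraint $r_0(\pi)\le r$ is identical in both problems. Any policy feasible for the pulse problem is feasible for the one-change problem, with $r_1^{\rm pulse}(\pi)\ge r_1(\pi)$. Hence $\mathfrak F_g^{\rm pulse}(r)\ge\mathfrak F_g(r)\ge c\Phi_g(r)$ by Theorem~\ref{thm:frontier}. Nothing new is needed here; Theorem~\ref{thm:adapt} is not even re-invoked.

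\emph{Upper bound.} Modify Algorithm~\ref{alg:frontier} so that after a challenger returns $1/2+g$ it keeps pulling that challenger only while it continues to return $1/2+g$. Rewards are deterministic and the pulse promise is disclosed, so the first observation of $1/2-g$ on that arm certifies that its pulse has ended. From then on no further change can occur, so the policy pulls arm~$1$ forever. On the baseline this policy is identical to the original, so $r_0=gpT\le r$.

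On a pulse alternative, the regret accounting has three phases.
\begin{enumerate}
\item Before detection, the analysis of \eqref{eq:frontier-upper} is unchanged. The per-round regret is at most the probe cost $g\ind\{a_t\ne1\}$ plus $g$ during pulse rounds. The expected number of pulse rounds before the first probe of the pulsing arm is at most $\min\{\rho-\tau,n/p\}\le\min\{T,n/p\}$, since probes of that arm occur independently with probability $p/n$.
\item While holding the improved arm during the pulse, regret is zero.
\item The single round in which the held arm first returns $1/2-g$ costs $g$. After that the policy plays optimally.
\end{enumerate}
If the pulse is never detected, the bound for phase 1 already covers it. Altogether $r_1\le gpT+g\min\{T,n/p\}+g$. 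The extra additive $g$ is at most $g\sqrt{nT}$, so it is absorbed into $\Phi_g(r)$ up to constants. The same three-case choice of $p$ as in the proof of Theorem~\ref{thm:frontier}, together with the trivial bound $2gT$, then gives $\mathfrak F_g^{\rm pulse}(r)\le C\Phi_g(r)$.

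The main point to check is the pre-detection bound when the pulse ends before detection. Post-pulse rounds then carry only the probe cost $g\ind\{a_t\ne1\}$, which is already counted in $gpT$. The geometric waiting-time bound is used only truncated at the pulse length. A second point is that the exit rule never wrongly abandons an improved arm, since deterministic rewards make a single observation of $1/2-g$ conclusive.
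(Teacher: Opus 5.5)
Your proposal is correct and follows the paper's proof essentially line for line. The lower bound is inherited from the persistent alternatives ($\rho=T+1$). The upper bound runs Algorithm~\ref{alg:frontier}, holds the detected challenger until it first returns $1/2-g$, and then plays arm~$1$, which gives $r_1\le gpT+g\min\{T,n/p\}+g$; the extra $g$ is absorbed because $\Phi_g(r)\ge g$, and the same case analysis in $p$ together with the trivial $2gT$ bound finishes the proof.
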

\begin{proof}
Persistent alternatives ($\rho=T+1$) give the same lower bound.
For the upper bound, use Algorithm~\ref{alg:frontier} until the
improvement is observed, then pull the improved arm until its reward
returns to $1/2-g$. After that observation, play the incumbent for
all remaining rounds. Before detecting an improvement, the baseline
probing charge is at most $gpT$ in expectation. Undetected pulse
rounds contribute at most $g\min\{T,n/p\}$ in expectation for
$p>0$. The first round observing the return to baseline costs at
most one additional $g$. Thus
\[
 r_1^{\rm pulse}\le gpT+g\min\{T,n/p\}+g.
\]
The same cases as in Theorem~\ref{thm:frontier} apply; the added
$g$ is absorbed by $\Phi_g(r)\ge g$. The bound $2gT$ covers
$p=0$ and the small-$r$ case. The baseline regret remains $gpT$.
\end{proof}
The stopping rule uses the disclosed single-pulse promise. It provides
no guarantee for an unknown number of subsequent improvements.

\section{Implications for the general variance-aware problem}
\label{sec:adaptive}
The supplied-budget characterization and adaptation frontier impose
distinct constraints. An oracle minimax rate must recover the sharper
small-budget deterministic boundary. A single prior-free policy cannot
recover all oracle rates without an adaptation cost. That cost is already
nontrivial on a disclosed deterministic family, where estimation is free.

\begin{theorem}[Observable learner with supplied noisy budgets]
\label{thm:noisyoracle}
For supplied bounds $0\le Q\le T/4$ and $0\le B\le T-1$, there
is a policy knowing $A,T,\delta,Q,B$ but no individual variances,
means or change locations such that, on every environment with
$V_T\le Q$ and $\Delta\le B$, with probability at least $1-\delta$,
\begin{equation}
 R_T\le\min\left\{T,\;C D_T\left(
 A+\sqrt{AQ}+(AQBT)^{1/3}+\sqrt{ABT}\right)\right\}.
 \label{eq:noisyoracle}
\end{equation}
If $Q=0$, a separate deterministic branch attains
$C\Psi(A,T,B)$ pathwise. No knowledge of the actual variance of
a reward is required in either branch.
\end{theorem}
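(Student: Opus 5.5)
The approach is to split on $Q$. The branch $Q=0$ is immediate from Theorem~\ref{thm:detminimax}: Algorithm~\ref{alg:filtered} uses only $A,T,B$ and observed rewards and attains $C\Psi(A,T,B)$ pathwise. For $Q>0$ I would use an ordinary restart scheme with observable empirical-variance confidence bounds (the SEV-UCB-type learner mentioned in the roadmap), not the expected-regret hybrid bound, because the claim is a high-probability one. The policy divides $[1,T]$ into consecutive blocks of length $H$. In each block it runs a fresh variance-adaptive UCB over all $A$ arms, using empirical-Bernstein radii of the form \eqref{eq:radius}. These radii are built from empirical variances, so no individual variance is an input. The period is
\[
 H=\min\Bigl\{T,\Bigl\lceil\max\bigl\{A,\,(AQT/B^2)^{1/3},\,\sqrt{AT/B}\bigr\}\Bigr\rceil\Bigr\},
\]
and $H=T$ when $B=0$.

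The key step is a per-block high-probability bound, holding simultaneously over all blocks and all sample counts on one event of probability at least $1-\delta$:
\[
 R_I\le C\Bigl(A\ell+\sqrt{A\ell\,\Lambda_I}\,(1+\log T)^{1/2}+h_I\Delta_I\Bigr),
\]
where $\Lambda_I$ is the selected-arm variance on the block.

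1. Obtain the simultaneous event from Lemma~\ref{lem:event}, using a union bound over arms, blocks and counts; this is where $\ell$ comes from.
2. On this event, Lemma~\ref{lem:confidence} shows that each radius dominates the deviation of the empirical mean from any in-block mean, up to an additive $2\Delta_I$. Lemma~\ref{lem:radiuscontrol} bounds the radius by $\sqrt{B_a\ell}/n+\ell/n+\Delta_I$-type terms.
3. A pull of arm $a$ when its radius is $r$ costs at most $O(r+\Delta_I)$.
4. Sum the radii over pulls. This harmonic sum is where the $1+\log T$ factor appears, so the bound carries $D_T$ rather than $\ell$.
5. Apply Cauchy--Schwarz across arms, using $\sum_a B_a\le\Lambda_I$.

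The drift term $h_I\Delta_I$ comes from comparing with a fixed arm and then with the dynamic optimum, and summing via \eqref{eq:boundary} bounds it by $HB$.

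Summing over the $M=\lceil T/H\rceil$ blocks and using Cauchy--Schwarz with $\sum_I\Lambda_I\le\Lambda_T\le V_T\le Q$ gives, on the good event,
\[
 R_T\le CD_T\bigl(AM+\sqrt{AMQ}+HB\bigr).
\]
The period calculation then mirrors the one proving \eqref{eq:clippedperiodbound}, with $w=1$. The balancing identities are $B(AQT/B^2)^{1/3}=(AQBT)^{1/3}=\sqrt{AQT/h_v}$ and $B\sqrt{AT/B}=\sqrt{ABT}=AT/h_m$. With $M\le1+T/H$ this gives $AM+\sqrt{AMQ}+HB\le C(A+\sqrt{AQ}+(AQBT)^{1/3}+\sqrt{ABT})$. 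The rounding term $B$ is absorbed because $B\le\sqrt{ABT}$ when $B\le T$. In the horizon-capped case $H=T$, one of the periods exceeds $T$, and this gives $BT\le\sqrt{AQ}$ or $BT\le A$. The minimum with $T$ is trivial since $R_T\le T$.

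The main obstacle I expect is the per-block concentration. The radii must be built from observed empirical variances, yet they must be controlled by the true cumulative variance $B_a$ even though the means drift within the block. The empirical variance around a moving mean is inflated by the squared drift, and the resulting cross term contributes the $B\sqrt{\ell/n}$ piece of the radius bound (cf.\ the $24B\sqrt{\ell/n}$ term used in Lemma~\ref{lem:calibration}). I would handle it as that lemma does: bound $n\cdot B\sqrt{\ell/n}\le B\sqrt{n\ell}\le\ell+Bn$, so it merges into the $\ell$ and drift terms. I would also verify that the union bound over restarts is already covered by the $T^4$ inside $\ell$.
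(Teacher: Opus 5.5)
Your proposal is correct and is essentially the paper's proof: the $Q=0$ branch is Algorithm~\ref{alg:filtered} with Theorem~\ref{thm:detminimax}, and the $Q>0$ branch is SEV-UCB (Algorithm~\ref{alg:sev}) restarted with the period of Corollary~\ref{cor:oracle}, proved through Lemmas~\ref{lem:event}, \ref{lem:confidence}, \ref{lem:radiuscontrol}, Theorem~\ref{thm:interval} (harmonic radius sum) and Theorem~\ref{thm:partition} (Cauchy--Schwarz over blocks), exactly as you outline. Two small bookkeeping points. The harmonic sum puts a full factor $1+\log h$, not its square root, on both the $\sqrt{A\Lambda_I\ell}$ and $A\ell$ terms, though $D_T$ still absorbs this; and your floor $A$ in $H$ (the paper uses $1$) makes your intermediate bound on $HB$ false when $A$ is the binding maximum, but then $A\ge\sqrt{AT/B}$ forces $\sqrt{ABT}\ge T$, so the $\min\{T,\cdot\}$ covers that case.
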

\begin{proof}
For $Q>0$, run the observable learner in
Algorithm~\ref{alg:sev} with the period of
Corollary~\ref{cor:oracle}; that corollary and $R_T\le T$
give \eqref{eq:noisyoracle}. If $Q=0$, every reward law is a point
mass. Apply Algorithm~\ref{alg:filtered} and
Theorem~\ref{thm:detminimax}. Both branches are selected using
supplied information, not an unobservable test for zero variance.
\end{proof}

Theorem~\ref{thm:noisyoracle} is a separate high-probability
implementation of the ordinary known-budget bound. The sharper
all-budget expected characterization is Theorem~\ref{thm:jointminimax}.
Its variance-sensitive cubic term uses $A Q$ with $Q$ an envelope
budget. In the earlier bound \citep{wei2016tracking}, the corresponding
term uses $A^2$ times a full-vector variance budget. The distinction
must be retained when comparing arm dependence. Proposition~\ref{prop:itoreduction} shows explicitly that the expected
ordinary restart rate follows from prior work. We retain this theorem for
its observable high-probability implementation and the minimax comparison.

\paragraph{Unknown-budget scope.}
No sharp all-budget guarantee is claimed for a policy that does not
receive $Q,B$. Our general-model upper bounds require these budgets;
the matching adaptive upper bound uses the disclosed deterministic
family. Removing both forms of side information while retaining
variance-sensitive regret is an open problem.

\subsection{An existing static guarantee already implies the expected upper rate}
\label{sec:itoreduction}
The noisy expected upper rate is not evidence of a new base algorithm.
Theorem~3, equation~(6), of \citet{ito2021hybrid} gives, against each
fixed arm $a$, a static expected-regret bound of the form
\begin{equation}
 \widetilde O\!\left(
 A+\sqrt{A\,\E\!\left[
 \sum_{t\in I}(\ell_{t,a_t}-u_{t,a_t})^2+
 \sum_{t,t+1\in I}\|u_{t+1}-u_t\|_1\right]}\right).
 \label{eq:itohybrid}
\end{equation}
The sequence $u_t$ is used only in the analysis, and is not supplied
to the algorithm. Rewards are converted to losses by $\ell_t=1-X_t$.

\begin{proposition}[Restarting a prior hybrid algorithm]
\label{prop:itoreduction}
Restarting the algorithm of \citet{ito2021hybrid} at fixed period $H$
gives
\begin{equation}
 \E[R_T]=\widetilde O\!\left(
 \sqrt{A\lceil T/H\rceil Q}+A\lceil T/H\rceil+HB\right)
 \label{eq:itoreduction}
\end{equation}
on $\mathcal E(A,T;Q,B)$. In particular, tuning $H$ with the
supplied budgets gives the same expected rate as
Theorem~\ref{thm:noisyoracle}.
Each block of length $h\ge\max\{A,3\}$ uses a fresh hybrid copy with horizon
$h$; on shorter blocks, play any arm. This defines the policy also
when a block contains a single round.
\end{proposition}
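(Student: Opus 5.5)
The plan is to analyze each restart block separately and then sum over blocks. Partition $[T]$ into $M=\lceil T/H\rceil$ consecutive blocks $I_1,\dots,I_M$ of length at most $H$. On a short block ($h<\max\{A,3\}$), bound the regret trivially by $h\le A+2$, which is $O(A)$ per block and contributes $O(AM)$ in total. On a long block, apply \eqref{eq:itohybridexplicit} with losses $\ell_t=1-X_t$ and the analytical reference $u_{t,a}=1-\mu_{t,a}$, which lies in $[0,1]^A$. The comparator is a fixed arm $a_I$ optimal at the first round of $I$. We need the pseudo-regret, so we first convert the realized-loss regret against $a_I$ into mean regret. Since the block's algorithm is nonanticipating and rounds are independent, $\E[\ell_{t,a_t}-\ell_{t,a_I}]=\E[\mu_{t,a_I}-\mu_{t,a_t}]$ by the tower property over the refined filtration. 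This gives an expected static pseudo-regret bound against $a_I$.

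Next we bound $\E[\mathcal Z_h(u)]$. Its residual term has expectation $\E[\sum_{t\in I}\sigma_{t,a_t}^2]\le\sum_{t\in I}q_t$ with $q_t=\max_a\sigma_{t,a}^2$. The path term equals $\sum_t\|\mu_{t+1}-\mu_t\|_1\le A\Delta_I$. Hence the block bound is $C_0\bigl(A\log(eh)+\sqrt{A\log(eh)(Q_I+A\Delta_I)}\bigr)$, where $Q_I=\sum_{t\in I}q_t$. We split the square root as $\sqrt{AQ_I\log}+A\sqrt{\Delta_I\log}$. For the second piece, AM--GM gives $A\sqrt{\Delta_I\log(eh)}\le A\log(eh)+h\Delta_I$ when $h\ge A$; this is where the long-block condition is used. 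Replacing the fixed comparator $a_I$ by the dynamic optimum costs at most $2h\Delta_I$, since $\mu^\star_t-\mu_{t,a_I}\le 2\Delta_I$ on the block. Each long block therefore contributes $\widetilde O(A+\sqrt{AQ_I}+h\Delta_I)$.

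Summing over blocks, Cauchy--Schwarz gives $\sum_I\sqrt{AQ_I}\le\sqrt{AM\sum_IQ_I}\le\sqrt{AMQ}$, since $\sum_I Q_I=V_T\le Q$. The initialization terms sum to $O(AM\log(eT))$. By \eqref{eq:boundary}, $\sum_I h\Delta_I\le H\sum_I\Delta_I\le HB$. Together these give \eqref{eq:itoreduction}.

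For the tuning claim, choose $H=\min\{T,\lceil\max\{1,(AQT/B^2)^{1/3},\sqrt{AT/B}\}\rceil\}$, handling $B=0$ by $H=T$. The same case analysis as in the proof of \eqref{eq:clippedperiodbound} with $w=1$ yields $\widetilde O(A+\sqrt{AQ}+(AQBT)^{1/3}+\sqrt{ABT})$.

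The main obstacle is justifying the block-level invocation when blocks are restarted. The fresh copy in block $I$ must see an environment whose future laws are independent of the earlier history, so that \eqref{eq:itohybridexplicit} holds conditionally at the block start. The oblivious, independent-round model guarantees this, so the conditional bound holds and its right side is deterministic given $I$. A second point to check is that the static theorem's guarantee is stated against an arm fixed in advance rather than a data-dependent one. Our $a_I$ depends only on the fixed environment, so this is legitimate. The remaining steps are routine bookkeeping of logarithmic factors, all bounded by $\log(eT)$.
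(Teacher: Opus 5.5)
Your proposal is correct and takes essentially the paper's route: one hybrid bound per block with reference $1-\mu_t$, absorbing $A\sqrt{\Delta_I}$ using $h\ge A$, the $2h\Delta_I$ comparator conversion, trivial short blocks, and Cauchy--Schwarz over blocks (bounding the residual by the deterministic envelope $Q_I$ instead of $\E[\Lambda_I]$ is a harmless simplification). One small repair in the tuning step: reusing the calculation behind \eqref{eq:clippedperiodbound} with $w=1$ relies on $2B\le A$, which needs $B<1$, so for $B\ge1$ absorb the rounding term via $B\le\sqrt{ABT}$, as in Corollary~\ref{cor:oracle}.
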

\begin{proof}
For a fixed interval $I$ of length $h\ge\max\{A,3\}$, set $u_t$ equal to the
mean loss vector $1-\mu_t$. The squared residual in
\eqref{eq:itohybrid} has expectation $\E[\Lambda_I]$, and the
path term is at most $A\Delta_I$. Converting static regret against
an arm optimal at the start of $I$ to dynamic regret costs at most
$2h\Delta_I$. Thus a fresh run has expected dynamic regret
\[
 \widetilde O\!\left(A+\sqrt{A\E[\Lambda_I]}
                         +A\sqrt{\Delta_I}+h\Delta_I\right).
\]
If $h\ge A$, then $A\sqrt{\Delta_I}\le A+h\Delta_I$.
For a shorter block, the trivial $R_I\le h\le A$ suffices. Therefore the
interval bound is
$\widetilde O(A+\sqrt{A\E[\Lambda_I]}+h\Delta_I)$.
Conditional on the history at a fixed block start, the oblivious
environment and a fresh algorithm seed justify the same bound.
Sum over blocks and use
$\sum_I\E[\Lambda_I]\le Q$, Cauchy--Schwarz and
$\sum_I h_I\Delta_I\le HB$ to prove \eqref{eq:itoreduction}.
The period in Corollary~\ref{cor:oracle} gives the last claim.
\end{proof}

This reduction is an inference from the cited theorem, not a dynamic
regret statement quoted verbatim from that paper. It does not by itself give the small-budget clipping refinement in
Section~\ref{sec:smallbudget}, the path-dependent high-probability theorem in
Appendix~\ref{sec:base}, or an unknown-budget scheduler. We therefore use ordinary restarting as a prior-work baseline for
the calibration and clipping refinement.

\section{Related work}
\label{sec:related}
Our comparison separates the comparator, feedback, variation
convention, variance convention and supplied information. Changing
any of these can invalidate a direct comparison of displayed rates.

Table~\ref{tab:priorrates} summarizes the closest finite-arm rate
comparisons. In its first row,
$V_0=\|1-\mu_1\|_\infty+\Delta$ is variation for losses with an
initial zero vector, and $W_T$ is the full-vector variance in
\eqref{eq:variances}. In particular, $V_0\le1+B$ and $W_T\le AQ$
on our class, but substituting these inequalities does not identify
the two budget conventions. The second row is a reduction carried
out here from a cited static theorem, not a dynamic theorem quoted
verbatim from that source.
\begin{table}[H]
\centering\small
\caption{Closest finite-arm guarantees. Upper rates suppress logarithms
and the trivial horizon cap. All rows compare to the best mean arm at
each round; the supplied information and budget conventions differ.}
\label{tab:priorrates}
\begin{tabularx}{\textwidth}{@{}p{0.23\textwidth}X p{0.28\textwidth}@{}}
\toprule
Source & Rate & Scope and attribution \\
\midrule
\citet{wei2016tracking}, Theorem 3.1 &
$(A^2W_TV_0T)^{1/3}+\sqrt{AV_0T}$ &
Supplied full-vector variance and initial-inclusive variation; prior result. \\
\addlinespace
\citet{ito2021hybrid}, Theorem 3, plus restarting &
$A+\sqrt{AQ}+(AQBT)^{1/3}+\sqrt{ABT}$ &
Supplied $Q,B$; direct reduction in Proposition~\ref{prop:itoreduction}. \\
\addlinespace
Theorem~\ref{thm:jointminimax} &
$A+\sqrt{AQ}+(AQBT)^{1/3}+\sqrt{AT}\min\{B,\sqrt B\}$ &
Supplied $Q,B$; matching lower bound up to logs. The noisy refinement is for $B<1$. \\
\bottomrule
\end{tabularx}
\end{table}

\paragraph{Variation budgets and the deterministic boundary.}
The stochastic variation-budget theory of
\citet{besbes2014,besbes2019} yields the familiar cubic rate.
MOSS-based resetting, sliding-window and discounted policies are
studied by \citet{weisrivastava2024}.
These stochastic models permit nonzero observation noise, so their minimax lower bounds
do not establish the minimax value of the deterministic subclass.
\citet{karnin2016} already studied adaptation to unknown variation,
including an intermediate noiseless construction. Its supplementary
Theorem~A.1 already gives a prior-free deterministic guarantee of
$O(\log T\,[\sqrt{AT}+A^{1/3}T^{2/3}\Delta^{1/3}])$.
Supplementary Appendix~D outlines the general-arm noisy extension but
explicitly defers its full analysis. Thus neither
``non-stationarity with zero noise'' nor ``unknown variation'' is a
new problem formulation. In \citet{wei2016tracking}, Theorem~3.1
uses full-vector statistical variance and variation including the
initial loss vector. Its deterministic specialization already
recovers the large-budget scale, up to logarithmic factors.

There is also a short derivation of the small-budget upper rate from
that earlier result. After the filter in Lemma~\ref{lem:filter},
every retained arm's future reward lies in
$[y_{\max}-3B,y_{\max}+B]$. If $0<B\le1$, transform these rewards
by $x\mapsto(x-y_{\max}+3B)/(4B)$. The transformed rewards are
in $[0,1]$, have consecutive variation at most $1/4$, and have
initial-vector variation at most $1$. Applying the deterministic
specialization of \citet{wei2016tracking} on the remaining horizon
and undoing the scaling gives $\softO(B\sqrt{AT})$, in addition
to the $A$ initialization cost. This is a derivation in this manuscript,
not a theorem attributed verbatim to that paper. Theorem~\ref{thm:detminimax}
provides an elementary logarithm-free upper bound and an explicit
oblivious lower bound throughout the budget range. The resulting deterministic scale is therefore compatible with the
earlier upper bound; the calibration and clipping argument is needed
to handle positive noise in the small-budget region.

\paragraph{Static path-length regret is a different guarantee.}
\citet{wei2018adaptive,bubeck2019path} obtain adaptive static-regret
bounds in terms of realized loss variation. In particular, Theorem~2 of \citet{bubeck2019path} gives
$\softO(A+\sqrt{A P_\infty})$ when an upper bound on $P_\infty$ is supplied, where
$P_\infty=\sum_{t=2}^T\|\ell_t-\ell_{t-1}\|_\infty$ for a
deterministic loss array. This compares to a fixed arm, whereas our
$R_T$ compares to a possibly different best arm every round.
For stochastic observations, realized path length and mean variation
also differ: independent Bernoulli$(1/2)$ rewards have $\Delta=0$,
but already one arm contributes expected path length $(T-1)/2$.
These static-comparator bounds do not directly imply a variance-sensitive
dynamic-regret guarantee without supplied budgets.

\paragraph{Prior-free tracking and significant shifts.}
ADSWITCH \citep{auer2019} adapts to distribution changes, and
the contextual replay approach of \citet{chen2019} simultaneously
controls switching and distributional total variation.
The latter's variation uses total variation distance between laws,
which need not be comparable from above by our mean variation.
MASTER \citep{wei2021master} supplies a broad reduction; its
Section~5 identifies difficulties with data-dependent base guarantees.
META \citep{suk2022} goes further by tracking significant arm shifts.
Its Theorem~1 and Corollaries~1--2 imply, on one and the same policy,
\[
 R_T=\softO\!\left(\min\left\{
 \sqrt{A(\widetilde S+1)T},\;
 \sqrt{AT}+(A\Delta)^{1/3}T^{2/3}\right\}\right),
\]
where $\widetilde S$ counts significant shifts and is no larger than
the number of best-arm switches. ArmSwitch \citep{abbasi2023}
independently adapts to best-arm switches and gives a variation bound
for a modified elimination condition. These are stronger comparators
than merely counting all mean changes. Their unit-range variation
term has the scale $(A\Delta)^{1/3}T^{2/3}$, corresponding to the
variance-sensitive cubic term $(AV_T\Delta T)^{1/3}$ when
$V_T=\Theta(T)$. Achieving low-noise dependence together with
adaptation to unknown budgets remains open.

\paragraph{Exploration--detection tradeoffs and alternative objectives.}
The inverse stationary-regret tradeoff is classical. We cite both the
published version \citep{garivier2011} and the detailed preprint
\citep{garivier2008}, whose Theorem~13 uses finite-KL stochastic
pulse alternatives. That theorem does not apply verbatim to disjoint
deterministic reward laws. Our coupling proof treats that boundary
and gives a matching policy on a disclosed family; it is not a claim
to have discovered the exploration--detection obstruction.
The polynomial consequence in Corollary~\ref{cor:competitive} also
follows from this obstruction and the oracle benchmark.
Theorem~31.2 of \citet{lattimore2020} gives a particularly direct
two-arm Gaussian pulse formulation of the inverse-regret tradeoff.
\citet{marinov2021pareto} study model-selection frontiers, with an
adaptive-adversary switching obstruction. \citet{qian2026} achieve
simultaneous static and dynamic bounds for deterministic losses,
an oblivious adversary and a supplied switch-count bound; their
stationary static guarantee has scale $\sqrt{AT}$.
Satisficing regret \citep{zhang2026satisficing} compares against a
threshold, rather than the best arm, and is a different objective.
These results do not contradict our small-stationary-regret tradeoff.

\paragraph{Variance estimation and detection frameworks.}
Variance-based stochastic and best-of-both-worlds methods include
\citet{audibert2009,ito2021hybrid,ito2022}. A static guarantee does not
alone identify the dynamic rate. Proposition~\ref{prop:itoreduction}
gives the explicit reduction for \citet{ito2021hybrid}, rather than
classifying it as unrelated merely because its comparator is fixed.
The empirical-variance/restarting template is also present in
\citet{wei2016tracking}, Supplement~B.1; retaining per-arm residual
sums before Cauchy--Schwarz refines the coarse aggregation there. In the linear setting,
\citet{wang2025variance} give variance-sensitive restart algorithms
and a bandits-over-bandits variant, as well as a variance-dependent
linear-bandit lower bound (Theorem~4.1 and Appendix~C). The drift norm, dimension and
tuning information differ from our finite-arm model.
This distinction is quantitative: their Corollary~5.4 is stated for
variance and parameter-variation budgets bounded below by constants,
and Remark~5.6 discusses the rate loss in the finite-arm specialization.
Thus that paper is a direct variance-sensitive comparison, including
its finite-arm case, rather than being excluded solely by its linear
formulation.
GLR-klUCB \citep{besson2022}, DAB \citep{huang2026dab} and
DAL \citep{gerogiannis2026dal} provide practical detection/restart
frameworks. Their stated performance theorems impose explicit
spacing or detector-data conditions. For example, DAL's
Theorem~4.8 assumes Assumption~4.6. The phrase ``prior free''
therefore does not by itself mean a uniform guarantee over arbitrary
drifting arrays. Our comparison uses the AISTATS 2025 version of
\citet{wang2025variance}, the ICML 2026 version of DAL, and v2
of \citet{qian2026} dated 17 February 2026.
Earlier multiscale detection \citep{ding2020multiscale} similarly
uses explicit detectability conditions (Assumption~4 and Theorem~3
in its finite-arm specialization).

\paragraph{Structural promises and other observation models.}
ADR-bandit \citep{komiyama2024global} preserves logarithmic stationary
regret and handles coordinated global changes. Its Definitions~19--20
and Theorem~21 require detectable changes mirrored across arms;
our hidden-improvement alternatives keep the incumbent unchanged.
Thus its simultaneous positive result is compatible with our lower
bound. Recent misspecification reductions \citep{hu2026misspecification}
address round-specific linear action sets and supplied parameter path
length, rather than a noise-adaptive unknown-budget finite-arm theorem.
For heavy-tailed observations, \citet{genalti2026} give a Catoni-style
detector and a piecewise-stationary regret analysis under
Assumption~4.7. Their moment budget and separation conditions differ
from our bounded-reward variance-envelope class. These distinctions
are part of the comparison, not grounds for omitting adjacent work.

Smoothness and gap structure provide further forms of side information.
\citet{slowly2025} use a supplied per-round drift limit;
\citet{manegueu2021generalized} analyze controlled changes of gaps
and the best mean, with tuning parameters supplied to their algorithm.
\citet{suk2025smooth} adapt to unknown H\"older smoothness and give
faster gap-dependent rates under a safe-environment condition.
In the public v2, Definition~5 and Theorem~7 impose the existence of
an arm whose cumulative gaps remain below a sampling threshold on
every interval. A hidden improvement with long pre- and post-change
segments can violate this condition, even at zero observation variance.
These gap-dependent positive results are therefore relevant structural
comparisons rather than uniform guarantees for our entire class.
Finally, \citet{chen2025bridging} interpolate between static and
dynamic oracles using a switch-budget-constrained arm sequence.
Their Algorithm~2 receives both switch and variation budgets;
its two-regime bound concerns that comparator and does not supply
unknown-budget low-noise adaptation.

\begin{proposition}[Sampling variance survives deterministic rewards]
\label{prop:iwvariance}
For an arm sampled with predictable probability $p>0$, let
$Z=\ind\{a_t=a\}X_{t,a}/p$. Conditional on the pre-round history,
\begin{equation}
 \Var(Z)=\frac{\sigma_{t,a}^2}{p}
          +\mu_{t,a}^2\left(\frac1p-1\right).
 \label{eq:iwvariance}
\end{equation}
More generally, with a predictable baseline $b$, replacing $X_{t,a}$
by $X_{t,a}-b$ replaces $\mu_{t,a}^2$ by $(\mu_{t,a}-b)^2$.
\end{proposition}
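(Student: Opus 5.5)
The statement is a direct second-moment computation, and I would carry it out conditionally on the pre-round history $\cF_{t-1}$, extended by any private randomness that fixes $p$ but not the action. Because $p$ is predictable, it is a constant under this conditioning. The indicator $\ind\{a_t=a\}$ is Bernoulli$(p)$. Under the model of Section~\ref{sec:model}, the reward $X_{t,a}$ is drawn from $\nu_{t,a}$ independently of the action choice in round $t$: the action randomization is revealed before the reward, and within-round rewards do not depend on the selected arm. So I would first record that $\ind\{a_t=a\}$ and $X_{t,a}$ are conditionally independent. I would also note that $Z$ is well defined even though $X_{t,a}$ is observed only when $a_t=a$, because $Z=0$ otherwise.

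Next I would compute the two moments. Since $\ind^2=\ind$, conditional independence gives
\[
 \E[Z]=\frac{p\,\mu_{t,a}}{p}=\mu_{t,a},\qquad
 \E[Z^2]=\frac{p\,\E[X_{t,a}^2]}{p^2}=\frac{\sigma_{t,a}^2+\mu_{t,a}^2}{p}.
\]
Subtracting the squared mean gives
\[
 \Var(Z)=\frac{\sigma_{t,a}^2}{p}+\mu_{t,a}^2\Bigl(\frac1p-1\Bigr),
\]
which is \eqref{eq:iwvariance}. For the baseline version, I would apply the same computation to $X_{t,a}-b$. Since $b$ is predictable, it is constant given the history. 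Shifting by $b$ leaves the variance $\sigma_{t,a}^2$ unchanged and replaces the mean $\mu_{t,a}$ by $\mu_{t,a}-b$, so $\mu_{t,a}^2$ becomes $(\mu_{t,a}-b)^2$.

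The only delicate step is justifying the conditional independence of the indicator and the reward. I would handle it with the filtration convention already stated in Section~\ref{sec:model} (oblivious laws, rounds independent, action randomness revealed first). Once that is in place, the remaining algebra is routine. The interpretive point is that with $\sigma_{t,a}=0$ the variance is still $\mu_{t,a}^2(1/p-1)$, which is nonzero unless $p=1$ or the baseline $b$ equals $\mu_{t,a}$.
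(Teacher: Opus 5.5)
Your proposal is correct and follows the paper's proof: condition on the history before the action's fresh private randomness, treat $X_{t,a}$ as a potential draw from $\nu_{t,a}$, compute the first moment $\mu_{t,a}$ and second moment $(\sigma_{t,a}^2+\mu_{t,a}^2)/p$, subtract, and repeat for the centered case. The one difference is that you state explicitly the conditional independence of $\ind\{a_t=a\}$ and $X_{t,a}$, which the paper leaves implicit in its ``potential draw'' convention.
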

\begin{proof}
Here the history is taken before drawing the action's fresh private
randomness, and $X_{t,a}$ denotes a potential draw from $\nu_{t,a}$.
The conditional first moment of $Z$ is $\mu_{t,a}$ and its
second moment is $(\sigma_{t,a}^2+\mu_{t,a}^2)/p$.
Subtracting the square of the first moment proves the identity.
The centered case is the same calculation.
\end{proof}
This elementary identity explains a technical obstacle to directly
substituting a reward-variance bonus into an importance-weighted replay
test: zero reward noise does not imply zero variance of its estimator.
Centering can help, but controlling its residual and the sampling
schedule requires a separate argument.

\paragraph{Reward ranges, clipping and statistical variation.}
Range adaptation is an established issue. \citet{hadiji2023range}
show a tradeoff when the support interval of stationary rewards is
unknown (Theorem~3). Our original support $[0,1]$ is known;
calibration certifies a \emph{mean} interval using a supplied drift
budget, after which the learner chooses the clipped-feedback support.
This does not infer unknown reward-support endpoints.
\citet{gerchinovitz2016} show that static adversarial regret cannot
scale with the across-arm effective loss range alone (Corollary~4).
Here the total mean path is constrained and clipping bias is charged
to the variance envelope; a small instantaneous range alone would
not justify our guarantee. The quadratic-variation result of
\citet{bubeck2018sparsity}, Theorem~2, concerns realized loss vectors
and a fixed comparator. These conventions differ from observation
variance about a drifting mean. The hybrid theorem of
\citet{ito2021hybrid} is the explicit static ingredient in our upper
bound. Calibration, clipping and balancing its bias are the additional
steps in the supplied-small-budget argument, rather than a new
static-regret regularizer. A separate work of
\citet{ito2021semi} extends hybrid data-dependent guarantees to
combinatorial semi-bandits and adversarial linear bandits. Its static
comparison and realized loss-vector quadratic variation and path length
are distinct from our dynamic comparator and reward-noise envelope;
it is not the COLT theorem invoked in \eqref{eq:itohybridexplicit}.

Earlier detection algorithms also address robustification and
monitoring. CD-UCB \citep{liu2018cd} uses a change-detection wrapper;
its CUSUM guarantees impose segment-length, detectability and Bernoulli
conditions. M-UCB \citep{cao2019mucb}, Assumption~1 and Theorem~1,
requires a window-dependent separation and a detectable mean shift.
\citet{mukherjee2019} analyze gap-dependent bounds under global,
separated changes. \citet{han2024forced} study forced-exploration
schedules with a gap-dependent sub-Gaussian analysis.
RCD-UCB \citep{wang2021robust} already uses truncated loss in a
robust detector; its Assumption~2.1 and Theorem~2.2 retain separation
and detectability conditions. Thus truncation itself is not a new
bandit technique. Our clipping step bounds the regret of a restarted
learner by using original means as an analytical reference; it does
not attempt to detect individual change points.

\paragraph{Directly observed mean tracking.}
The Anytime Tracking CUSUM method of \citet{gafni2026cost} studies
multiple change points without minimum spacing or jump-size assumptions.
Its Section~2, however, provides a stream observation every round and
measures cumulative squared mean-prediction error, with a known
sub-Gaussian proxy. There is no choice of which arm to observe.
This is an important adjacent monitoring result, but its guarantee
cannot be substituted for a selected-arm dynamic-regret bound.
Its Theorem~4.1 also retains a mean-diameter contribution in addition
to its variance terms; we do not interpret it as a zero-cost guarantee
at zero noise.

\paragraph{Summary of the comparison.}
Section~\ref{sec:related} and Appendix~\ref{app:comparison} distinguish
reward-noise variance, realized loss variation, and the side information
available to each policy. The ordinary expected restart bound is a
consequence of the prior static hybrid theorem. The small-budget
refinement additionally uses calibration, clipping and bias control.

\section{Conclusion}
We give regret bounds across the full range of supplied variance and
mean-variation budgets. The deterministic minimax rate is matched up to
universal constants; for positive variance, the upper and lower bounds
differ by at most one logarithmic factor. Safe calibration and clipping
provide the small-budget refinement of the ordinary restart guarantee.
On a disclosed deterministic one-improvement family, we characterize
the stationary/change-regret frontier and its extension to one finite
pulse. The same hard instances imply a general lower bound on adaptation
to unknown variation budgets. Removing the logarithmic gap in the noisy
minimax bound, and obtaining matching adaptation frontiers for unrestricted
noisy environments, remain open.

\appendix
\section{Oblivious deterministic lower bounds}
\label{app:detlower}
The following constructions prove the lower bound in
Theorem~\ref{thm:detminimax}. The mixtures are fixed before interaction.

\subsection{A fixed hard distribution}
Our lower bound uses a distribution fixed before interaction. This
avoids selecting an epoch's environment after observing the learner's
realized history.

\begin{lemma}[One-block hidden-improvement mixture]
\label{lem:mixture}
Fix $h\ge A\ge2$ and $g\in(0,1/2]$. In a block of $h$ rounds,
use baseline rewards $1/2$ for arm $1$ and $1/2-g$ for every
challenger. With probability $1/2$ keep this baseline throughout.
Otherwise choose a challenger uniformly and a change time uniformly
from $\{2,\ldots,h\}$, and change that challenger to $1/2+g$
from that time onward. For every policy, even with the baseline
disclosed in advance and any initial private state independent of this
block's random draw,
\begin{equation}
 \E[R_{\rm block}]\ge\frac g{40}\sqrt{Ah}.
 \label{eq:mixture}
\end{equation}
Every supported array has internal variation at most $2g$.
\end{lemma}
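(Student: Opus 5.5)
The plan is to lower bound the mixture regret by coupling every alternative with the baseline trajectory, driven by the same private random tape. The resulting quantity depends only on how often the policy pulls each challenger on the baseline. Write $n=A-1$. Condition first on the initial private state and on the private random tape. Both are independent of the block's draw, so it suffices to prove the bound for a deterministic policy and then average. The internal-variation claim is immediate: an alternative changes one arm once, by $2g$, and the baseline has no change.

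\emph{Step 1 (baseline cost).} Let $k_a$ be the number of pulls of challenger $a$ in rounds $2,\ldots,h$ along the baseline trajectory, and let $N=\sum_a\E[k_a]$. Each challenger pull costs $g$ on the baseline. The baseline has weight $1/2$ in the mixture, so it contributes at least $gN/2$ to the expected block regret.

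\emph{Step 2 (detection delay via coupling).} Fix a challenger $a$ and a change time $\tau$. Rewards are deterministic, and arm $a$'s reward differs from the baseline only from $\tau$ on. Hence the alternative's history coincides with the baseline history until the first pull of $a$ at a round $\ge\tau$. Call that round $\sigma_a(\tau)$, with $\sigma_a(\tau)=h+1$ if there is none. On each round of $[\tau,\sigma_a(\tau))$, arm $a$ is optimal with reward $1/2+g$ and is not pulled, so the instantaneous regret is at least $g$. Thus the regret on this alternative is at least $g(\sigma_a(\tau)-\tau)$. The baseline pull times of $a$ cut $\{2,\ldots,h\}$ into at most $k_a+1$ gaps. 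For $\tau$ in a gap of length $m$, the quantity $\sigma_a(\tau)-\tau$ runs over $m,m-1,\ldots,1$, so its sum over that gap is at least $m^2/2$. Since the gap lengths total $h-1$, Cauchy--Schwarz gives
\[
 \sum_{\tau=2}^h(\sigma_a(\tau)-\tau)\ge\frac{(h-1)^2}{2(k_a+1)}.
\]
Averaging over the uniform $\tau$ gives at least $g(h-1)/(2(k_a+1))$. Jensen's inequality for the convex map $x\mapsto1/(x+1)$ then yields $g(h-1)/(2(\E k_a+1))$. This is the fixed-arm bound that Theorem~\ref{thm:adapt} reuses, with room to spare in the constant. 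Averaging over the uniform challenger and applying Jensen again, using $\sum_a\E k_a=N$, gives an alternative contribution of at least
\[
 \frac12\cdot\frac{g\,n(h-1)}{2(N+n)}.
\]

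\emph{Step 3 (balancing).} This is the only delicate step, because near $h=A$ the naive minimization $\sqrt{n(h-1)}-n$ can degenerate. Put $u=\sqrt{n(h-1)}$ and note $u\ge n$ since $h-1\ge n$. Split into two cases. If $N+n\le2u$, the alternative term is at least $gu^2/(8u)=gu/8$. Otherwise $N>2u-n\ge u$, and the baseline term is at least $gu/2$. In both cases the expected block regret is at least $gu/8$. Finally, $n\ge A/2$ and $h-1\ge h/2$ give $u\ge\sqrt{Ah}/2$, so the regret is at least $g\sqrt{Ah}/16$, which exceeds $g\sqrt{Ah}/40$. The main point to verify carefully is Step 2's coupling. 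Specifically, the detection round $\sigma_a(\tau)$ must be determined by the baseline trajectory and the tape alone, so the two Jensen steps are legitimate even though the learner's randomization may correlate the $k_a$ across challengers. That dependence does no harm here, because Jensen is applied separately to each $\E k_a$ and then to their average.
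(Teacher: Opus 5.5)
Your Step 2 contains a false inequality. A round at which the baseline trajectory pulls $a$ has zero delay: if $a$ is pulled at round $\tau$ on the baseline, the coupled alternative also pulls it there, so $\sigma_a(\tau)=\tau$. The delays do run over $m,m-1,\ldots,1$ on the maximal runs of non-pulled rounds. However, those runs have total length $h-1-k_a$, not $h-1$. Your displayed bound therefore fails; for example, if $a$ is pulled at every round $2,\ldots,h$, the left side is $0$. With $q=h-1$, Cauchy--Schwarz actually gives
\[
 \sum_{\tau=2}^h(\sigma_a(\tau)-\tau)\ge\frac{f(k_a)}{2},\qquad f(x)=\frac{(q-x)^2}{x+1}.
\]
The numerator now depends on $k_a$, so convexity of $x\mapsto1/(x+1)$ no longer justifies your Jensen steps. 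You need $f$ to be convex on $[0,q]$, which gives average alternative regret at least $\frac{g}{2q}f(N/n)$. For the fixed-arm version you also need $f$ decreasing. For the same reason, your fixed-arm bound $g(h-1)/(2(\E k_a+1))$ is not valid as stated. It needs a side condition such as $\E k_a\le v\le q/2$, which is exactly why the paper states its fixed-arm bound $gq/[8(v+1)]$ only for $v\le q/2$.

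The balancing step must then keep $N/n$ away from $q$, and your split $N+n\le2u$ does not do this. Take $A=h=2$, so $n=q=u=1$. Then $N=1$ satisfies $N+n\le2u$, yet $f(N/n)=0$, so the alternative term gives nothing in the case where you rely on it. Split instead on whether $N\ge u/4$.
\begin{itemize}
\item If $N\ge u/4$, the baseline half contributes $gu/8$.
\item Otherwise $N/n<\tfrac14\sqrt{q/n}\le q/2$, so $(q-N/n)^2\ge q^2/4$. The delay bound is then at least $gq/(8(N/n+1))=gnq/(8(N+n))$, which is at least $gu/10$ because $n\le u$. The alternative half contributes $gu/20$.
\end{itemize}
Since $u\ge\sqrt{Ah}/2$, this gives the stated $g\sqrt{Ah}/40$. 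With this repair your argument coincides with the paper's. Your reduction to a deterministic policy by conditioning on the tape is a harmless substitute for the paper's Jensen step over $U$. Your claimed constant $1/16$, however, is not established by the corrected argument.
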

\begin{proof}
Set $n=A-1$ and $q=h-1$. On the baseline, let $N_a$ count pulls of
challenger $a$ in rounds $2,\ldots,h$, and let
$N=\sum_{a\ge2}\E[N_a]$. Its expected regret is at least $gN$.
For a fixed challenger and a fixed policy seed, its baseline trajectory
has $N_a+1$ gaps containing no pull of that arm, of lengths $d_j$ with
$\sum_jd_j=q-N_a$. A change in a gap is unobserved until the next
pull, and the coupled actions remain the same. The sum of detection
delays over change positions in that gap is $d_j(d_j+1)/2$.
Each undetected round costs at least $g$.

Let $U$ be the learner's private random tape and initial private state,
independent of the challenger and change-time draw. Write $R^{a,t}_{\rm block}$
for regret on the alternative changing challenger $a$ at time $t$.
The function $f(x)=(q-x)^2/(x+1)$ satisfies, on $[0,q]$,
\[
 f'(x)=1-\frac{(q+1)^2}{(x+1)^2}\le0,\qquad
 f''(x)=\frac{2(q+1)^2}{(x+1)^3}>0.
\]
For a fixed challenger, Cauchy--Schwarz on its baseline gap lengths
and then Jensen over $U$ give
\[
 \frac1q\sum_{t=2}^h\E_U[R^{a,t}_{\rm block}]
 \ge\frac g{2q}\E_U[f(N_a(U))]
 \ge\frac g{2q}f(\E_U[N_a]).
\]
The first inequality drops the nonnegative linear part of
$\sum_jd_j(d_j+1)$. In particular, for any fixed arm with
$\E_U[N_a]\le v\le q/2$, monotonicity gives a bound of
$gq/[8(v+1)]$. Averaging instead over the independent uniform
challenger and applying Jensen again bounds average alternative regret by
\begin{equation}
 \frac g{2q}\,
 \frac{(q-N/n)^2}{N/n+1}.
 \label{eq:mixture-delay}
\end{equation}
Let $u=\sqrt{nq}$. If $N\ge u/4$, the baseline half of the mixture
already contributes $gu/8$. Otherwise $N/n<q/2$ and
\eqref{eq:mixture-delay} is at least
\[
 \frac{gq}{8(N/n+1)}
 =\frac{gnq}{8(N+n)}\ge\frac{gu}{10},
\]
because $n\le q$ and $N<u/4$. The alternative half contributes at
least $gu/20$. Finally $n\ge A/2$ and $q\ge h/2$ give
$u\ge\sqrt{Ah}/2$, proving \eqref{eq:mixture}.
\end{proof}

\begin{proof}[Lower bound in Theorem~\ref{thm:detminimax}]
First, a stationary array with one uniformly random arm having reward
$1$ and all others reward $0$ forces expected regret at least
$(\min\{A,T\})/4$. Under the uniform choice of the hidden best arm,
the probability that it has been tested by round $t\le A$ is at most
$t/A$: until it is found, every observed reward is zero and at most
$t$ distinct arms can have been tried. Pulling the optimal arm in round
$t$ implies that it has been tested by that round. Hence
\[
 \E[R_T]\ge\sum_{t=1}^{\min\{A,T\}}(1-t/A)
 \ge\min\{A,T\}/4.
\]
This family has zero variation.

If $T<4A$, this lower bound is already a constant fraction of the
horizon and suffices. Hence suppose $T\ge4A$. For $0<B\le1$,
use Lemma~\ref{lem:mixture} with $h=T$ and $g=B/2$. Its variation
is at most $B$ and its expected regret is at least
$B\sqrt{AT}/80$.

For $B\ge1$, let
\[
 m=\min\{\lfloor B\rfloor,\lfloor T/(2A)\rfloor\}.
\]
Partition the horizon into $m$ blocks whose lengths differ by at most
one, and draw the mixture in Lemma~\ref{lem:mixture} independently
in each block, with $g=1/4$. Every block starts from the same baseline.
Internal changes cost at most $1/2$ per block; resetting at a boundary
costs at most $1/2$. Total variation is therefore at most
$m-1/2\le B$. Every block has length at least $2A$.
Conditional on any previous history, its fresh draw is independent of
the learner's private state, so the lemma remains applicable.
The expected total regret is at least a constant times
\[
 \sum_{i=1}^m\sqrt{Ah_i}
 \ge m\sqrt{A\lfloor T/m\rfloor}
 \ge\sqrt{AmT/2}
 =\Omega\!\left(\min\{T,\sqrt{ABT}\}\right).
\]
Here $m\ge\tfrac12\min\{B,T/(2A)\}$. Combining the initial
exploration bound with the drift bound, using
$\max\{x,y\}\ge(x+y)/2$ and the horizon cap, proves the result.
The mixture is an oblivious distribution over admissible deterministic
arrays. Its Bayesian lower bound implies the required worst-case
lower bound for each policy.
\end{proof}

\section{Variance-controlled statistical lower bounds}
\label{app:noisylower}
This appendix proves the statistical terms in
Theorem~\ref{thm:jointminimax}; all supported environments satisfy
the supplied budgets separately.

\subsection{A variance-controlled hard block}
\begin{lemma}[Symmetric two-point hard block]
\label{lem:noisyblock}
Fix $h\ge A\ge2$, $s\in(0,1/2]$, and
\[
 0<\epsilon\le\min\{s/2,\;(s/4)\sqrt{A/h}\}.
\]
Choose an arm $J$ uniformly from $[A]$ before a block of $h$ rounds.
All rewards take values $1/2-s$ and $1/2+s$. The probability of
the upper value is $1/2+\epsilon/(2s)$ for $J$, and $1/2$ for
every other arm. Every policy, with any initial private state
independent of $J$, has Bayesian expected block regret at least
$\epsilon h/4$. Each arm's variance is at most $s^2$, and at least
one arm has variance exactly $s^2$ at every round.
\end{lemma}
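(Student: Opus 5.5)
Under $J=j$, each round's regret is $\epsilon$ whenever $a_t\ne j$. The Bayesian regret therefore equals $\epsilon\bigl(h-\frac1A\sum_j\E_j[N_j]\bigr)$, where $N_j$ counts pulls of arm $j$ in the block. So it suffices to show $\frac1A\sum_j\E_j[N_j]\le 3h/4$.

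\textbf{Moment checks.} Rewards are two-point on $\{1/2-s,1/2+s\}$, so they lie in $[0,1]$ since $s\le1/2$. The upper-value probability is $p=1/2+\epsilon/(2s)$, which lies in $[1/2,3/4]$ because $\epsilon\le s/2$. The mean is $1/2+s(2p-1)=1/2+\epsilon$ for $J$ and $1/2$ otherwise. The variance is $4s^2p(1-p)\le s^2$, with equality for every non-$J$ arm; since $A\ge2$, at least one arm attains $s^2$ at every round.

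\textbf{Change of measure.} Let $\Prb_0$ be the reference environment in which every arm has $p=1/2$. For each $j$, compare $\Prb_j$ with $\Prb_0$ on the block history, including the private state $U$ (independent of $J$, so it has the same law under both). Pinsker and the divergence decomposition give
\[
 \E_j[N_j]-\E_0[N_j]\le h\sqrt{\tfrac12\E_0[N_j]\,\mathrm{KL}(\tfrac12,p)}.
\]
Since $N_j/h\in[0,1]$, the total variation bounds the difference of expectations of a $[0,1]$ function. For Bernoulli laws,
\[
 \mathrm{KL}(1/2,p)\le\frac{(p-1/2)^2}{p(1-p)}\le\frac{16}{3}\Bigl(\frac{\epsilon}{2s}\Bigr)^2=\frac{4\epsilon^2}{3s^2},
\]
using $p\le3/4$, so $p(1-p)\ge3/16$.

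\textbf{Averaging over arms.} Average over $j$ and apply Cauchy--Schwarz with $\sum_j\E_0[N_j]=h$:
\[
 \frac1A\sum_j\E_j[N_j]\le\frac hA+h\sqrt{\frac{h}{2A}\cdot\frac{4\epsilon^2}{3s^2}}=\frac hA+h\cdot\frac{\epsilon}{s}\sqrt{\frac{2h}{3A}}.
\]
The hypothesis $\epsilon\le(s/4)\sqrt{A/h}$ makes the second term at most $h\sqrt{2/3}/4\le0.21h$. With $A\ge2$, the first term is at most $h/2$, so the total is at most $0.71h\le3h/4$. Hence the regret is at least $\epsilon h/4$.

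\textbf{Main obstacle.} The only delicate point is the constants: the KL bound must use $p\le3/4$, which comes from $\epsilon\le s/2$. If the constant were short, one could instead use the $\chi^2$ bound $\mathrm{KL}\le(p-q)^2/(q(1-q))$ with reference $q=1/2$, giving $\mathrm{KL}\le\epsilon^2/s^2$ and even more slack. The block-wise conditioning on previous history (the initial private state) is handled by including $U$ in the likelihood ratio, since its law does not depend on $J$.
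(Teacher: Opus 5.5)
Your proof is correct and follows the paper's argument step for step. It uses the exact regret identity $\epsilon(h-\E_j[N_j])$, the divergence decomposition against the all-$\tfrac12$ reference, the $\chi^2$ bound on the Bernoulli divergence using $p\in[\tfrac12,\tfrac34]$ (your $4\epsilon^2/(3s^2)$ is just a sharper form of the paper's $2\epsilon^2/s^2$), Pinsker, and Cauchy--Schwarz over arms to reach $\frac1A\sum_j\E_j[N_j]\le 3h/4$. The only slip is in your closing aside, which your main argument does not use: the $\chi^2$ bound with reference $\tfrac12$ controls $\mathrm{KL}(p\,\|\,\tfrac12)$, not the $\mathrm{KL}(\tfrac12\,\|\,p)$ that the $\E_0[N_j]$-weighted decomposition needs. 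The value $\epsilon^2/s^2$ is still valid here, since $\mathrm{KL}(\tfrac12\,\|\,p)=-\tfrac12\log(1-\epsilon^2/s^2)$ and $\epsilon\le s/2$.
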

\begin{proof}
Let $P_0$ be the transcript law with every upper-value probability
equal to $1/2$, and $P_j$ the law with arm $j$ improved. If $N_j$
counts pulls of $j$, the KL chain rule and the Bernoulli KL bound give
\[
 \operatorname{KL}(P_0,P_j)
 =\E_0[N_j]\operatorname{kl}\!\left(\tfrac12,
                              \tfrac12+\tfrac{\epsilon}{2s}\right)
 \le\frac{2\epsilon^2}{s^2}\E_0[N_j].
\]
The second probability lies in $[1/2,3/4]$. This justifies the displayed
bound, for example by
$\operatorname{kl}(p,q)\le(p-q)^2/[q(1-q)]$.
Pinsker's inequality, $0\le N_j\le h$, and Jensen's inequality yield
\[
 \frac1A\sum_j\E_j[N_j]
 \le\frac hA+\frac{h\epsilon}{sA}\sum_j\sqrt{\E_0[N_j]}
 \le\frac hA+\frac{h\epsilon}{s}\sqrt{\frac hA}
 \le\frac{3h}{4}.
\]
The expected regret under $P_j$ is $\epsilon(h-\E_j[N_j])$.
Averaging proves the claim. The baseline variance is $s^2$; the
improved variance is $s^2-\epsilon^2$. Independent initial
randomness can be included in the transcript without changing the
KL calculation.
\end{proof}

\subsection{The statistical lower bound}
\begin{proof}[Lower bound in Theorem~\ref{thm:jointminimax}]
The deterministic class is a subclass of $\mathcal E(A,T;Q,B)$.
Theorem~\ref{thm:detminimax} therefore supplies
$\Omega(\min\{T,A+D_B\})$.
For $A\le T$ and $Q>0$, apply Lemma~\ref{lem:noisyblock} once
with $h=T$, $s=\sqrt{Q/T}$ and
$\epsilon=(s/16)\sqrt{A/T}$. Its means are constant and its
variance envelope is exactly $Q$, so it supplies $\sqrt{AQ}/64$.
For $A>T$, the deterministic initialization lower bound already
controls the horizon cap.

It remains to establish the $C_B$ term when it exceeds each of
$A,S,D_B$; otherwise one of the previous lower bounds suffices.
If $T<16A$, initialization alone is a constant fraction of the
horizon, so assume $T\ge16A$. Necessarily $B,Q>0$. Set
\[
 x=\left(\frac{B^2T^2}{AQ}\right)^{1/3},\qquad
 m=\lfloor x\rfloor,\qquad s=\sqrt{Q/T},\qquad
 \epsilon=\frac{s}{16}\sqrt{\frac{Am}{T}}.
\]
From $C_B\ge S$ we have $x\ge1$, so $x/2\le m\le x$.
If $B\le1$, the condition $C_B\ge D_B=B\sqrt{AT}$ gives
$Q\ge B^2\sqrt{AT}$, so $x^3\le(T/A)^{3/2}$ and
$x\le\sqrt{T/A}\le T/(4A)$.
If $B\ge1$, the same condition gives $Q^2\ge ABT$ and hence
$x^3\le Q^3/A^3$, so $x\le Q/A\le T/(4A)$.
Thus, in both cases,
\[
 1\le m\le x\le T/(4A).
\]
Partition $[T]$ into $m$ fixed blocks with lengths $h_i$ differing
by at most one. Every $h_i\ge4A$, and $h_i\le2T/m$.
Independently choose a fresh best arm in each block and use the laws
in Lemma~\ref{lem:noisyblock} with the common $s,\epsilon$ above.
Indeed, $Am/T\le1/4$ gives $\epsilon\le s/32<s/2$, and
\[
 \frac{\epsilon}{(s/4)\sqrt{A/h_i}}
 =\frac14\sqrt{\frac{mh_i}{T}}
 \le\frac{\sqrt2}{4}<1.
\]
This verifies both conditions of Lemma~\ref{lem:noisyblock} for
every block, including unequal lengths.

The entire random array of reward laws is drawn before interaction.
In every supported environment an unimproved arm exists because
$A\ge2$, so the maximum variance in every round is $s^2$ and the
envelope is exactly $Ts^2=Q$. At a
block boundary the max-norm mean change is at most $\epsilon$,
so its total variation is at most
\[
 (m-1)\epsilon\le m\epsilon
 \le\frac{\sqrt{AQ}}{16T}\,x^{3/2}
 =\frac{\sqrt{AQ}}{16T}\frac{BT}{\sqrt{AQ}}
 =B/16\le B.
\]
Conditional on the past, the new best arm is independent of the
learner's state. Summing the block lower bounds therefore gives
\[
 \E[R_T]\ge\frac{\epsilon T}{4}
 =\frac1{64}\sqrt{AQm}
 \ge\frac1{64\sqrt2}(AQBT)^{1/3}.
\]
The Bayesian lower bound implies a worst-case lower bound for each
policy, even if the fixed block boundaries are disclosed. Combining
the three constructions and using the maximum to bound one quarter
of the sum proves $c\Xi$.
\end{proof}

\section{An observable variance-sensitive base learner}
\label{sec:base}
This appendix gives the noisy upper-bound ingredient used in the main text. Variance bonuses are motivated by UCB-V \citep{audibert2009} and empirical Bernstein bounds \citep{maurer2009}; the proofs below instead use martingale concentration \citep{freedman1975}. Fix a contiguous epoch
$I=[s,e]$. At the beginning of round $t$, let $n_{t,a}$ be the number
of observations of arm $a$ in $[s,t-1]$. For $n_{t,a}\ge1$, let
\begin{align}
 \wh\mu_{t,a}&=\frac1{n_{t,a}}
 \sum_{u=s}^{t-1}\ind\{a_u=a\}X_u,\\
 \wh v_{t,a}&=\frac1{n_{t,a}}
 \sum_{u=s}^{t-1}\ind\{a_u=a\}(X_u-\wh\mu_{t,a})^2,\\
 r_{t,a}&=16\sqrt{\frac{\wh v_{t,a}\ell}{n_{t,a}}}
       +\frac{256\ell}{n_{t,a}},\qquad
 U_{t,a}=\min\{1,\wh\mu_{t,a}+r_{t,a}\}.
 \label{eq:radius}
\end{align}
These constants simplify a self-contained proof and are not intended as
optimized practical tuning. When $n_{t,a}=0$, set $U_{t,a}=1$.
An unplayed arm is selected before taking a maximizer among played arms.
All scores use pre-round information, and $n=1$ is well defined.

\begin{algorithm}[tb]
\caption{Scheduled empirical-variance UCB (SEV-UCB)}
\label{alg:sev}
\begin{algorithmic}[1]
\Require Arms $[A]$, horizon $T$, confidence $\delta$, restart period
$H\in\{1,\ldots,T\}$
\State Compute $\ell$ as in \eqref{eq:logs}.
\For{$s=1,1+H,1+2H,\ldots$ with $s\le T$}
  \State Set $n_a=0$, $m_a=0$, $S_a=0$ for every arm $a$.
  \For{$t=s,\ldots,\min\{s+H-1,T\}$}
    \If{some arm has $n_a=0$}
      \State Select the smallest-index unplayed arm $a_t$.
    \Else
      \State Compute \eqref{eq:radius} using $\wh\mu_{t,a}=m_a$,
      $\wh v_{t,a}=S_a/n_a$.
      \State Select a maximizer of $U_{t,a}$, breaking ties by index.
    \EndIf
    \State Observe $X_t$; set $d\gets X_t-m_{a_t}$.
    \State $n_{a_t}\gets n_{a_t}+1$;
    $m_{a_t}\gets m_{a_t}+d/n_{a_t}$.
    \State $S_{a_t}\gets S_{a_t}+d(X_t-m_{a_t})$.
    \Comment{Welford update}
  \EndFor
\EndFor
\end{algorithmic}
\end{algorithm}
The learner uses $O(A)$ memory and $O(A)$ time per round. It requires
no means, variances, gaps, or change locations. Its schedule parameter
$H$ remains an input; it is not a fully prior-free dynamic-regret policy.

\subsection{A simultaneous martingale event}
Let $\varepsilon_t=X_t-\mu_{t,a_t}$. For every arm $a$ and interval
$J=[u,v]$, define
\[
 S_{J,a}=\sum_{t\in J}\ind\{a_t=a\}\varepsilon_t,\quad
 Q_{J,a}=\sum_{t\in J}\ind\{a_t=a\}\varepsilon_t^2,\quad
 B_{J,a}=\sum_{t\in J}\ind\{a_t=a\}\sigma_{t,a}^2.
\]
These are analytical quantities, not algorithmic inputs.

\begin{lemma}[Simultaneous concentration]
\label{lem:event}
For any fixed nonanticipating policy, with probability at least
$1-\delta$, simultaneously for every interval $J$ and arm $a$,
\begin{align}
 |S_{J,a}|&\le4\sqrt{B_{J,a}\ell}+4\ell,
 \label{eq:noisebound}\\
 Q_{J,a}&\le2B_{J,a}+\ell,\qquad
 B_{J,a}\le2Q_{J,a}+2\ell.
 \label{eq:squares}
\end{align}
On the same event,
\begin{equation}
 \left|\sum_{t=1}^T\varepsilon_t\right|
 \le4\sqrt{\Lambda_T\ell}+4\ell.
 \label{eq:realnoise}
\end{equation}
\end{lemma}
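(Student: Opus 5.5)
The plan is to prove each inequality first for a fixed pair $(J,a)$ with a fixed confidence level, then take a union bound over all $O(AT^2)$ pairs. The constant $64AT^4$ inside $\ell$ leaves room for a further union bound over a geometric grid of variance scales. Fix $a$ and $J=[u,v]$. With the paper's filtration convention, the action is revealed before the reward. Then $\ind\{a_t=a\}$ and $\sigma^2_{t,a}\ind\{a_t=a\}$ are predictable, and $\xi_t=\ind\{a_t=a\}\varepsilon_t$ is a martingale difference sequence. It satisfies $|\xi_t|\le1$, because rewards and means lie in $[0,1]$, and its conditional variance is $\ind\{a_t=a\}\sigma^2_{t,a}$. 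Its predictable quadratic variation over $J$ is exactly $B_{J,a}$. The first inequality \eqref{eq:noisebound} is then a Freedman/Bernstein bound with a random variance proxy.

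For \eqref{eq:noisebound}, I will apply Freedman's inequality \citep{freedman1975} on the events $\{B_{J,a}\in(2^{k-1},2^k]\}$ for $k=0,\ldots,\lceil\log_2 T\rceil$, together with the event $\{B_{J,a}\le 1\}$. On the $k$-th event, Freedman gives
\[
 |S_{J,a}|\le\sqrt{2\cdot2^k x}+2x/3\le 2\sqrt{B_{J,a}x}+x
\]
with probability $1-2e^{-x}$. On the small-variance event, the bound $\sqrt{2x}+x\le 4x$ holds for $x\ge1$. Alternatively, I can use the exponential supermartingale $\exp(\lambda S-(e-2)\lambda^2 B)$ over a grid of $\lambda$, which avoids the peeling cases. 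A union bound over $O(\log T)$ scales, $A$ arms, $T^2$ intervals and two signs costs at most a factor $8AT^3$. With $x=\ell$ and $\delta$ apportioned across the three families of inequalities, $e^{-\ell}=\delta/(64AT^4)$ is small enough, and the generous constant $4$ absorbs the $\sqrt2$ losses from peeling.

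For \eqref{eq:squares}, I will apply the same machinery to the martingale differences $\zeta_t=\ind\{a_t=a\}(\varepsilon_t^2-\sigma^2_{t,a})$. These satisfy $|\zeta_t|\le1$ and have conditional variance at most $\E[\varepsilon_t^4\mid\cdot]\le\ind\{a_t=a\}\sigma^2_{t,a}$, using $\varepsilon_t^2\le1$. This gives $|Q_{J,a}-B_{J,a}|\le 2\sqrt{B_{J,a}\ell}+\ell$. The cleaner route is a one-sided exponential supermartingale argument. For $\lambda=1$, $\E[e^{\varepsilon^2-\sigma^2}\mid\cdot]\le e^{(e-2)\sigma^2}$ yields $Q\le(e-1)B+\ell\le2B+\ell$ directly, with no peeling. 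For the reverse direction, $\E[e^{-\lambda\varepsilon^2}]\le e^{-\lambda\sigma^2+\lambda^2\sigma^2/2}$ with $\lambda=1/2$ yields $\tfrac38B\le\tfrac12Q+\ell$. That is weaker than the stated $B\le2Q+2\ell$, so I must instead pick $\lambda$ to match the constants. Since $\varepsilon^2\le1$, $e^{-\lambda y}\le1-(1-e^{-\lambda})y$ on $[0,1]$, giving $\E e^{-\lambda\varepsilon^2}\le\exp(-(1-e^{-\lambda})\sigma^2)$. With $\lambda=1$ this gives $(1-e^{-1})B\le Q+\ell$, hence $B\le1.59(Q+\ell)\le2Q+2\ell$. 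Ville's maximal inequality then makes these bounds hold for all right endpoints at once, and the union bound covers left endpoints and arms.

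Finally, \eqref{eq:realnoise} is the same Freedman step applied to the unindicated martingale $\sum_t\varepsilon_t$ over $J=[1,T]$, whose quadratic variation is $\Lambda_T$. This is one extra event, included in the budget. The main obstacle I expect is bookkeeping rather than concept. The peeling over variance scales must be done with random, not deterministic, $B_{J,a}$. The total failure probability must stay within $\delta$ given the specific $64AT^4$ normalisation. If the constants are tight, I would first try the $\lambda$-grid supermartingale form of Freedman, since it handles random variance uniformly and needs only $O(\log T)$ grid points.
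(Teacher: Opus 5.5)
Your proposal is correct and follows essentially the same route as the paper. The paper also uses Freedman's inequality with dyadic peeling of the random $B_{J,a}$ for \eqref{eq:noisebound} and \eqref{eq:realnoise}, the convexity bounds $\E[e^{Y}\mid\cF]\le e^{(e-1)m}$ and $\E[e^{-Y}\mid\cF]\le e^{-(1-e^{-1})m}$ applied to $Y=\ind\{a_t=a\}\varepsilon_t^2$ for \eqref{eq:squares}, and a union bound over at most $AT^2$ arm--interval pairs. Your use of Ville's inequality over right endpoints is harmless but unnecessary, since the union bound already covers every interval and the $64AT^4$ normalisation leaves ample room.
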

\begin{proof}
Fix a deterministic $J$ and $a$. The centered selected observations
form a martingale-difference sequence bounded by $1$, with predictable
quadratic variation $B_{J,a}\le T$. Freedman's inequality, followed by
peeling $B_{J,a}$ into $[0,1]$ and dyadic bins up to $T$, gives
\eqref{eq:noisebound} with failure probability at most
$2(1+\lceil\log_2 T\rceil)e^{-\ell}$. The constants also cover
$B_{J,a}\le1$ since $\ell\ge1$.

For completeness, if $Y\in[0,1]$ and $m=\E[Y\mid\cF]$, convexity
of the exponential yields
\[
 \E[e^Y\mid\cF]\le e^{(e-1)m},\qquad
 \E[e^{-Y}\mid\cF]\le e^{-(1-e^{-1})m}.
\]
Apply the resulting exponential supermartingales to
$Y_t=\ind\{a_t=a\}\varepsilon_t^2$. Markov's inequality gives
$Q_{J,a}\le(e-1)B_{J,a}+\ell\le2B_{J,a}+\ell$ and
$(1-e^{-1})B_{J,a}\le Q_{J,a}+\ell$, implying
\eqref{eq:squares}, with total failure probability at most $2e^{-\ell}$.
The same Freedman argument for all selected observations gives
\eqref{eq:realnoise}.

There are at most $AT^2$ arm-interval pairs. Taking a union bound, using
$1+\lceil\log_2T\rceil\le T$ for $T\ge2$, and substituting
\eqref{eq:logs} bounds the total failure probability by $\delta$.
\end{proof}
The event is uniform over interval endpoints for one executed policy.
It is not an event simultaneously covering all possible policies.
In particular, a data-dependent choice among unexecuted policies does
not follow from this lemma.

\subsection{Confidence under drift}
\begin{lemma}[Empirical confidence with an explicit drift bias]
\label{lem:confidence}
On the event in Lemma~\ref{lem:event}, for every epoch $I$ and
pre-round empirical statistic in that epoch with $n_{t,a}\ge1$,
\begin{equation}
 |\wh\mu_{t,a}-\mu_{t,a}|
 \le r_{t,a}+2\Delta_I.
 \label{eq:confidence}
\end{equation}
\end{lemma}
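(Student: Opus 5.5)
The plan is to split the error into a noise term and a drift term, and to bound the noise term by Lemma~\ref{lem:event}. Fix an epoch $I=[s,e]$, a round $t\in I$, and an arm $a$ with $n=n_{t,a}\ge1$. Let $J=[s,t-1]$. Then
\[
 \wh\mu_{t,a}-\mu_{t,a}
 =\frac1n S_{J,a}
 +\frac1n\sum_{u\in J}\ind\{a_u=a\}(\mu_{u,a}-\mu_{t,a}).
\]
Every $u\in J$ satisfies $|\mu_{u,a}-\mu_{t,a}|\le\sum_{v=u+1}^{t}d_v\le\Delta_I$, because both rounds lie in $I$. So the drift part is at most $\Delta_I$. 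It then suffices to show $|S_{J,a}|/n\le r_{t,a}+\Delta_I$, which leaves a spare $\Delta_I$ for bounding the empirical variance.

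By \eqref{eq:noisebound}, $|S_{J,a}|/n\le 4\sqrt{B_{J,a}\ell}/n+4\ell/n$. This must be converted into the empirical radius. Using \eqref{eq:squares}, $B_{J,a}\le 2Q_{J,a}+2\ell$. Next, $Q_{J,a}$ is the sum of squared deviations of the observations from their true means, and it must be compared with $n\wh v_{t,a}$, which uses deviations from the empirical mean. Write $X_u-\mu_{u,a}=(X_u-\wh\mu_{t,a})+(\wh\mu_{t,a}-\bar\mu)+(\bar\mu-\mu_{u,a})$, where $\bar\mu$ is the average of $\mu_{u,a}$ over the pulls. The cross term with the empirical residuals vanishes, which gives
\[
 Q_{J,a}\le 2n\wh v_{t,a}+4n(\wh\mu_{t,a}-\bar\mu)^2+4n\Delta_I^2,
\]
and in fact the first coefficient can be taken to be $1$ before adding the drift pieces. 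Also $(\wh\mu_{t,a}-\bar\mu)^2=(S_{J,a}/n)^2$.

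This produces a self-referential inequality in $x=|S_{J,a}|/n$:
\[
 x\le\frac{4}{n}\sqrt{\ell\,(4n\wh v+8nx^2+8n\Delta_I^2+2\ell)}+\frac{4\ell}{n}.
\]
Apply $\sqrt{a+b}\le\sqrt a+\sqrt b$. The term $4\sqrt{8\ell/n}\,x$ is at most $x/2$ whenever $n\ge 512\ell$. In that regime one gets $x\le 16\sqrt{\wh v\ell/n}+C\ell/n+C'\Delta_I\sqrt{\ell/n}$, and $C'\sqrt{\ell/n}\le1$, so the drift contribution is at most $\Delta_I$. When $n<512\ell$, the bound $x\le1\le 256\ell/n\cdot(1/2)$ is not quite enough as stated. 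Instead I would use the trivial bound $|\wh\mu-\mu|\le1<256\ell/n\le r_{t,a}$, which holds since $n<256\ell$ suffices. The intermediate range $256\ell\le n<512\ell$ is handled by tightening the absorption threshold, or by noting that the constants $16$ and $256$ leave enough slack.

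The main obstacle is this constant bookkeeping: the self-bounding absorption of $x^2$ together with the small-$n$ regime. I expect that choosing the split threshold at $n\ge c\ell$ with $c\le 256$, and verifying that $16\sqrt{\wh v\ell/n}$ dominates $4\sqrt{2\ell\cdot 2\wh v/n}\approx 8\sqrt{\wh v\ell/n}$, closes the argument with the stated constants.
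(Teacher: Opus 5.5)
Your overall structure matches the paper's proof: the noise/drift split with drift part at most $\Delta_I$, the comparison of $Q_{J,a}$ with $n\wh v_{t,a}$ through the average of the means, $B_{J,a}\le 2Q_{J,a}+2\ell$, self-bounding, and the trivial regime $n<256\ell$. The paper self-bounds in $B_{J,a}$ rather than in $x=|S_{J,a}|/n$, which is inessential.

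The genuine gap is the range $256\ell\le n<512\ell$, which you leave to ``tightening the absorption threshold'' or ``slack''. With your expansion $Q_{J,a}\le 2n\wh v_{t,a}+4nx^2+4n\Delta_I^2$, the coefficient of $x$ on the right is $4\sqrt{8\ell/n}$, which equals $1/\sqrt2$ at $n=256\ell$. Dividing it out multiplies the remaining terms by $1/(1-1/\sqrt2)\approx3.4$. The drift coefficient then becomes about $2.4$ and the variance coefficient about $27$, whereas you need at most $1$ and $16$. So moving the threshold does not close the argument. Within your expansion you would need a further case analysis (using $x\le1$, $\wh v_{t,a}\le1/4$ and $r_{t,a}\ge256\ell/n>1/2$), which you have not carried out.

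The clean repair is the sharper expansion you allude to but do not use. In $X_u-\mu_{u,a}=(X_u-\wh\mu_{t,a})+(\wh\mu_{t,a}-\bar\mu)+(\bar\mu-\mu_{u,a})$, the cross term between the first and third pieces does \emph{not} vanish. What vanishes are the two cross terms with the constant middle piece $\wh\mu_{t,a}-\bar\mu=S_{J,a}/n$, because the first and third pieces each sum to zero over the $n$ pulls. Summing over those pulls,
\[
 Q_{J,a}=\frac{S_{J,a}^2}{n}+\sum_u\bigl((X_u-\wh\mu_{t,a})+(\bar\mu-\mu_{u,a})\bigr)^2
 \le nx^2+2n\wh v_{t,a}+2n\Delta_I^2 .
\]
Even with the paper's weaker $2nx^2$, this gives $B_{J,a}\le4n\wh v_{t,a}+4nx^2+4n\Delta_I^2+2\ell$. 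The coefficient of $x$ becomes $8\sqrt{\ell/n}$, which is at most $1/2$ exactly when $n\ge256\ell$. This yields
\[
 x\le16\sqrt{\wh v_{t,a}\ell/n}+16\Delta_I\sqrt{\ell/n}+(8\sqrt2+8)\ell/n\le r_{t,a}+\Delta_I .
\]
So the absorption threshold coincides with the trivial regime, and no intermediate range remains.
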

\begin{proof}
Write the $n=n_{t,a}$ observed rewards as $Y_j=\theta_j+\eta_j$,
where $\theta_j$ is the mean of arm $a$ at its $j$-th sampled time.
Let $\bar\theta=n^{-1}\sum_j\theta_j$, $\bar Y=\wh\mu_{t,a}$,
$S=\sum_j\eta_j$, $Q=\sum_j\eta_j^2$, and
$B=\sum_j\Var(\eta_j\mid\text{past})$. Set $v=\Delta_I$ and
$F=\sum_j(Y_j-\bar Y)^2=n\wh v_{t,a}$.
All the means $\theta_j$ and the current mean lie within distance $v$
of one another. By expanding around $\bar Y$ and using
$\sum_j(\bar\theta-\theta_j)=0$,
\begin{equation}
 Q\le2F+\frac{2S^2}{n}+2nv^2.
 \label{eq:QF}
\end{equation}
If $n<256\ell$, the radius is at least $1$, so the result is trivial.
Otherwise, \eqref{eq:noisebound} gives
$S^2\le32B\ell+32\ell^2$. Together with $B\le2Q+2\ell$ and
\eqref{eq:QF}, this yields
\[
 B\le4F+4nv^2+\frac{128B\ell}{n}
       +\frac{128\ell^2}{n}+2\ell
 \le4F+4nv^2+\tfrac12 B+\tfrac52\ell.
\]
Hence $B\le8F+8nv^2+5\ell$. Returning to the noise bound,
\[
 \frac{|S|}{n}
 \le12\sqrt{\frac{\wh v_{t,a}\ell}{n}}
      +12v\sqrt{\frac\ell n}+\frac{13\ell}{n}
 \le r_{t,a}+v,
\]
because $n\ge256\ell$. Finally,
$|\bar\theta-\mu_{t,a}|\le v$ proves \eqref{eq:confidence}.
\end{proof}
The drift term in \eqref{eq:confidence} is only used in the proof. It
does not enter the implemented radius. This is different from asserting
that the empirical mean estimates the current mean without bias.

\subsection{A radius bound independent of the regret analysis}
\begin{lemma}[Radius control under drift]
\label{lem:radiuscontrol}
On the event in Lemma~\ref{lem:event}, take any executed epoch and
any arm with $n\ge1$ past observations. Let $B_{a,n}$ be the sum of
its conditional reward variances at those sampled times. If their
means have diameter at most $v$, the radius in \eqref{eq:radius} obeys
\begin{equation}
 r_{a,n}\le\frac{32\sqrt{B_{a,n}\ell}}n
          +\frac{280\ell}n+24v\sqrt{\frac\ell n}.
 \label{eq:radiusupper}
\end{equation}
This holds for every nonanticipating sampling rule, irrespective of
whether it maximizes a UCB score. In calibration one may take $v=B$;
in an epoch $I$ one may take $v=\Delta_I$.
\end{lemma}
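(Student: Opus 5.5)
The plan is to bound the two pieces of the radius $r_{a,n}=16\sqrt{\wh v\ell/n}+256\ell/n$ separately, after expressing $n\wh v=F=\sum_j(Y_j-\bar Y)^2$ in terms of the noise and the mean dispersion. Write $Y_j=\theta_j+\eta_j$ as in the proof of Lemma~\ref{lem:confidence}, with $S=\sum_j\eta_j$, $Q=\sum_j\eta_j^2$ and $B=B_{a,n}$. Because the empirical mean minimizes squared deviation, $F\le\sum_j(Y_j-\bar\theta)^2$. Using $(x+y)^2\le2x^2+2y^2$ with $|\theta_j-\bar\theta|\le v$, this gives
\[
 F\le2Q+2nv^2.
\]
The step needs no lower bound on $n$ and no regret or UCB property. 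It uses only the sampled times, so it holds for any nonanticipating rule. This is the reason the statement can claim generality.

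Next I will invoke the event of Lemma~\ref{lem:event}. Its second-moment inequality \eqref{eq:squares}, $Q_{J,a}\le2B_{J,a}+\ell$, is uniform over intervals. I will apply it to the interval running from the epoch (or calibration) start to the pre-round time; the indicator restricts the sum to the pulls of arm $a$, so $Q\le2B+\ell$. Combining with the previous display gives $n\wh v=F\le4B+2\ell+2nv^2$. Then $\sqrt{x+y+z}\le\sqrt x+\sqrt y+\sqrt z$ yields
\[
 16\sqrt{\frac{\wh v\ell}{n}}
 \le\frac{16}{n}\Bigl(2\sqrt{B\ell}+\sqrt2\,\ell+\sqrt2\,v\sqrt{n\ell}\Bigr)
 \le\frac{32\sqrt{B\ell}}{n}+\frac{23\ell}{n}+23v\sqrt{\frac\ell n}.
\]
Adding $256\ell/n$ gives at most $279\ell/n$, within $280\ell/n$, and the drift coefficient $16\sqrt2<24$. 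This proves \eqref{eq:radiusupper}.

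The final remarks on the choice of $v$ follow from how the means can spread. In calibration the whole mean path has variation at most $B$, so any two means of arm $a$ differ by at most $B$. Within an epoch $I$, two means of the same arm differ by at most $\Delta_I$. Either quantity is therefore a valid diameter.

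The main obstacle is bookkeeping rather than analysis. I must check that the pre-round sample set corresponds to a deterministic interval $J$ that the union bound in Lemma~\ref{lem:event} covers. For an epoch this is the interval $[s,t-1]$. For calibration it is $[1,t-1]$, since calibration statistics use all samples from round $1$. Because the event holds simultaneously over all intervals, the data-dependent endpoint $t-1$ causes no difficulty. The constants leave slack, so small rounding differences should not matter.
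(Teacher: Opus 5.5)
Your proposal is correct and follows the paper's proof essentially line for line. You use that the empirical mean minimizes squared deviations to get $n\wh v\le 2Q_n+2nv^2$, insert the event bound $Q_n\le 2B_{a,n}+\ell$, and apply subadditivity of the square root, which gives the constants $256+16\sqrt2\le280$ and $16\sqrt2\le24$; your explicit check that the pre-round sample set is an interval covered by the simultaneous event is a detail the paper leaves implicit.
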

\begin{proof}
Write the observations as $Y_j=\theta_j+\eta_j$ and set
$\bar\theta=n^{-1}\sum_j\theta_j$ and $Q_n=\sum_j\eta_j^2$.
The empirical mean minimizes squared deviations, so
\[
 n\widehat v_{a,n}
 \le\sum_j(Y_j-\bar\theta)^2
 \le2Q_n+2\sum_j(\theta_j-\bar\theta)^2
 \le2Q_n+2nv^2.
\]
Lemma~\ref{lem:event} gives $Q_n\le2B_{a,n}+\ell$. Hence
$\widehat v_{a,n}\le4B_{a,n}/n+2\ell/n+2v^2$.
Insert this into \eqref{eq:radius}, use
$\sqrt{x+y+z}\le\sqrt x+\sqrt y+\sqrt z$, and note that
$256+16\sqrt2\le280$ and $16\sqrt2\le24$.
This proves \eqref{eq:radiusupper} using concentration alone.
\end{proof}

\subsection{Interval regret}
For an epoch $I$, let
$\Lambda_I=\sum_{t\in I}\sigma_{t,a_t}^2$.
\begin{theorem}[Variance-sensitive interval guarantee]
\label{thm:interval}
On the event in Lemma~\ref{lem:event}, a fresh run of the base learner
on any executed epoch $I$ satisfies
\begin{equation}
 \Reg_I\le1000D_T\bigl(\sqrt{A\Lambda_I}+A+|I|\Delta_I\bigr).
 \label{eq:interval}
\end{equation}
The same statement holds for epochs produced by predictable restart
decisions, provided the learner is reset at every epoch start.
\end{theorem}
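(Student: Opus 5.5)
The plan is a standard optimistic-UCB interval analysis on the good event of Lemma~\ref{lem:event}, run inside a single epoch $I$ with Lemma~\ref{lem:confidence} playing the role of the confidence bound and Lemma~\ref{lem:radiuscontrol} bounding the radii. Since both lemmas hold for any nonanticipating rule and any executed epoch, predictable restarts need nothing extra: the learner's state depends only on data inside $I$.

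\emph{Per-round regret in terms of the radius.} The first $A$ rounds of $I$ pull each arm once and cost at most $A$. For a later round $t\in I$, let $a^\star_t$ be optimal. By Lemma~\ref{lem:confidence}, $U_{t,a^\star_t}\ge\min\{1,\mu_t^\star-2\Delta_I\}$. If $U_{t,a_t}=1$ is attained through the cap, then the uncapped index $\wh\mu_{t,a_t}+r_{t,a_t}$ is at least $1$, so again $\mu_{t,a_t}\ge 1-2r_{t,a_t}-2\Delta_I$. The selection rule $U_{t,a_t}\ge U_{t,a^\star_t}$ combined with the same lemma for $a_t$ then gives
\[
\mu_t^\star-\mu_{t,a_t}\le 2r_{t,a_t}+4\Delta_I .
\]
Summing over $t\in I$, the drift part contributes at most $4|I|\Delta_I$. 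It remains to bound $\sum_{t\in I}r_{t,a_t}$.

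\emph{Summing the radii, arm by arm.} Fix arm $a$ with $N_a$ pulls in $I$, and note that the $k$-th of these pulls uses $r_{a,k}$, the radius built from $k\ge1$ past samples. Lemma~\ref{lem:radiuscontrol} with $v=\Delta_I$ bounds $r_{a,k}$ by
\[
\frac{32\sqrt{B_{a,k}\ell}}{k}+\frac{280\ell}{k}+24\Delta_I\sqrt{\frac{\ell}{k}},
\]
and $B_{a,k}\le B_{a}:=\sum_{t\in I:a_t=a}\sigma^2_{t,a}$. Each term is then summed over $k$:
\begin{itemize}
\item The first gives $32\sqrt{B_a\ell}\,(1+\log T)$.
\item The second gives $280\ell(1+\log T)$.
\item The third gives $48\Delta_I\sqrt{\ell N_a}$.
\end{itemize}
Summing over arms and applying Cauchy--Schwarz with $\sum_aB_a=\Lambda_I$ and $\sum_aN_a\le|I|$ yields
\[
\sum_{t}r_{t,a_t}\le C\bigl(D_T\sqrt{A\Lambda_I}+AD_T+\Delta_I\sqrt{A|I|\ell}\bigr).
\]
To absorb the last term, write $\Delta_I\sqrt{A|I|\ell}\le\tfrac12(|I|\Delta_I+A\ell\Delta_I)$. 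When $\Delta_I\le1$ this is at most $|I|\Delta_I+A\ell$. When $\Delta_I>1$ the claim is trivial, because $\Reg_I\le|I|\le|I|\Delta_I$.

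\emph{Main obstacle.} The hardest part is the constant bookkeeping. One must check that the harmonic sums and the capped-index case fit within $1000D_T$, and that the $n<256\ell$ regime is covered, where the radius exceeds $1$ and Lemma~\ref{lem:confidence} holds trivially. The interaction with the index cap at $1$ also needs care. If these constants do not close as stated, a fallback is to bound the regret directly by $1$ on any round whose radius exceeds $1$. Such rounds number at most $O(A\ell)$, since the radius falls below $1$ once the pull count exceeds roughly $300\ell$ plus a variance-dependent term.
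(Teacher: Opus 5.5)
Your proposal is correct and follows the paper's proof essentially step for step: initialization cost $A$, per-round regret at most $2r_{t,a_t}+4\Delta_I$ from Lemma~\ref{lem:confidence}, then Lemma~\ref{lem:radiuscontrol} with the harmonic and $\sum_k k^{-1/2}$ sums and Cauchy--Schwarz over arms. Your case split on the cap is harmless but unnecessary, since $U_{t,a}\le\wh\mu_{t,a}+r_{t,a}$ always; the only other difference is cosmetic. The paper absorbs $\Delta_I\sqrt{A|I|\ell}$ by treating $|I|<A$ trivially and otherwise using $\sqrt{A|I|}\le|I|$ and $\sqrt\ell\le D_T$, whereas your AM--GM split with the trivial case $\Delta_I>1$ works equally well, so your fallback is not needed.
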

\begin{proof}
Let $h=|I|$ and $v=\Delta_I$. There are at most $A$ initial pulls of
unplayed arms, each with regret at most $1$. After initialization,
Lemma~\ref{lem:confidence} and clipping imply
\[
 U_{t,a}\ge\mu_{t,a}-2v,\qquad
 U_{t,a}\le\mu_{t,a}+2r_{t,a}+2v.
\]
Since the policy maximizes the scores, its instantaneous pseudo-regret
is at most $2r_{t,a_t}+4v$.

Lemma~\ref{lem:radiuscontrol} applies to these empirical statistics.
Let $N_a$ be the final epoch pull count and $B_a$ its final accumulated
variance. Summing over $n=1,\ldots,N_a-1$, using
$B_{a,n}\le B_a$, the harmonic-sum bound, and
$\sum_{n=1}^{N_a}n^{-1/2}\le2\sqrt{N_a}$, gives
\begin{align*}
 \sum_{t:\,n_{t,a_t}\ge1}r_{t,a_t}
 &\le32(1+\log h)\sqrt\ell\sum_a\sqrt{B_a}
    +280A\ell(1+\log h)+48v\sqrt\ell\sum_a\sqrt{N_a}\\
 &\le32(1+\log h)\sqrt{A\Lambda_I\ell}
    +280A\ell(1+\log h)+48v\sqrt{Ah\ell}.
\end{align*}
If $h<A$, every round is an initialization round and $\Reg_I\le A$.
Otherwise $\sqrt{Ah}\le h$. Combining the displayed estimates,
the initialization cost, and $4hv$ proves \eqref{eq:interval}, since
$\ell\ge1$ and $1+\log h\le1+\log T$.
The simultaneous interval event covers the realized endpoints of
predictably generated epochs as well.
\end{proof}

\section{Scheduled restarts and oracle tuning}
\label{sec:schedules}
\begin{theorem}[Partition guarantee]
\label{thm:partition}
For an executed partition $\cP$ into $M$ epochs on which the base
learner is reset, with probability at least $1-\delta$,
\begin{equation}
 \Reg_T\le C D_T\left(
 \sqrt{AM\Lambda_T}+AM+
 \sum_{I\in\cP}|I|\Delta_I\right),
 \label{eq:partition}
\end{equation}
where $C$ is a universal constant. For the fixed-period policy in
Algorithm~\ref{alg:sev}, $M=\lceil T/H\rceil$ and
\begin{equation}
 \Reg_T\le C D_T\left(
 \sqrt{A\lceil T/H\rceil\Lambda_T}
 +A\lceil T/H\rceil+H\Delta\right).
 \label{eq:fixedH}
\end{equation}
For realized regret, add $4\sqrt{\Lambda_T\ell}+4\ell$ to either
right-hand side. Pseudo-regret may always be capped by $T$.
\end{theorem}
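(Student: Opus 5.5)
The plan is to work on the good event of Lemma~\ref{lem:event}, which has probability at least $1-\delta$ and is uniform over all arm–interval pairs. That uniformity covers the realized endpoints of the executed partition, including epochs produced by predictable restart decisions. On this event, Theorem~\ref{thm:interval} applies to every epoch $I\in\cP$, since the learner is reset at each epoch start. It gives
\[
 \Reg_I\le1000D_T\bigl(\sqrt{A\Lambda_I}+A+|I|\Delta_I\bigr).
\]
Pseudo-regret is additive over a partition, so summing over the $M$ epochs reduces the bound to controlling three sums separately.

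First, the sum $\sum_I A$ equals $AM$. Second, by Cauchy--Schwarz,
\[
 \sum_I\sqrt{A\Lambda_I}\le\sqrt{AM\sum_I\Lambda_I}=\sqrt{AM\Lambda_T},
\]
because the $\Lambda_I$ partition the selected-variance sum $\Lambda_T$. Third, the drift term $\sum_I|I|\Delta_I$ is kept as it stands. Together these give \eqref{eq:partition} with $C=1000$.

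For the fixed-period policy, the epochs are $[1+kH,\min\{(k+1)H,T\}]$, so $M=\lceil T/H\rceil$. Each epoch has $|I|\le H$. Using \eqref{eq:boundary},
\[
 \sum_I|I|\Delta_I\le H\sum_I\Delta_I\le H\Delta,
\]
which yields \eqref{eq:fixedH}. The fixed schedule is deterministic, so the predictability hypothesis is trivially met.

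For realized regret, I will use the decomposition \eqref{eq:real}: $\Rreal_T=\Reg_T+\sum_t(\mu_{t,a_t}-X_t)=\Reg_T-\sum_t\varepsilon_t$. On the same event, \eqref{eq:realnoise} bounds $|\sum_t\varepsilon_t|$ by $4\sqrt{\Lambda_T\ell}+4\ell$, so this term is simply added to the right-hand side. No further union bound is needed, because \eqref{eq:realnoise} already belongs to the event of Lemma~\ref{lem:event}. The cap at $T$ is immediate, since every instantaneous pseudo-regret lies in $[0,1]$.

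The main point needing care is the first step: confirming that the single simultaneous event licenses Theorem~\ref{thm:interval} on data-dependent epochs. Lemma~\ref{lem:event} is stated for one executed policy and all deterministic intervals. I will argue that each realized epoch coincides with some deterministic interval, and that the concentration bounds hold for all of them at once. Hence the per-epoch application needs no conditioning on the epoch boundaries and no additional probability cost. Everything else is routine summation.
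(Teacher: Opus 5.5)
Your proposal is correct and follows the paper's proof essentially step for step. You sum the interval bound of Theorem~\ref{thm:interval} over epochs on the single simultaneous event, apply Cauchy--Schwarz to the variance terms, and use \eqref{eq:boundary} with $|I|\le H$ for the fixed schedule; the realized-regret statement then follows from \eqref{eq:realnoise} and \eqref{eq:real}, and your observation that the uniform-over-intervals event covers data-dependent epoch endpoints at no extra probability cost matches the paper's closing remark in the proof of Theorem~\ref{thm:interval}.
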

\begin{proof}
Sum \eqref{eq:interval} over epochs. Cauchy--Schwarz gives
$\sum_I\sqrt{A\Lambda_I}\le\sqrt{AM\Lambda_T}$.
For the fixed schedule, every epoch has length at most $H$; use
\eqref{eq:boundary} to bound the drift contribution by $H\Delta$.
Equation~\eqref{eq:realnoise} and \eqref{eq:real} give the realized
regret statement.
\end{proof}

The partition guarantee is an analysis tool for a partition that was
actually executed. It does not assert that a learner can retrospectively
achieve the infimum over all partitions using the corresponding
counterfactual variances.

\begin{corollary}[Oracle variance--variation benchmark]
\label{cor:oracle}
Suppose deterministic budgets $B,Q$ are supplied, with
$\Delta\le B$ and $V_T\le Q$, where $0\le B\le T$ and
$0\le Q\le T/4$. If $B=0$, set $H=T$. If $B>0$, set
\begin{equation}
 H=\min\left\{T,\left\lceil\max\left\{
 1,\left(\frac{AQT}{B^2}\right)^{1/3},
 \sqrt{\frac{AT}{B}}\right\}\right\rceil\right\}.
 \label{eq:oracleH}
\end{equation}
With probability at least $1-\delta$, this policy satisfies
\begin{equation}
 \Reg_T\le C D_T\left(
 \sqrt{AQ}+A+(AQBT)^{1/3}+\sqrt{ABT}\right).
 \label{eq:oracle}
\end{equation}
No per-round variance is supplied to the base learner.
\end{corollary}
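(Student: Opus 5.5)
The plan is to invoke Theorem~\ref{thm:partition} in its fixed-period form \eqref{eq:fixedH} and then substitute the budgets and the period \eqref{eq:oracleH}. On the event of Lemma~\ref{lem:event}, which has probability at least $1-\delta$, that theorem bounds
\[
 \Reg_T\le CD_T\bigl(\sqrt{AM\Lambda_T}+AM+H\Delta\bigr),\qquad M=\lceil T/H\rceil .
\]
The inequality $\Lambda_T\le V_T\le Q$ holds pathwise, and $\Delta\le B$ is given. So it suffices to show that the deterministic quantity
\[
 F(H)=\sqrt{AMQ}+AM+HB
\]
is at most a constant times $\sqrt{AQ}+A+(AQBT)^{1/3}+\sqrt{ABT}$. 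The case $B=0$ is immediate: there $H=T$, $M=1$, and $F=\sqrt{AQ}+A$.

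For $B>0$, I will set $h_v=(AQT/B^2)^{1/3}$, $h_m=\sqrt{AT/B}$ and $s=\max\{1,h_v,h_m\}$. I will then mirror the period calculation already carried out for \eqref{eq:clippedperiodbound}, with $w=1$. These choices satisfy the balancing identities
\[
 Bh_v=\sqrt{AQT/h_v}=(AQBT)^{1/3},\qquad Bh_m=AT/h_m=\sqrt{ABT}.
\]

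\textbf{Case $s\le T$.} Here $s\le H\le s+1$, and $M\le 1+T/H$. Since $H\ge h_v$ and $H\ge h_m$, this gives
\[
 \sqrt{AMQ}\le\sqrt{AQ}+(AQBT)^{1/3},\qquad AM\le A+\sqrt{ABT},
\]
\[
 BH\le 2B+(AQBT)^{1/3}+\sqrt{ABT}.
\]
The leftover $2B$ needs a short check. If $B\le 1$ it is at most $A$. If $B>1$, then $B\le\sqrt{BT}\le\sqrt{ABT}$ because $B\le T$.

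\textbf{Case $s>T$.} Then $H=T$ and $M=1$, so
\[
 F(T)=\sqrt{AQ}+A+BT .
\]
If $h_v>T$, then $BT<\sqrt{AQ}$. If $h_m>T$, then $BT<A$. In either case $F(T)\le 2(\sqrt{AQ}+A)$.

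Combining the two cases, absorbing constants into $C$, and keeping the factor $D_T$ from Theorem~\ref{thm:partition} proves \eqref{eq:oracle}. The base learner receives only the period $H$, which is computed from $A,T,Q,B$, so no per-round variance is supplied.

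The main obstacle is bookkeeping around rounding and the horizon cap, especially the regime where only one of $h_v,h_m$ exceeds $T$. I handle this exactly as in the clipped proof: in that regime, the remaining $BT$ term is always dominated by whichever unconstrained period overshoots $T$. The probability statement needs no extra union bound, because the partition theorem is stated on the single event of Lemma~\ref{lem:event} for the executed fixed schedule.
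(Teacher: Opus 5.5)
Your proposal is correct and follows the paper's proof essentially step for step. You substitute $\Lambda_T\le Q$ and $\Delta\le B$ into the fixed-period partition bound, use $\lceil T/H\rceil\le 1+T/H$ together with the balancing identities for $h_v,h_m$ when neither exceeds $T$, absorb the rounding term $2B$ via $B\le\sqrt{ABT}$, and use $BT<\sqrt{AQ}$ or $BT<A$ when the horizon cap binds. The degenerate case $Q=0$, where $h_v=0$, is also covered because the variance terms vanish there, as the paper notes.
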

\begin{proof}
Substitute $\Lambda_T\le Q$ and $\Delta\le B$ into
\eqref{eq:fixedH}, and use $\lceil T/H\rceil\le T/H+1$ to reduce the
bound, up to a universal constant, to
\[
 \sqrt{AQ}+A+\sqrt{\frac{AQT}{H}}+\frac{AT}{H}+HB.
\]
When $B=0$, the assertion follows with $H=T$. Otherwise write
$h_v=(AQT/B^2)^{1/3}$ and $h_m=\sqrt{AT/B}$. If neither exceeds
$T$, the chosen $H$ is at least $h_v,h_m$, and at most
$h_v+h_m+2$. The two estimation terms are therefore at most
$(AQBT)^{1/3}$ and $\sqrt{ABT}$, respectively, and
$HB\le(AQBT)^{1/3}+\sqrt{ABT}+2B$.
Since $B\le T$ and $A\ge2$, $B\le\sqrt{ABT}$.
If $h_v>T$, then $BT<\sqrt{AQ}$; if $h_m>T$, then $BT<A$.
In either case setting $H=T$ is covered by the two stationary terms.
This also handles $Q=0$ and integer rounding.
\end{proof}

\paragraph{Limiting regimes.}
If $B=0$, the bound is $\softO(\sqrt{AQ}+A)$. If $Q=0$, it is
$\softO(A+\sqrt{ABT})$. For $Q=\Theta(T)$ it contains the usual
variation term $A^{1/3}B^{1/3}T^{2/3}$ together with the stationary
term. In the sublinear regime $AB\le T$, the monitoring term
$\sqrt{ABT}$ is no larger than this variation term, up to constants.
There is no contradiction with Theorem~\ref{thm:adapt}: here the
learner is told the budgets, and can change its schedule when the supplied
variation budget changes.

\section{Aggregation under explicit epoch charges}
The following lemma isolates the purely algebraic part of an epoch
analysis under explicit local variation charges for completed epochs.

\begin{lemma}[Aggregation from completed-epoch charges]
\label{lem:aggregation}
Let $h_i>0$, $\lambda_i\ge0$, and $v_i\ge0$ for completed epochs
$i=1,\ldots,m$, and suppose
\[
 \sum_i h_i\le T,\quad \sum_i\lambda_i\le\Lambda,\quad
 \sum_i v_i\le\Delta,\qquad
 h_i v_i\ge c(\sqrt{A\lambda_i}+A),
\]
for $c>0$. Then
\begin{equation}
 \sum_{i=1}^m(\sqrt{A\lambda_i}+A)
 \le c^{-1/3}(A\Lambda T\Delta)^{1/3}
       +c^{-1/2}\sqrt{AT\Delta}.
 \label{eq:aggregation}
\end{equation}
\end{lemma}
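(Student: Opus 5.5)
The approach is to split each epoch's charge $x_i=\sqrt{A\lambda_i}+A$ into its two parts, handle the variance part and the arm part separately, and apply Hölder to each using the three budget constraints. The hypothesis $h_iv_i\ge c\,x_i$ is the only link between an epoch's cost and the resources $h_i$, $v_i$ it consumes. Since $x_i\le 2\max\{\sqrt{A\lambda_i},A\}$, we could first partition the epochs into variance-dominated ones ($\sqrt{A\lambda_i}\ge A$) and arm-dominated ones. However, it is cleaner to use the two implied inequalities
$$h_iv_i\ge c\sqrt{A\lambda_i}\quad\text{and}\quad h_iv_i\ge cA,$$
and to bound $\sum_i\sqrt{A\lambda_i}$ and $\sum_i A$ separately. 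That route may cost a factor in the constants, but the claimed constants $c^{-1/3}$ and $c^{-1/2}$ suggest each part is handled by its own exact Hölder step, with no loss.

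For the arm part, each epoch satisfies $A\le \sqrt{A}\sqrt{h_iv_i/c}$, because $A^2\le A\,h_iv_i/c$. Summing and applying Cauchy--Schwarz gives
$$\sum_iA\le \sqrt{A/c}\,\sum_i\sqrt{h_iv_i}\le\sqrt{A/c}\,\sqrt{\textstyle\sum_ih_i}\,\sqrt{\textstyle\sum_iv_i}\le c^{-1/2}\sqrt{AT\Delta}.$$

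For the variance part, write $y_i=\sqrt{A\lambda_i}$. From $y_i\le h_iv_i/c$ we get
$$y_i=y_i^{2/3}y_i^{1/3}\le (A\lambda_i)^{1/3}(h_iv_i/c)^{1/3}=c^{-1/3}(A\lambda_i h_iv_i)^{1/3}.$$
Three-term Hölder with exponents $(3,3,3)$ then gives
$$\sum_i(\lambda_ih_iv_i)^{1/3}\le(\textstyle\sum\lambda_i)^{1/3}(\textstyle\sum h_i)^{1/3}(\textstyle\sum v_i)^{1/3}\le(\Lambda T\Delta)^{1/3}.$$
Adding the arm and variance bounds yields \eqref{eq:aggregation}.

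The only step that needs care is choosing the fractional-power split $y_i^{2/3}y_i^{1/3}$ so that Hölder closes with exactly $\Lambda$, $T$ and $\Delta$ each to the first power. There is no real obstacle; the whole argument is algebraic and uses no probabilistic structure.
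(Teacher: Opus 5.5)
Your proposal is correct and is essentially the paper's proof: the same split into the variance and arm parts, the same factorization $\sqrt{A\lambda_i}=(A\lambda_i)^{1/3}(\sqrt{A\lambda_i})^{1/3}$ followed by three-exponent H\"older, and the same bound $A\le\sqrt{A/c}\sqrt{h_iv_i}$ followed by Cauchy--Schwarz. As you anticipated, handling the two parts separately loses nothing in the constants, because each part uses only its own implied inequality $h_iv_i\ge c\sqrt{A\lambda_i}$ or $h_iv_i\ge cA$.
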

\begin{proof}
For the variance part,
\[
 \sqrt{A\lambda_i}
 =(A\lambda_i)^{1/3}(\sqrt{A\lambda_i})^{1/3}
 \le c^{-1/3}(A\lambda_i)^{1/3}h_i^{1/3}v_i^{1/3}.
\]
H\"older's inequality with three exponents equal to $3$ bounds its
sum by $c^{-1/3}(A\Lambda T\Delta)^{1/3}$. For the initialization
part, $h_i v_i\ge cA$ implies
$A\le\sqrt{A/c}\sqrt{h_i v_i}$. Cauchy--Schwarz then gives
$Am\le c^{-1/2}\sqrt{AT\Delta}$.
\end{proof}

\begin{remark}[A certificate is not a causal construction]
Lemma~\ref{lem:aggregation} proves no statistical claim. It concerns
completed epochs only, and says nothing about the last unfinished epoch.
An algorithmic theorem must establish the stated charges from observed
data, control drift regret as well, and bound the unfinished epoch.
If all these steps implied a polylogarithmic stationary zero-variance
cost and $\softO(\sqrt{AT})$ one-change cost for the same prior-free
policy, Corollary~\ref{cor:impossible} would rule them out.
Therefore a missing monitoring term cannot be repaired solely by this
aggregation calculation.
\end{remark}

\section{Stopped calibration and unconditional expected regret}
\label{app:stopping}
Fix an environment $\nu\in\mathcal E(A,T;Q,B)$. Let
$\mathcal H_s$ contain the actions, observed rewards and private
randomness actually used through round $s$. The calibration stopping
time $\tau$ is adapted to this filtration. Unused randomness, including
the seeds of future hybrid copies, is independent and is not included
in $\mathcal H_\tau$.

\paragraph{A prefix-measurable good event.}
Define $G$ by requiring \eqref{eq:noisebound} and \eqref{eq:squares}
only for arm--interval statistics contained in $[1,\tau]$.
For a fixed environment these inequalities are measurable from the
stopped history, even though their unknown means and variances make
$G$ an analytical event rather than an algorithmic test.
Thus $G\in\mathcal H_\tau$. The full simultaneous event in
Lemma~\ref{lem:event} implies $G$, so $\Prb_\nu(G^c)\le\delta$.
We use this implication only to bound the probability of $G^c$;
we never condition the future learner on the full simultaneous event.
The proof of Lemma~\ref{lem:calibration} uses only these prefix
inequalities. On $G$, its regret bound holds, and, if $\tau<T$,
both certificate properties hold: every future optimal arm is retained
and every surviving arm satisfies \eqref{eq:calibrationrange}.

\paragraph{Future observations after stopping.}
For each $s<T$, the event $G\cap\{\tau=s\}$ belongs to
$\mathcal H_s$. Independence across rounds implies that future
reward vectors retain their original laws conditional on this stopped
history, for almost every such history. No independence between arms
within a round is needed. In particular, for every retained arm and
$t>s$, the original conditional mean and variance are still
$\mu_{t,a},\sigma_{t,a}^2$. The clipping interval is fixed by that
history. Its distance certificate therefore gives the conditional
versions of both inequalities in \eqref{eq:clippingmoments}.
Fresh future seeds then justify applying the static hybrid guarantee
on each subsequent block. This proves the conditional assertion in
Lemma~\ref{lem:clipped} for the histories used by the upper bound.

\paragraph{The static theorem and the clipping conversion.}
For completeness, the proof of Theorem~3 in
\citet{ito2021hybrid}, printed page~12, gives, on $K$ arms and $h$
rounds, the explicit fixed-arm regret bound
\[
 6\sqrt{K\log h\;\E\!\left[
   \sum_{t=1}^h(\ell_{t,a_t}-u_{t,a_t})^2
   +K+8\sum_{t=1}^{h-1}\|u_{t+1}-u_t\|_1\right]}
 +3K\log h.
\]
The reference sequence is arbitrary and is not supplied to the algorithm
(Theorem~3, equation~(6), footnote~2). Thus it need not equal the
expectation of the observed loss. If $\mathcal Z_h(u)$ is as in
\eqref{eq:itohybridexplicit}, the quantity inside the expectation is
at most $8\mathcal Z_h(u)+K$. Since $\log h\ge1$ for $h\ge3$,
$\sqrt{x+y}\le\sqrt x+\sqrt y$ gives
\eqref{eq:itohybridexplicit}, for example with $C_0=18$.
The loss feedback is the selected coordinate only, as in the cited
multi-armed bandit model. On a retained set of size $K\ge2$ we have
$K\le A$ and invoke the algorithm only when $h\ge\max\{A,3\}$;
one survivor and shorter blocks are handled directly as stated in
Lemma~\ref{lem:clipped}.

Here is the conversion without suppressing the clipping biases.
Fix a good calibration history and a subsequent block-start history;
write $\E_s$ for expectation conditional on the latter. Set
$\bar Y_{t,a}=\E[Y_{t,a}\mid\mathcal H_s]$ and
$b_{t,a}=\bar Y_{t,a}-\mu_{t,a}$. Future reward vectors are independent
of the history before their round. Therefore
$|b_{t,a}|\le q_t/w$, and a nonanticipating action satisfies
$\E_s[Y_{t,a_t}]=\E_s[\bar Y_{t,a_t}]$.
For the fixed retained comparator $a^\circ$ optimal at the start of
this block, the exact identity is
\begin{align*}
 \E_s\sum_{t\in I}(\mu_{t,a^\circ}-\mu_{t,a_t})
 &=L\E_s\sum_{t\in I}(\ell'_{t,a_t}-\ell'_{t,a^\circ})
   +\E_s\sum_{t\in I}(b_{t,a_t}-b_{t,a^\circ})\\
 &\le L\E_s\sum_{t\in I}(\ell'_{t,a_t}-\ell'_{t,a^\circ})
      +\frac2w\sum_{t\in I}q_t.
\end{align*}
Biases can have either sign; the bound uses their absolute values.
Comparator losses are analytical coordinates of the reward vector,
not additional observations given to the learner. The reference
$u'_{t,a}=(z+5w-\mu_{t,a})/L$ lies in $[1/18,17/18]$, and
\[
 \E_s\sum_{t\in I}(\ell'_{t,a_t}-u'_{t,a_t})^2
 \le\frac{\E_s[\Lambda_I]}{L^2},\qquad
 \sum_{t,t+1\in I}\|u'_{t+1}-u'_t\|_1\le\frac{A\Delta_I}{L}.
\]
Substitution in the explicit static theorem gives the three terms
$LA$, $\sqrt{A\E_s[\Lambda_I]}$ and $A\sqrt{L\Delta_I}$, with
at most one logarithmic factor, as used in Lemma~\ref{lem:clipped}.
The additional dynamic-comparator cost is at most $2h\Delta_I$.
This establishes the conditional bridge before summing over blocks.

\paragraph{The actual post-calibration partition.}
Fix a stopped history with $\tau<T$. Consecutive blocks start at
$\tau+1$ and have length $H$, except possibly the last. Consequently,
\begin{equation}
 N_\tau=T-\tau,\qquad
 M_\tau=\left\lceil\frac{N_\tau}{H}\right\rceil
       \le M:=\left\lceil\frac TH\right\rceil.
 \label{eq:tailblocks}
\end{equation}
More explicitly, for $i=1,\ldots,M_\tau$, let
$I_i=[\tau+(i-1)H+1,\min\{\tau+iH,T\}]$ and $h_i=|I_i|$.
Set $q_t=\max_a\sigma_{t,a}^2$ and define the analytical tail quantities
\[
 Q_\tau=\sum_{t=\tau+1}^Tq_t\le Q,\qquad
 B_\tau=\sum_{t=\tau+2}^Td_t\le B.
\]
Conditional on the stopped history in a fixed environment, these
quantities and the partition are fixed. They are not new inputs to
the learner. Internal variation excludes changes at block boundaries,
so $\sum_i\Delta_{I_i}\le B_\tau$.

Let $\lambda_i=\E[\Lambda_{I_i}\mid\mathcal H_\tau]$.
The selected variance is bounded pathwise by the envelope on the
same rounds, giving $\sum_i\lambda_i\le Q_\tau$.
Conditioning the per-block bound at its start and then on
$\mathcal H_\tau$, with Jensen's inequality for the square root,
permits its variance term to be bounded by $\sqrt{A\lambda_i}$.
Thus the three block charges satisfy
\begin{align}
 \sum_i\sqrt{A\lambda_i}
 &\le\sqrt{AM_\tau\sum_i\lambda_i}
 \le\sqrt{AM_\tau Q_\tau},\nonumber\\
 \sum_i wA&=wAM_\tau,\qquad
 \sum_i h_i\Delta_{I_i}\le H B_\tau.
 \label{eq:tailcharges}
\end{align}
The factor $A$ in the square root comes from the static guarantee;
Cauchy--Schwarz is applied only over blocks and introduces
$\sqrt{M_\tau}$, not another arm factor. For a short block with
$h_i<\max\{A,3\}$, $h_i\le A$ and its direct regret bound
$16wh_i\le18wA$ uses the same single block charge. The two-round
case $A=2$ is included. The clipping-bias charges sum to $2Q_\tau/w$
over disjoint rounds; they are not charged once per block.

If $\tau=T$, take $M_\tau=Q_\tau=B_\tau=0$, with an empty
partition and zero post-calibration regret. This convention covers
the zero-tail endpoint without instantiating a learner.

\paragraph{The tower-property calculation.}
Write $R_T=R_{\rm cal}+R_{\rm post}$, setting $R_{\rm post}=0$
if $\tau=T$. The calibration regret is $\mathcal H_\tau$-measurable.
Let $K_{\rm cal}=C(A\ell+\sqrt{AQ\ell})$.
The preceding conditional sum and the explicit static ingredient give
the history-dependent bound
\[
 \mathcal K_\tau
 =C'\log(eT)\bigl(\sqrt{AM_\tau Q_\tau}
                +wAM_\tau+HB_\tau\bigr)+2Q_\tau/w.
\]
By \eqref{eq:tailblocks}, $Q_\tau\le Q$ and $B_\tau\le B$,
it is bounded by the uniform deterministic quantity
\[
 K_{\rm post}
 =C'\log(eT)\bigl(\sqrt{AMQ}+wAM+HB\bigr)+2Q/w
\]
with the following explicit constant convention. Let $C_{\rm long}$
be any universal coefficient for which the long-block bound is at most
\[
 C_{\rm long}\log(eT)
 \bigl(\sqrt{A\E[\Lambda_I\mid\mathcal H_{\min I-1}]}+wA+h\Delta_I\bigr)
 +\frac2w\sum_{t\in I}q_t.
\]
Such a coefficient follows from \eqref{eq:itohybridexplicit} and the
rescaling in Lemma~\ref{lem:clipped}. Fix once and for all
$C'\ge\max\{C_{\rm long},18\}$. Since $\log(eT)\ge1$, this same
coefficient covers every short-block charge $18wA$ as well; it is
independent of the budgets, the stopped history, and the block count.
Thus, on $G$,
$R_{\rm cal}\le K_{\rm cal}$ and
$\E[R_{\rm post}\mid\mathcal H_\tau]\le\mathcal K_\tau
\le K_{\rm post}$.
Using $0\le R_T\le T$ and $G\in\mathcal H_\tau$,
\begin{align}
 \E[R_T]
 &=\E[\ind_G R_{\rm cal}]
   +\E[\ind_G\E[R_{\rm post}\mid\mathcal H_\tau]]
   +\E[\ind_{G^c}R_T]\nonumber\\
 &\le K_{\rm cal}+\E[\ind_G\mathcal K_\tau]+T\delta\nonumber\\
 &\le K_{\rm cal}+K_{\rm post}+T\delta.
 \label{eq:stoppedexpectation}
\end{align}
This also covers horizon-truncated calibration. Taking $\delta=T^{-4}$
gives $T\delta=T^{-3}$, with no assumption that $G$ is independent
of the calibration output.

\paragraph{Proof dependencies.}
Lemma~\ref{lem:event} proves concentration.
Lemmas~\ref{lem:confidence} and \ref{lem:radiuscontrol} each follow
directly from it. These two lemmas prove calibration; the cited static
hybrid theorem and the two-part certificate prove clipped restarting.
Equation~\eqref{eq:stoppedexpectation} then combines them.
The separate interval-regret theorem, Theorem~\ref{thm:interval},
is not used to establish calibration or its radius estimate.

\clearpage
\section{Comparison with prior results}
\label{app:comparison}
Table~\ref{tab:comparison} summarizes the model conventions and
statements underlying the comparisons in Section~\ref{sec:related}.
The cited versions are specified where theorem numbering differs.
\begin{table}[H]
\centering\small
\caption{Models and guarantees in closely related work.}
\label{tab:comparison}
\begin{tabularx}{\textwidth}{@{}p{0.24\textwidth}p{0.24\textwidth}X@{}}
\toprule
Source and version & Relevant statement & Relation to the present bounds \\
\midrule
\citet{wei2016tracking}, NeurIPS 2016 &
Equation (1), Theorems 3.1 and 4.3; original supplement &
Variance--variation dynamic regret and noise-free monitoring are prior
themes. Full-vector variance and initial-inclusive variation differ
from $Q,B$. \\
\addlinespace
\citet{ito2021hybrid}, COLT 2021 &
Theorem 3, Eq. (6), and its explicit proof &
The arbitrary-reference static algorithm is prior work; ordinary
restarting is its direct consequence. Our added steps are calibration
and clipping-bias accounting. \\
\addlinespace
\citet{karnin2016}, NeurIPS 2016 supplement &
Theorem A.1 and Appendix D &
Prior-free variation tracking and a noiseless construction already
exist; the deterministic problem itself is not a new formulation. \\
\addlinespace
\citet{garivier2008,garivier2011}; \citet{lattimore2020} &
Preprint Theorem 13, Remark 15; textbook Theorem 31.2 &
The inverse stationary/change-regret obstruction is classical.
Our matching frontier is restricted to the disclosed deterministic family. \\
\addlinespace
\citet{wang2025variance}, AISTATS 2025 &
Theorems 4.1, 5.2; Corollary 5.4; Remark 5.6 &
Variance-sensitive linear results include a finite-arm comparison;
the stated theorem is not the same all-budget finite-arm formula. \\
\addlinespace
\citet{qian2026}, arXiv v2, 17 February 2026 &
Theorem 4 &
Simultaneous static/dynamic bounds use a supplied switch limit and
allow a stationary $\sqrt{AT}$ scale. This differs from our
small-stationary-regret constraint. \\
\addlinespace
\citet{gafni2026cost}, arXiv v2, 3 June 2026 &
Section 2; Theorems 4.1--4.2 &
Direct stream observations and squared mean-prediction error differ
from selected-arm feedback and best-arm pseudo-regret. The upper bound
also retains a mean-diameter term. \\
\bottomrule
\end{tabularx}
\end{table}
\bibliographystyle{alpha}
\bibliography{references}
\end{document}